\documentclass[10pt]{article} 

\usepackage{amsthm}
\usepackage{amssymb}
\usepackage{amsmath,amsfonts,bm}
\usepackage{mathtools}
\setlength {\marginparwidth }{2cm}
\usepackage[
]{todonotes}
\usepackage{hyperref}
\usepackage{url}
\usepackage{wrapfig}
\usepackage{changepage}
\usepackage{float}
\usepackage{changepage}

\usepackage{eso-pic} 
\usepackage{fancyhdr}
\usepackage{natbib}

\usepackage[T1]{fontenc}
\usepackage{lmodern}

\usepackage{quiver}
\usepackage{tikz}
\usetikzlibrary{calc}
\usetikzlibrary{cd}
\usetikzlibrary{fit,
				positioning,
				shapes,
				arrows.meta,
				backgrounds}
\usetikzlibrary{trees,
				positioning,
				arrows.meta,
				chains,
				shapes.geometric,
				decorations.pathreplacing,
				decorations.pathmorphing,
				shapes,
				matrix,
				shapes.symbols}

\usepackage{tikzit}
\usepackage{pgfplots}
\usepgfplotslibrary{groupplots,dateplot}
\DeclareUnicodeCharacter{2212}{−}
\pgfplotsset{compat=newest}

\usepackage{tikzscale}  

\usepackage{aliascnt}
\usepackage{cleveref}

\setlength{\paperheight}{11in}
\setlength{\paperwidth}{8.5in}

\oddsidemargin 0in    
\evensidemargin 0in
\marginparwidth 0.07 true in
\topmargin -0.625in
\addtolength{\headsep}{0.25in}
\textheight 9.0 true in       
\textwidth 6.5 true in        

\footnotesep 6.65pt %
\skip\footins 9pt plus 4pt minus 2pt
\def\footnoterule{\kern-3pt \hrule width 12pc \kern 2.6pt }
\setcounter{footnote}{0}

\parindent 0pt
\topsep 4pt plus 1pt minus 2pt
\partopsep 1pt plus 0.5pt minus 0.5pt
\itemsep 2pt plus 1pt minus 0.5pt
\parsep 2pt plus 1pt minus 0.5pt
\parskip .5pc

\leftmargin3pc
\leftmargini\leftmargin \leftmarginii 2em
\leftmarginiii 1.5em \leftmarginiv 1.0em \leftmarginv .5em










\def\eqref#1{equation~\ref{#1}}









\def\1{\bm{1}}



\def\mcI{{\mathcal{I}}}

\newcommand{\bb}[1]{\mathbb{#1}}
\newcommand{\mr}[1]{\mathrm{#1}}
\newcommand{\mc}[1]{\mathcal{#1}}
\newcommand{\mb}[1]{\mathbf{#1}}












\DeclareMathAlphabet{\mathsfit}{\encodingdefault}{\sfdefault}{m}{sl}
\SetMathAlphabet{\mathsfit}{bold}{\encodingdefault}{\sfdefault}{bx}{n}














\renewcommand{\hom}{\mathrm{Hom}}
\newcommand{\ind}{\mathrm{Ind}}

\DeclareMathOperator{\id}{id}

\newcommand{\isom}{\cong}

\newcommand{\lak}{\overset{\leftarrow}{\kappa}}


\theoremstyle{plain}
\newtheorem{theorem}{Theorem}[section]

\newaliascnt{corollary}{theorem}
\newtheorem{corollary}[corollary]{Corollary}
\aliascntresetthe{corollary}
\crefname{corollary}{corollary}{corollaries}
\Crefname{corollary}{Corollary}{Corollaries}

\newaliascnt{proposition}{theorem}
\newtheorem{proposition}[proposition]{Proposition}
\aliascntresetthe{proposition}
\crefname{proposition}{proposition}{propositions}
\Crefname{proposition}{Proposition}{Propositions}

\newaliascnt{lemma}{theorem}
\newtheorem{lemma}[lemma]{Lemma}
\aliascntresetthe{lemma}
\crefname{lemma}{lemma}{lemmas}
\Crefname{lemma}{Lemma}{Lemmas}

\theoremstyle{definition}
\newaliascnt{definition}{theorem}
\newtheorem{definition}[definition]{Definition}
\aliascntresetthe{definition}
\crefname{definition}{definition}{definitions}
\Crefname{definition}{Definition}{Definitions}

\newaliascnt{example}{theorem}
\newtheorem{example}[example]{Example}
\aliascntresetthe{example}
\crefname{example}{example}{examples}
\Crefname{example}{Example}{Examples}

\theoremstyle{remark}
\newaliascnt{remark}{theorem}
\newtheorem{remark}[remark]{Remark}
\aliascntresetthe{remark}
\crefname{remark}{remark}{remarks}
\Crefname{remark}{Remark}{Remarks}


\makeatletter
\def\@maketitle{%
  \newpage
  \null
  \vskip 2em%
  \begin{center}%
  \let \footnote \thanks
    {\huge \@title \par}%
    \vskip 4.2em%
    {\large
      \lineskip 1.3em%
      \begin{tabular}[t]{c}%
        \@author
      \end{tabular}\par}%
    \vskip 2.0em%
    {\large \@date}%
  \end{center}%
  \par
  \vskip 2.4em}
\makeatother

\renewenvironment{abstract}
  {%
  \begin{adjustwidth}{2cm}{2cm}%
  \begin{center}%
  \textbf{Abstract}%
  \end{center}%
  }
  {%
  \end{adjustwidth}%
  }


\tikzstyle{red_node}=[fill=red, draw=black, shape=circle]
\tikzstyle{green_node}=[fill=green, draw=black, shape=circle]
\tikzstyle{blue_node}=[fill=blue, draw=black, shape=circle]
\tikzstyle{black_point}=[fill=black, draw=black, shape=circle, inner sep=2pt]
\tikzstyle{grey_dot}=[fill={rgb,255: red,128; green,128; blue,128}, draw={rgb,255: red,128; green,128; blue,128}, shape=circle, inner sep=2pt]
\tikzstyle{circle_small}=[fill=none, draw=black, shape=circle, inner sep=2pt]
\tikzstyle{circle_medium}=[fill=none, draw=black, shape=circle, inner sep=4pt]
\tikzstyle{circle_large}=[fill=none, draw=black, shape=circle, inner sep=8pt]
\tikzstyle{sky_node}=[fill={rgb,255: red,88; green,176; blue,227}, draw=black, shape=circle]
\tikzstyle{coffee_node}=[fill={rgb,255: red,72; green,55; blue,41}, draw=black, shape=circle]
\tikzstyle{brick_node}=[fill={rgb,255: red,241; green,90; blue,34}, draw=black, shape=circle]
\tikzstyle{dusk_node}=[fill={rgb,255: red,39; green,0; blue,137}, draw=black, shape=circle]
\tikzstyle{energy_node}=[fill={rgb,255: red,255; green,203; blue,5}, draw=black, shape=circle]
\tikzstyle{leaf_node}=[fill={rgb,255: red,127; green,181; blue,57}, draw=black, shape=circle]
\tikzstyle{small_dot}=[fill=black, draw=black, shape=circle, inner sep=1.5pt]
\tikzstyle{grey_dot_small}=[fill={rgb,255: red,128; green,128; blue,128}, draw=none, shape=circle, inner sep=1pt]
\tikzstyle{dark_grey_small_dot}=[fill={rgb,255: red,64; green,64; blue,64}, draw=none, shape=circle, inner sep=1.3pt]
\tikzstyle{white_small_dot}=[fill=white, draw=none, shape=circle, inner sep=1.3pt]
\tikzstyle{rectangle}=[fill=none, draw=black, shape=rectangle]

\tikzstyle{grey}=[-, draw={rgb,255: red,64; green,64; blue,64}]
\tikzstyle{arrow_both_gap}=[->, shorten >=1mm, shorten <=1mm]
\tikzstyle{mapsto}=[{|->}]
\tikzstyle{equivalence}=[<->]
\tikzstyle{thin_grey}=[-, draw={rgb,255: red,64; green,64; blue,64}, line width=0.1pt]
\tikzstyle{line_dashed_grey}=[-, draw={rgb,255: red,64; green,64; blue,64}, dashed]
\tikzstyle{line_dashed}=[-, dashed]
\tikzstyle{red}=[-, draw=red]
\tikzstyle{blue}=[-, draw=blue]
\tikzstyle{green}=[-, draw=green]
\tikzstyle{brick_filled_transparent_no_draw}=[-, fill={rgb,255: red,241; green,90; blue,34}, draw=none, fill opacity=0.6]
\tikzstyle{vector}=[->]
\tikzstyle{line_gap_to_start}=[-, draw=black, fill=none, shorten <=1mm]
\tikzstyle{arrow_gap_to_end}=[->, shorten >=1mm, tikzit category=Arrows]
\tikzstyle{line_thick}=[-, thick]
\tikzstyle{vector_thick}=[thick, ->]
\tikzstyle{filled_dark_copper}=[-, fill={rgb,255: red,0; green,108; blue,92}, draw=none]
\tikzstyle{fill_black}=[-, fill=black, draw=white]
\tikzstyle{fill_white}=[-, fill=white, draw=black]
\tikzstyle{fill_grey_black_draw_white}=[-, fill={rgb,255: red,64; green,64; blue,64}, draw=white]
\tikzstyle{fill_grey_white_draw_black}=[-, fill={rgb,255: red,191; green,191; blue,191}, draw=black]
\tikzstyle{fill_grey_black_draw_black}=[-, fill={rgb,255: red,64; green,64; blue,64}]
\tikzstyle{fill_grey_white_draw_white}=[-, fill={rgb,255: red,191; green,191; blue,191}, draw=white]
\tikzstyle{fill_sky}=[-, fill={rgb,255: red,88; green,176; blue,227}, draw=black]
\tikzstyle{brick}=[-, draw={rgb,255: red,241; green,90; blue,34}, fill=none]
\tikzstyle{fill_brick_draw_black}=[-, fill={rgb,255: red,241; green,90; blue,34}]
\tikzstyle{dusk}=[-, draw={rgb,255: red,39; green,0; blue,137}]
\tikzstyle{fill_dusk_draw_black}=[-, fill={rgb,255: red,39; green,0; blue,137}]
\tikzstyle{fill_copper_draw_black}=[-, fill={rgb,255: red,0; green,169; blue,157}]
\tikzstyle{fill_leaf_draw_black}=[-, fill={rgb,255: red,127; green,181; blue,57}]
\tikzstyle{fill_energy_draw_black}=[-, fill={rgb,255: red,255; green,203; blue,5}]
\tikzstyle{fill_coffee_draw_black}=[-, fill={rgb,255: red,72; green,55; blue,41}]
\tikzstyle{fill_granite_draw_black}=[-, fill={rgb,255: red,93; green,111; blue,122}]
\tikzstyle{fill_west_coast_draw_black}=[-, fill={rgb,255: red,0; green,48; blue,80}]
\tikzstyle{grey_light}=[-, draw={rgb,255: red,191; green,191; blue,191}]
\tikzstyle{fill_energy_transparent_no_draw}=[-, draw=none, fill={rgb,255: red,255; green,203; blue,5}, fill opacity=0.6]
\tikzstyle{fill_white_transparent}=[-, fill=white, draw=none, fill opacity=0.6]
\tikzstyle{fill_dusk_transparent_no_draw}=[-, fill={rgb,255: red,39; green,0; blue,137}, draw=none, fill opacity=0.6]
\tikzstyle{grey_light_thin}=[-, draw={rgb,255: red,191; green,191; blue,191}, line width=0.1pt]
\tikzstyle{energy}=[-, draw={rgb,255: red,255; green,203; blue,5}]
\tikzstyle{leaf}=[-, draw={rgb,255: red,127; green,181; blue,57}]
\tikzstyle{coffee}=[-, draw={rgb,255: red,72; green,55; blue,41}]
\tikzstyle{sky}=[-, draw={rgb,255: red,88; green,176; blue,227}]
\tikzstyle{vector_brick}=[->, draw={rgb,255: red,241; green,90; blue,34}]
\tikzstyle{vector_energy}=[draw={rgb,255: red,255; green,203; blue,5}, ->]
\tikzstyle{fill_dusk_almost_opace}=[-, draw=none, fill={rgb,255: red,39; green,0; blue,137}, fill opacity=0.8]

\title{Equivariant non-linear maps for neural networks\\on homogeneous spaces}

\usepackage{authblk}

\setlength{\affilsep}{1.75em}
\author[1]{\centering Elias Nyholm\thanks{Equal contribution, ordered by first name.}\,}
\author[1]{\centering Oscar Carlsson$^*$}
\author[2]{\centering Maurice Weiler}
\author[1]{\centering Daniel Persson}
\affil[1]{\centering
    Department of Mathematical Sciences, \protect\\%
    Chalmers University of Technology \& University of Gothenburg,\protect \\%
    SE-412 96, Gothenburg, Sweden\vspace{0.8em}
}
\affil[2]{\centering
    Computer Science and Artificial Intelligence Laboratory,\protect\\%
    Massachusetts Institute of Technology,\protect\\%
    Cambridge, Massachusetts, USA
}

\begin{document}
\maketitle

\vspace*{1ex}
\begin{abstract}
This paper presents a novel framework for non-linear equivariant neural network layers on homogeneous spaces.
The seminal work of Cohen et al. on equivariant $G$-CNNs on homogeneous spaces characterized the representation theory of such layers in the \emph{linear} setting,
finding that they are given by convolutions with kernels satisfying so-called steerability constraints.
Motivated by the empirical success of non-linear layers, such as self-attention or input dependent kernels,
we set out to generalize these insights to the \emph{non-linear} setting.
We derive generalized steerability constraints that any such layer needs to satisfy
and prove the universality of our construction.
The insights gained into
the symmetry-constrained functional dependence of equivariant operators on feature maps and group elements informs the design of future equivariant neural network layers.
We demonstrate how several common equivariant network architectures --
$G$-CNNs, implicit steerable kernel networks, conventional and relative position embedded attention based transformers, and LieTransformers --
may be derived from our framework.

\end{abstract}

\newpage
\tableofcontents
\newpage

\section{Introduction}
\label{sec:introduction}

\textit{Geometric deep learning} is a subfield of machine learning which leverages prior knowledge on geometric properties as an inductive bias in designing either the model, its training, or both \citep{bronsteinGeometricDeepLearning2017,bronsteinGeometricDeepLearning2021,gerkenGeometricDeepLearning2023,weiler2023EquivariantAndCoordinateIndependentCNNs}. 
In particular, one can use \emph{symmetry} as fundamental design principle for network architectures.
The goal is often to construct neural networks that satisfy certain commutativity constraints with respect to group actions on their input and output spaces. 
Networks satisfying this transformation property are call \textit{equivariant neural networks}.

Equivariant models have been successfully applied in a variety of contexts, including molecular dynamics simulations \citep{zhang2023AI4Science,AlphaFold2021}, particle physics \citep{kondor-lorentz,Zhdanov2024CliffordSteerable} and medical imaging \citep{medical-imaging}. In addition, incorporating equivariance has been found to speed up the network training phase and improve general performance \citep{gerken2022}.

To utilise the full potential of equivariant networks, and to allow design of future equivariant layers, we need to deepen the theoretical understanding of equivariant networks.
One set of equivariant networks for which the theory is well understood is group convolutional neural networks ($G$-CNNs) and $G$-steerable CNNs \citep{kondor-trivedi,weiler20183dSteerableCNNs,cohen-theory-equivariant-hom,cesa2021ENsteerable,weiler2023EquivariantAndCoordinateIndependentCNNs}.

However, CNN-based theories are, without further additions, fundamentally limited in that they are inherently descriptions of \emph{linear} network layers.
In practice, these linear layers are often paired with simple equivariant non-linear activation layers which are crucial for the overall expressivity of the learning model.
But most importantly, contemporary non-linear neural network layers are not limited to simple activation layers.
Rather, modern network architectures include expressive layers such as the linear message passing together with a suitable non-linearity, either applied pointwise or directly on the messages, or the inherently non-linear attention mechanism \citep{vaswaniAttentionAllYou2017,mpnn}.
Particular instances of equivariant versions of these layers exist \citep{hutchinsonLieTransformerEquivariantSelfattention2021,implicit-kernels,equiformer}, but a general theory that connects these disparate architectures is absent from the literature.

Taking inspiration from the linear representation theory developed for equivariant CNNs we develop a \emph{theory of non-linear equivariant neural network layers} using general integral operator transforms based on the mathematical language of group theory and fibre bundles. 
Importantly, we formulate steerability constraints for general equivariant operators, and in doing so present a framework for constructing new equivariant architectures.

In particular, we show how our theory specialises to various important instances of equivariant machine learning layers:
$ G $-CNNs on homogeneous spaces \citep{cohen-theory-equivariant-hom},
the LieTransformer \citep{hutchinsonLieTransformerEquivariantSelfattention2021},
the Implicit Steerable CNNs \citep{implicit-kernels},
and the standard self-attention, both with and without \citep{vaswaniAttentionAllYou2017} relative position bias. 
See \cref{fig:relation-between-ml-architectures} for a diagrammatic representation of this specialisation.

This paper is fairly mathematical and in an effort to bridge the gap between mathematicians and machine learning scientists, we shall in what follows provide some motivational background, both from the machine learning perspective and the mathematical perspective.

\begin{figure}[h]
    \centering
    \tikzset{block/.style={rectangle, draw, text width=10em, text centered, rounded corners, minimum height=3em}}
        \begin{tikzpicture}
             [node distance=1.35cm,
             start chain=going below,]
            \node (n1)   at ( 0, 4) [block]  {Our Framework};
            \node (n2)   at ( 3.20, -1.6) [block] {Self-Attention with Relative Bias \citep{shaw-etal-2018-self}};
            \node (n3)   at ( 6.35, 0) [block] {LieTransformer \citep{hutchinsonLieTransformerEquivariantSelfattention2021}};
            \node (n4)   at (-3.20, -1.6) [block] {Implicit Steerable CNNs \citep{implicit-kernels}};
            \node (gcnn) at (-6.35, 0) [block] {$G$-CNNs on Homogeneous Spaces \citep{cohen-theory-equivariant-hom}};
            \node (n6) at (0, 0) [block] {Self-Attention \citep{vaswaniAttentionAllYou2017}};
            \draw [->] (n1) -- (n6.north)   node[midway,fill=white,xshift= 0pt, align=center] {\Cref{sec:standard-attention-from-general-map}\\(\Cref{thm:instance-self-attention})};
            \draw [->] (n1) to [out=315, in=100] node[midway,fill=white,xshift= 20pt, align=center] {\Cref{sec:relative-self-attention}\\(\Cref{thm:attention-positional})} (n2.north);
            \draw [->] (n1) to [out=225, in=80] node[midway,fill=white,xshift=  -20pt, align=center] {\Cref{sec:implicit-steerable-from-general-map}\\(\Cref{thm:instance-implicit-kernel})} (n4.north);
            \draw [->] (n1) to [out=350, in=120]    node[midway,fill=white,xshift= 0pt,yshift=10pt, align=center] {\Cref{sec:lietransformer-from-general-map}\\(\Cref{thm:instance-lietransformer})} (n3.north);
	        \draw [->] (n1) to [out=190, in=60]  node[midway,fill=white,xshift=0pt,yshift=10pt, align=center] {\Cref{sec:gcnn-from-general-map}\\(\Cref{prop:gcnn-from-general-equiv-map})} (gcnn.north);
        \end{tikzpicture}
        \caption{Architectures and models as special cases of our proposed framework.}
    \label{fig:relation-between-ml-architectures}
\end{figure}
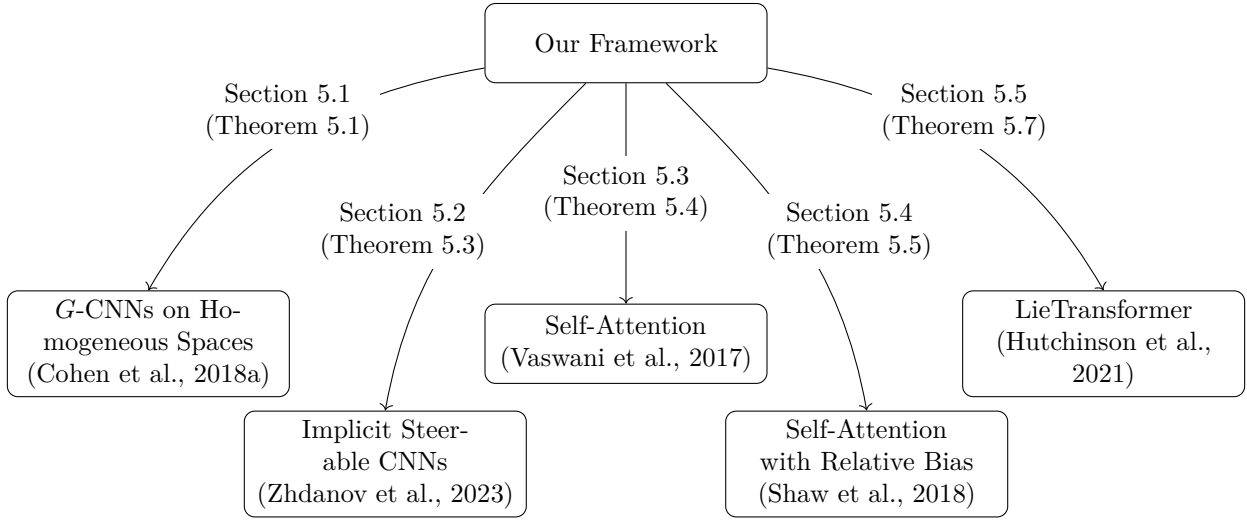

\subsection{Motivation from a machine learner's point of view}\label{sec:machine-learning-motivation}
To motivate this study from a machine learner's point of view we will digress shortly to show specific similarities between CNNs and continuous attention.

Transformer neural networks \citep{vaswaniAttentionAllYou2017} are one of the current go-to architectures when it comes to inherently non-linear models.
There have been several works on making transformers equivariant to certain group actions \citep{fuchs2020se3transformers,romeroGroupEquivariantStandAlone2021,hutchinsonLieTransformerEquivariantSelfattention2021,heGaugeEquivariantTransformer2021,liaoEquiformerV2ImprovedEquivariant2023,kundu2025steerabletransformers} but all of these take the bottom-up approach of modifying the standard attention mechanism to be equivariant and do not take this further by connecting their theory to other architectures.

The (discrete) self attention layer acts on $n$ input tokens $\{f_i\}_{i=1}^n$, where $f_i\in \bb{R}^d$, which, for notational convenience, is often collected into a single feature matrix $F\in \bb{R}^{n\times d}$.
With this setup the standard attention layer can be written as
\begin{equation}\label{eq:pre-standard-attention}
    F\ \mapsto\ O\,=\,\mathrm{Sim}(Q,K)V\,=\,\mathrm{Sim}(F \mathbf{W}_Q,F \mathbf{W}_K)F \mathbf{W}_V\ \ \in\ \bb{R}^{n\times d''},
\end{equation}
where the queries $\mb{Q}=F\mb{W}_Q$, keys $\mb{K}=F\mb{W}_K$ and values $\mb{V}=F\mb{W}_V$ are computed by applying linear maps $\mb{W}_Q,\, \mb{W}_K,\in \bb{R}^{d\times d'}$ and $\mb{W}_V\in\bb{R}^{d\times d''}$ to the input $F$.
Note that all keys, queries, and values are created from the same input $F$; this is the reason for the name self attention.

The $ \mathrm{Sim} $ function is a matrix valued map computing some similarity between the tokens in the queries and the keys, and hence is of shape $ n \times n $.
Often the similarity function, in addition to computing the alignment of the queries and keys, also includes some normalisation where the standard choice is the $ \mathrm{softmax} $ function, which ensures that the row-wise sum of the similarity matrix is one.
A very common choice for computing the alignment is the inner product similarity function. 
With a inner product similarity and $ \mathrm{softmax} $ normalisation one obtains the standard attention as
\begin{equation}\label{eq:standard-attention}
    F\ \mapsto\ O=\mr{Softmax}\left(\frac{QK^\top}{\sqrt{d}}\right)V=\mr{Softmax}\left(\frac{F\mb{W}_Q\mb{W}_K^\top F^\top}{\sqrt{d'}}\right)F\mb{W}_V\ \ \in\ \bb{R}^{n\times d''}.
\end{equation}

If we want to rephrase \cref{eq:pre-standard-attention} in terms of the individual tokens, we get a sum
\begin{equation}\label{eq:self-attention-as-sum}
    O_j=\sum_{i=1}^n \mathrm{Sim}(Q,K)_{ji}V_i,
\end{equation}
where $\mathrm{Sim}$ is the matrix valued similarity function, acting on the full set of queries $Q$ and keys $K$, and $V_i$ is the $i$:th token in the values.

We now want to rewrite this as an integral over some space $ X $. 
First we note that if $ X=\left\{ x_1, \cdots, x_n \right\} $ is a discrete set we can use these to index our tokens and define the feature at each point $ x \in X $ through a map $ f:X \to \mathbb{R}^{d} $.
Then we set $Q=\mb{W}_Q^\top f,\ K=\mb{W}_K^\top f$, and $ V=\mb{W}_V^\top f$, which allows \cref{eq:self-attention-as-sum} to be written as
\begin{equation}
    o(x)=\sum_{x^{\prime}\in X} \mathrm{Sim}(\mb{W}_Q^\top f,\mb{W}_K^\top f)(x,x^{\prime})\mb{W}_V^\top f(x^{\prime})=\int_X \mathrm{Sim}(\mb{W}_Q^{\top}f,\mb{W}_K^{\top}f)(x,x')\mb{W}_V^{\top}f(x')\mathrm{d}x',
\end{equation}
where the last equality holds if $ X $ is a discrete set and the measure is the counting measure.
Now, if we allow $ X $ to be a continuous space we obtain a continuous form of the self attention.

However, we can go one step further by combining the similarity function $\mathrm{Sim}$ and the values matrix into a single instance by defining a map $\alpha(f,x,x')=\mathrm{Sim}(\mb{W}_Q^{\top}f,\mb{W}_K^{\top}f)(x,x')\mb{W}_V^\top$.
This results in the continuous attention on the form
\begin{equation}\label{eq:introduction-alpha-map}
    f_o(x)=[\Phi^{ \text{Attention} } f](x)=\int_X\alpha(f,x,x')f(x')\mathrm{d}x'.
\end{equation}

Here, note that this is reminiscent of the form of a general linear map formulated as an integral transform with a matrix-valued two argument kernel $\kappa$:
\begin{equation}\label{eq:general-lin-transform}
    [\Phi^{\text{Linear}} f](x)=\int_X \kappa(x,x')f(x')\mathrm{d}x'.
\end{equation}
The essential difference here is that the attention weights $\alpha(f,x,x')$ depend not only on points $x$ and $x'$ but are also functions of the features $f$ -- this makes the computation non-linear and \emph{input dependent}.

\subsubsection*{Equivariance}
To derive convolutions, one imposes an equivariance constraint, with respect to translation, on the linear map in \cref{eq:general-lin-transform} \cite[Theorem 3.2.1]{weiler2023EquivariantAndCoordinateIndependentCNNs}.
It follows that the kernel $\kappa$ may depend on relative distances only,
\begin{equation}
    \kappa(x,x') = \kappa(x-y,x'-y),
\end{equation}
which is often referred to ``spatial weight sharing''.
One may then define a one-argument kernel through
$\kappa(x,x^{\prime})=\kappa(0,x^{\prime}-x)=:\hat \kappa(x^{\prime}-x)$
in terms of which the linear map in \cref{eq:general-lin-transform} becomes a, translation equivariant, convolutional layer:
\begin{equation}
    [\Phi^{\mathrm{Convolution}} f](x)=\int_X \hat\kappa(x'-x)f(x')\mathrm{d}x'.
\end{equation}
This raises some natural questions:
\begin{center}\it
    Can we generalize this to construct an equivariant version of continuous attention?
\end{center}
and
\begin{center}\it
    If both attention and convolutions are special cases of an overarching structure, \\
    can we generalise this further and include more architectures?
\end{center}
These questions are the main subject of this paper.

\subsection{Motivation from a mathematician's point of view}
A very popular building block of deep learning models are convolutional layers,
which are linear translation equivariant layers operating on spatial ``feature maps''.
In the recent years, convolution layers have been generalized to be equivariant under more general groups $G$.
The feature maps of such $G$-CNNs are therefore equipped with $G$-actions, which are mathematically formalized as so-called \textit{induced representations} \citep{mackeyInducedRepresentationsGroups1951,mackeyInducedRepresentationsLocally1953,ceccherini-silbersteinInducedRepresentationsMackey2009}.

Non-linear layers are of interest in the machine learning world since this increases the expressibility of the models. 
Often, as a general framework is missing, equivariant non-linear maps are constructed bottom-up from a given task.
The purpose of this paper is to extend the
theory of linear $G$-CNNs to the non-linear setting and to investigate which known architectures can be derived from the more general framework.

To be more specific, \cite{cohen-theory-equivariant-hom} presented a rich theory for group equivariant CNNs using induced representations for the case where the underlying manifold is a homogeneous space. 
This is natural as a homogeneous space $ X $ can be expressed as $ X \isom G/H $ where $H \leq G$ is a stabiliser subgroup of some point $ g_0 $ in the group $ G $.
Hence, there is a natural action of $G$ on data locally defined as functions $ f:X \to V_\sigma $ where $ V_\sigma $ is a left $ H $-space under the representation $ \sigma $.\footnote{Technically, such a ``function'' $f:X \to V_\sigma$ exist only locally $f:U\to V_\sigma$ where $U\subset X$. This stems from the fact that, unless the geometry is topologically trivial, no global choice of coordinates can be made.}
The space of such functions, or more generally, the space of all corresponding sections of an associated vector bundle $ \Gamma(G \times _\sigma V_\sigma) $ is isomorphic to the space of functions 
\begin{equation}
    \ind _{H}^{G}\sigma\ :=\ \big\{ f:G \to V_\sigma \,\big|\, f(gh)=\sigma(h^{-1})f(g),\ \forall h \in H \big\},
\end{equation}
which, together with the $G$-action $ [kf](g)=f(k^{-1}g) $, is called the induced representation. The ``induction'' process allows us to uniquely construct a $G$-representation on a specific function space from the $H$-representation $\sigma$.

Analogously to \cref{eq:general-lin-transform}, a linear map $ \Phi^\text{Linear}: \ind _{H^{\prime}}^{G}\rho \to \ind _{H}^{G}\sigma $ between induced representations is generally on the form
\begin{equation}\label{eq:general-lin-map-on-G}
    [\Phi ^\text{Linear} f](g)=\int_G \kappa(g,g^{\prime})f(g^{\prime})\mathrm{d}g^{\prime},
\end{equation}
where the kernel $ \kappa(g,g^{\prime})\in \hom(V_\rho,V_\sigma) $ satisfies a compatibility constraint
\begin{equation}
    \kappa(gh,g^{\prime}h^{\prime})=\sigma(h^{-1})\kappa(g,g^{\prime})\rho(h^{\prime})
\end{equation}
to ensure that the output is indeed in $ \ind _{H}^{G}\sigma $ for inputs $ f \in \ind _{H^{\prime}}^{G}\rho $.

In this setting $ G $-equivariant linear maps, also known as intertwiners, are just $ G $-equivariant CNN layers $\Phi^{G\text{-convolution}}: \ind _{H^{\prime}}^{G}\rho \to \ind _{H}^{G}\sigma $ between the induced representations describing the layer's input and output features \citep{cohen2018intertwinersinducedrepresentationswith}. 
Under mild conditions, it can be shown that all such intertwiners can be written in the form
\begin{equation}\label{eq:G-conv}
    [\Phi^{G\text{-convolution}} f](g)=\int_G \hat\kappa(g^{-1}g^{\prime})f(g^{\prime})\mathrm{d}g^{\prime},
\end{equation}
where $ \hat \kappa: G \to \hom(V_\rho,V_\sigma) $ is a generalized convolution kernel which assigns linear operators $\kappa(g^{-1}g^{\prime}) \in \hom(V_\rho,V_\sigma)$ to relative group elements $g^{-1}g'$.

Geometrically, the feature map $f$ is a section of a vector bundle $P\rightarrow G/H$ with fiber $V_\sigma$. Drawing on insights from equivariant cohomology, we can generalize this to provide a unified perspective on feature maps. This is conveniently visualised in the following diagram:
\[ 
\begin{tikzcd}[ampersand replacement=\&,cramped]
	P \&\& {P\times M} \&\& M \\
	\\
	G/H \&\& {P\times_H M} \&\& {H\backslash M}
	\arrow[, from=1-1, to=3-1]
	\arrow[, from=1-3, to=1-1]
	\arrow[, from=1-3, to=1-5]
	\arrow[, from=1-3, to=3-3]
	\arrow[, from=1-5, to=3-5]
	\arrow[, from=3-3, to=3-1]
	\arrow[, from=3-3, to=3-5]
\end{tikzcd}
\]
This is a commutative diagram, known as the Cartan mixing diagram, that captures the combination of a principal $H$-bundle, $P\rightarrow G/H$, and a $H$-space $M$ (e.g. $V_\sigma$), producing a fibre bundle with fibre $M$ (the associated bundle). We use various versions of this diagram throughout the paper as a convenient tool to capture the essential properties of equivariant maps. For more details see~\cref{sec:bundles}.

Inspired by the linear case where one can derive the $ G $-equivariant  map \cref{eq:G-conv} from \cref{eq:general-lin-map-on-G} we propose a more general, non-linear, map $ \Phi^{\mathrm{Non-linear}}:\, \ind _{H^{\prime}}^{G}\rho \,\to\, \ind _{H}^{G}\sigma $ 
between induced representations. 
We define this map as
\begin{equation}
\label{eq:NonLinear}
    [\Phi^{\mathrm{Non-linear}} f](g)\ =\ \int_{G}\omega(f,g,g^{\prime})\mathrm{d}g^{\prime},
\end{equation}
where
\begin{equation}
    \omega:\, \ind _{H^{\prime}}^{G}\rho \times G \times G \,\to\, V_\sigma
\end{equation}
maps into the target representation $ V_\sigma $.
Note that this is an analogue to the non-equivariant linear map \cref{eq:general-lin-map-on-G} and we need more constraints to make this $ G $-equivariant.

Additionally, since we have not further specified how $ \omega $ depends on the input data $ f $, this includes both non-linear and linear dependencies on $ f $.
Intuitively, one can view $ \omega $ as a map describing the spatial aggregation of features,
a version of message passing in which information is sent from $g^{\prime}$ to $g$. 

In this paper we will investigate the requirements for the map in \cref{eq:NonLinear} to be equivariant. 
More generally:

\begin{center}\it
A central goal of this paper is to develop a framework for equivariant non-linear integral maps \\ for neural networks on homogeneous spaces.
\end{center}

\subsection{Summary of results}
In this section we provide a brief summary of the main results of this paper. 
If not stated otherwise, assume $ G $ to be a (locally compact) Lie group and $ H,H^{\prime} \leq G $ any two subgroups.
Further, let $V_\sigma$ be a finite vector space and  $ \sigma $ be a representation of $ H $ on $V_\sigma$ and, similarly, $ V_\rho $ a representation of $ H^{\prime} $. 

The main contributions of this paper are as follows:
\begin{itemize}
    \item
        
        Linear
        maps between induced representations $ \Phi_\kappa :\ind _{H^{\prime}}^{G}\rho\to \ind _{H}^{G}\sigma $ can be written $ [\Phi_\kappa f](g)=\int_G \kappa(g,g^{\prime})f(g^{\prime})\mathrm{d}g^{\prime} $ for an operator valued integration kernel $ \kappa:G \times G \to \hom(V_\rho,V_\sigma) $ satisfying certain symmetry constraints.
        These constraints are usually formulated as, see e.g.\ \cite{cohen-theory-equivariant-hom}, 
        \begin{equation}
            \kappa(gh,g^{\prime}h^{\prime})=\sigma(h^{-1})\kappa(g,g^{\prime})\rho(h^{\prime}), \quad h\in H, h^{\prime}\in H^{\prime}.
        \end{equation}
        We prove new results on redundancies in the space of the associated integration kernels.
        Specifically, that the transformation under $ h^{\prime} $ comes from an equivalence relation construction
\begin{equation}
    \kappa \sim \kappa^{\prime}\quad\Leftrightarrow\quad \int_{H^{\prime}} \kappa(g,g^{\prime}h^{\prime})\rho(h^{\prime})\mathrm{d}h^{\prime}=\int_{H^{\prime}}\kappa^{\prime}(g,g^{\prime})\rho(h^{\prime})\mathrm{d}h^{\prime}.
\end{equation}
        Additionally, for each equivalence class there exist a unique kernel $ \kappa_0 $ satisfying $ \kappa_0(g,g^{\prime}h^{\prime})=\kappa_0(g,g^{\prime})\rho(h^{\prime}) $ 
        For more details, see lemma~\ref{lemma:kernel-redundancy}.

     \item 
     We introduce a novel framework for
     nonlinear
     equivariant operators $ \Phi^{\mathrm{Non-linear}}:\ind _{H^{\prime}}^{G}\rho\to \ind _{H}^{G}\sigma $ between feature maps as
\begin{equation}
    [\Phi^{\mathrm{Non-linear}} f](g)=\int_G \hat \omega(g^{-1}f,g^{\prime})\mathrm{d}g^{\prime},
\end{equation}
         see \cref{cor:steerable_functional_argument_reduction}.
         We prove their universality in \cref{prop:arbitrary-map-from-general-equiv-map}.
         The map $ \hat \omega:\ind _{H^{\prime}}^{G}\rho \times G\to \ind _{H}^{G}\sigma $ satisfies the constraints 
         \begin{equation}
             \hat \omega(h^{-1}g^{-1}f,g^{\prime}h^{\prime})=\sigma(h^{-1})\hat\omega(g^{-1}f,g^{\prime}), \quad \forall h\in H, h^{\prime}\in H^{\prime}.
         \end{equation}
     \item We show explicitly how the general map $ \Phi^{\mathrm{Non-linear}} $ specialises to well-established equivariant architectures through specific choices of $ \hat \omega $ as visualised in \cref{fig:specialisations-from-general-map}.
\end{itemize}
\begin{figure}[H]
\begin{tikzpicture}[
    >=Stealth,
    leaf/.style={          
      align=center,
      inner sep=2pt
    },
    convleaf/.style={       
      leaf,
      text width=16em
    },
    attnleaf/.style={       
      leaf,
      text width=21em,
      execute at begin node={\setlength\baselineskip{1.8\baselineskip}}
    },
    attnlabel/.style={       
      leaf,
      text width=21em,
    },
    arrowlabel/.style={     
      font=\footnotesize,
      pos=0.3,
      fill=white,
      inner sep=1pt
    }
]

  \node (omega)     at ( 0.325,  1.95) {$\omega(f,g,g')$};
  \node (omega_hat) at ($(omega)+(0,-3)$) {$\hat\omega(g^{-1}f,g')$};

  \coordinate (convC) at (-4,-1.5);
  \node[convleaf] (gcnn)  at ($(convC)+(0,1)$)
    {$\hat\kappa(g')[g^{-1}f](g')$};
  \node[convleaf] (gcnn_title) at ($(gcnn)+(0,0.5)$) {$ G $-CNN (\cref{prop:gcnn-from-general-equiv-map}):};
  \node[convleaf] (steer) at ($(convC)+(0,-1)$)
    {$\kappa\bigl(g',[g^{-1}f](e),[g^{-1}f](g')\bigr)\,[g^{-1}f](g')$};
  \node[convleaf] (steer_title) at ($(steer)+(0,0.7)$) {Implicit steerable CNN (\cref{thm:instance-implicit-kernel}):};
  
  \coordinate (attnC) at ($( 5.5,-1.989)-(0,0.8)+(0,1.1)$);
  \node[attnleaf] (stdattn) at ($(attnC)+(0,2.8)$)
    {$\mathrm{Softmax}\!\Bigl\{\tfrac{[g^{-1}f](e)^\top W_Q^\top W_K\,[g^{-1}f](g')}{\sqrt d}\Bigr\}W_V\,[g^{-1}f](g')$};
  \node[attnleaf] (stdattn_title) at ($(stdattn)+(0,0.8)$) {Self-attention (\cref{thm:instance-self-attention}):};
  \node[attnleaf] (relbias) at ($(attnC)+(0,0.2)$)
    {$\mathrm{Softmax}\!\Bigl\{\tfrac{[g^{-1}f](e)^\top W_Q^\top W_K\,[g^{-1}f](g')}{\sqrt d}\!+\psi(g')\Bigr\}$};
    \node[attnleaf] (relbias_two) at ($(relbias)-(0,0.7)$) {$\cdot W_V\,[g^{-1}f](g')$};
  \node[attnlabel] (relbias_title) at ($(relbias)+(0,1.0)$) {Relative Position Bias self-attention (\cref{thm:attention-positional}):};
  \node[attnleaf] (lietr)   at ($(attnC)+(0,-2.5)$)
    {$\mathrm{norm}\{\alpha([g^{-1}f](e),[g^{-1}f](g'),g')\}W_V\,[g^{-1}f](g')$};
  \node[attnleaf] (lie_title) at ($(lietr)+(0,0.6)$) {LieTransformer (\cref{thm:instance-lietransformer}):};

  \begin{pgfonlayer}{background}
    \node[
      name=convbox,
      draw, fill=blue!10, rounded corners,
      inner xsep=0.5em, inner ysep=0.5em,
      fit=(gcnn) (steer) (gcnn_title)
    ] {};
    \node[
      name=attnbox,
      draw, fill=orange!10, rounded corners,
      inner xsep=0.5em, inner ysep=0.5em,
      fit=(lietr) (stdattn) (relbias) (stdattn_title)
    ] {};
    \node[
      name=omegabox,
      draw, fill=green!10, rounded corners, 
      inner xsep=0.25em, inner ysep=0.25em,
      fit=(omega) (omega_hat)
    ] {};
  \end{pgfonlayer}

    \node[font=\footnotesize] at (convbox.north) [above=5.933em]
    {\bf Convolution based};
  \node[font=\footnotesize] at (attnbox.north) [above=0.5em]
    {\bf Attention based};
  \node[font=\footnotesize] at (omegabox.north) [above=0.50em]
    {\bf General framework};

  \draw[->,double] (omega) -- (omega_hat)
    node[pos=0.3,fill=green!10,rounded corners,inner sep=1pt]{Equivariance};

  \draw[->] (omega_hat) to[bend right=20]  ($(gcnn.east)+(-1,0)$)
    node[arrowlabel,above] {};

  \draw[->] (omega_hat) to[bend left=10]  ($(steer.east)+(0,0)$)
    node[arrowlabel,above] {};

  \draw[->] (omega_hat) to[bend left=20]  ($(stdattn.west)+(0.0,0.0)$)
    node[arrowlabel,above] {};

  \draw[->] (omega_hat) to[bend right=30] ($(relbias.west)+(0.3,0.0)$)
    node[arrowlabel,above] {};

  \draw[->] (omega_hat) to[bend right=20] ($(lietr.west)+(0,0)$)
    node[arrowlabel,above] {};

\end{tikzpicture}
\caption{Specialisations of the general framework based on the integrand $ \hat{\omega}:\mathcal{I}_\rho \times G\to V_\sigma $ to well-established architectures for different choices of the map $ \hat\omega $.
The $G$-CNN case is obtained when $ \hat{\omega} $ factorises in a kernel $ \hat{\kappa}: G \to \hom(V_\rho,V_\sigma) $, which is independent of the input features, and the input feature $ g^{-1}f:G \to V_\rho $.
The other cases also appears through different factorisations as some type of kernel contracted with the feature $ g^{-1}f $.
However, in all cases the kernel is also dependent on the input feature $ g^{-1}f $, which separates them from the $G$-CNN case and makes them inherently non-linear in $ f $. }\label{fig:specialisations-from-general-map}
\end{figure}
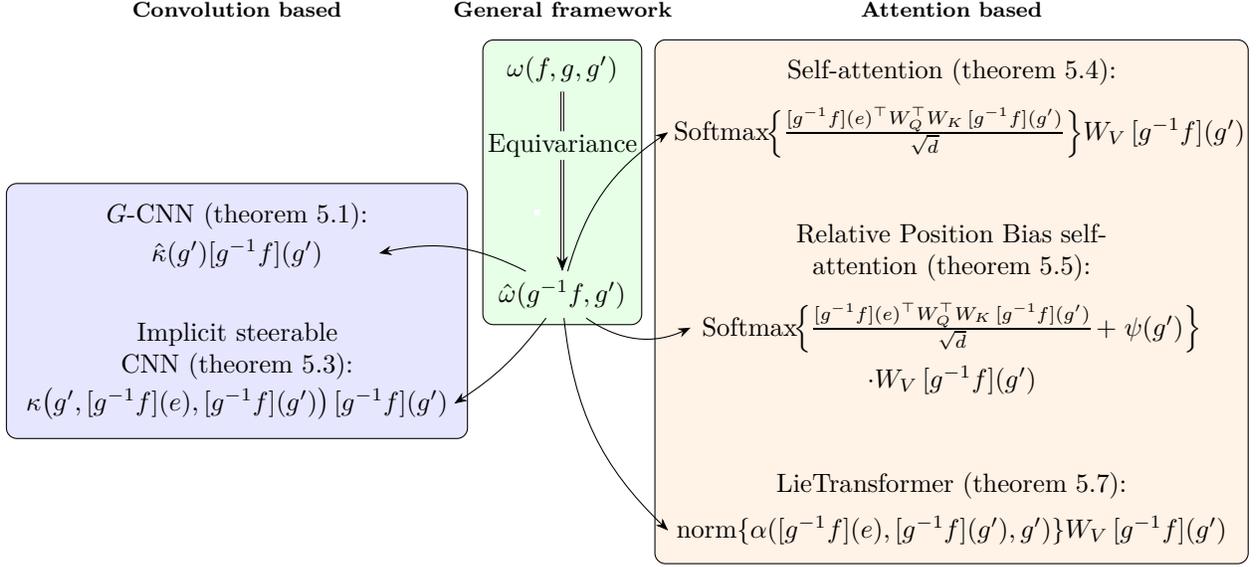

\subsection{Outline}
The outline of the paper is as follows. In \cref{sec:homogeneous-spaces-and-equivariance} we introduce some mathematical background on group equivariance, homogeneous spaces, fibre bundles, and induced representations.
Readers familiar with these topics can skip this section. 
\Cref{sec:linear-maps} reviews the theory of \emph{linear} equivariant maps of \cite{cohen-theory-equivariant-hom}. We uncover some novel insights in this story which, to our knowledge, have not been previously observed.
Specifically, we show that the redundancy present in the induced representation creates a natural equivalence class for integration kernels. Moreover, we demonstrate that what has previously been formulated as a condition on the kernel is actually a choice of  equivalence class. We then move on to develop a more general formulation of, potentially, \emph{non-linear} equivariant maps in \cref{sec:nonlinear-maps}.
Additionally, we show that this formulation is universal, i.e. includes \emph{all} equivariant non-linear maps.
Finally, in \cref{sec:instances} we derive several established architectures from the general framework through a specific choice of the map $ \omega $. 
For a preview of the architectures, see \cref{fig:specialisations-from-general-map}.
\Cref{sec:conclusions} concludes the paper and presents the conclusions of the work and some paths for future research. The paper also includes an appendix in which we present a distributional version of our general framework.

\section{Mathematical Preliminaries}
\label{sec:homogeneous-spaces-and-equivariance}
As introduced by Felix Klein through the Erlangen program \citep{Klein1893}, the study of geometric spaces relates closely to the theory of group transformations.
One can argue that structures (functions, maps) defined on a geometric space should be compatible with the group transformations associated with the space in order to properly take into account its geometry.
In this section we define the necessary mathematical structures and constructions in order to talk about these notions in a well-defined manner.

Starting off in \cref{sec:equivariance} we define the action of groups on spaces and introduce the precise notion of transformation compatibility for maps on the space.
Following this, we present a certain well known  type of space called a homogeneous space in \cref{sec:homogeneous-spaces}, where the symmetry transformations are particularly nicely behaving.
For the development in subsequent chapters we will assume the underlying base space to have this structure.
In \cref{sec:bundles,sec:induced-reps-and-frobenius-reciprocity} we introduce further mathematical structures which arise the study of homogeneous spaces and symmetry-compatible maps, namely bundles and induced representations.
These structures provide the main tools for the developments in subsequent chapters.

People familiar with these topics can return to this section later, if needed. 

\subsection{Equivariance}\label{sec:equivariance}
We start off with formalising the way spaces transform, which is captured by the notion of group actions.
\begin{definition}[Actions, representations]
	A group action of the group $G$ on a space $X$ is a map
	\begin{equation}
		\begin{aligned}
			G \times X &\rightarrow X \\
			(g,x) &\mapsto g \vartriangleright x
		\end{aligned}
	\end{equation} satisfying
    \begin{align*}
	    (g_1 g_2) \vartriangleright x = g_1 \vartriangleright (g_2 \vartriangleright x)
    \end{align*}
    for all $g_1, g_2 \in G$, $x\in X$, and the identity element $e \in G$.
	The space $X$ on which the action is defined is called a $G$-space.
    In the special case where the space $X$ is a vector space and the action $g \vartriangleright: X \rightarrow X$ is a linear map for all $g \in G$, then the action is called a representation of $G$ on the vector space $X$.
\end{definition}
\begin{remark}
	We will throughout the article often use the short-hand notation $gx := g \vartriangleright x$ to enhance readability. 
    Additionally, when $G$ acts on a vector space $X$, the linear map $g\vartriangleright$ is denoted $\rho(g)$ and the vector space $\rho(g)$ acts on is noted $X_\rho$.
    Commonly the pair $(\rho,X_\rho)$ is together called a representation, or in short that $X_\rho$ is a $G$-representation.
\end{remark}

\begin{remark}
   The action defined above will have other derived properties that one might find natural, such as $e \vartriangleright x = x$ and $(g ^{-1}) \vartriangleright = (g \vartriangleright)^{-1}$. 
\end{remark}

\begin{definition}[Orbits, quotients]
	\label{def:orbits}
	Consider a $G$-space $X$.
	For each point $x \in X$ we can form the orbit $[x] = \{ g \vartriangleright x \mid g \in G \}$ consisting of all points in $X$ one can transform $x$ into by the action of $G$.
	In fact, the relation $x \sim x'$ if $[x] = [x']$ is an equivalence relation, and we can form the quotient space  $G \backslash X = \{ [x] \mid x \in X\}$ consisting of all orbits in $X$ by the action of $G$.
\end{definition}
\begin{remark}
	One can also define the action of a group from the right $x \mapsto  x \vartriangleleft g$, in which case it is common to also flip the notation for the quotient space $X/G$.
\end{remark}

Examples of actions arise virtually at any place where we want to consider symmetry transformations of some space or set.
\begin{example}
    Instances of group actions include permutations by $ \operatorname{Sym}(S)$ acting on a finite set $S$, which will be relevant in \cref{sec:standard-attention-from-general-map}, rotations by $SO(2)$ acting on $ \mathbb{R}^2$ and the diffeomorphism group $ \operatorname{Diff}(M)$ acting on a differentiable manifold $M$.
\end{example}
After defining some structure on a space, it is natural to ask what kind of maps preserve the given structure.
For example, homeomorphic maps are those that preserve topological structure and linear maps are those those maps that preserve linear (vector space) structure.
The structure-preserving maps on spaces equipped with $G$-actions are called $G$-equivariant maps.

\begin{definition}[Equivariant, invariant maps, intertwiners]\label{def:equivariant-maps-intertwiner}
\label{def:equivariance}
	Consider two $G$-spaces $X$ and $Y$.
	A map $f: X \rightarrow Y$ is called $G$-equivariant if it is  compatible with the actions of $G$ in the sense that
	\begin{equation}
		\label{eq:equivariance}
	f(gx) = gf(x)
	\end{equation}
	for all $g \in G$ and $x \in X$.
	If the action of $G$ on $Y$ is trivial  $gy \equiv y$ we say that a map $f$ satisfying \cref{eq:equivariance} is $G$-invariant, that is $ f(gx)=f(x) $. 
When it is clear which group $G$ is considered we will simply call $f$ equivariant or invariant, respectively.
In the particular setting when $X$ and $Y$ are vector spaces and $f$ is an equivariant linear map $f$ if called an intertwiner.
\end{definition}
\begin{remark}
    Since $ g $ in the right hand side of \cref{eq:equivariance} acts on the output of $ f $, a more clear notation might be $ f(gx)=g(f(x)) $ in order to avoid confusing this with the action of $ g $ on the function itself. 
    However, to avoid confusion and stacking parentheses we denote the action of a group element on a function as $ [gf](x) $.
\end{remark}

Throughout the article we will derive and motivate equivariance conditions for several diverse types of maps.
The definition is quite general, as the spaces $X$ and $Y$ can be anything from homogeneous spaces and linear spaces (see \cref{sec:features}) to function spaces and Hom-spaces (see \cref{sec:linear-maps,sec:nonlinear-maps}).

\subsection{Homogeneous spaces}\label{sec:homogeneous-spaces}
The previous section revolves around maps between spaces equipped with actions of some group $G$.
Various assumptions can be made about how exactly the group acts on these spaces, e.g. continuously on topological spaces or linearly on vector spaces.
A particularly useful property is that of a transistive action:

\begin{definition}[Homogeneous space]
\label{def:homogeneous-space}
	A $G$-space $X$ is called a homogeneous space if the action on $X$ is transitive.
	That is, $X$ is a homogeneous space if for any two points $x,y \in X$ there exist at least one group element $g \in G$ such that $g \vartriangleright x = y$ i.e. transforms $x$ to $y$.
\end{definition}

The following isomorphism is central to much of the theory built around homogeneous spaces.

\begin{proposition}\label{def:homogeneous-space-isomorphism}
	A homogeneous space $X$ is isomorphic to the quotient space $G/H$ of the group $G$ and its stabilizer subgroup $H = \operatorname{Stab}_{G}(x_0)$ of an arbitrary point $x_0 \in X$.
	Conversely, any quotient space $G/H$ is a homogeneous space equipped with the action of $G$ induced by the group operation.
	In particular, the construction is independent of the choice of $x_0$.
\end{proposition}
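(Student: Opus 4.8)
The plan is to run the classical \emph{orbit--stabilizer} argument and then upgrade the resulting bijection to an isomorphism in whatever category (sets with $G$-action, topological $G$-spaces, or smooth $G$-manifolds) is meant by ``isomorphic''. First I would fix a basepoint $x_0 \in X$, put $H = \operatorname{Stab}_G(x_0)$, and introduce the \emph{orbit map} $\phi\colon G \to X$, $\phi(g) = g \vartriangleright x_0$. Transitivity (\cref{def:homogeneous-space}) is exactly surjectivity of $\phi$. The one computation that matters is that $\phi(g) = \phi(g')$ if and only if $g^{-1}g' \vartriangleright x_0 = x_0$, i.e.\ $g^{-1}g' \in H$, i.e.\ $gH = g'H$; hence $\phi$ descends to a well-defined injection $\bar\phi\colon G/H \to X$, $gH \mapsto g \vartriangleright x_0$, which is onto because $\phi$ is. Equivariance is a one-liner, $\bar\phi(k \vartriangleright gH) = (kg)\vartriangleright x_0 = k \vartriangleright (g \vartriangleright x_0) = k \vartriangleright \bar\phi(gH)$, so $\bar\phi$ is an isomorphism of $G$-sets.

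For the converse I would simply observe that the left-multiplication rule $k \vartriangleright gH := (kg)H$ is a well-defined $G$-action on $G/H$ (immediate from associativity of the group law) and is transitive, since every coset satisfies $gH = g \vartriangleright eH$; thus $G/H$ is a homogeneous space. For independence of the basepoint, given another $x_1 \in X$ choose, by transitivity, $g_1$ with $x_1 = g_1 \vartriangleright x_0$; a short calculation gives $\operatorname{Stab}_G(x_1) = g_1 H g_1^{-1}$, and one checks that $gH \mapsto g g_1^{-1}\cdot(g_1 H g_1^{-1})$ is a $G$-equivariant bijection $G/H \to G/(g_1 H g_1^{-1})$. Alternatively, since by the first part $G/H$ and $G/\operatorname{Stab}_G(x_1)$ are both isomorphic to $X$, they are isomorphic to each other, which is all that is claimed.

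The substantive point --- and the only place the hypotheses on $G$ are actually used --- is promoting $\bar\phi$ from a bijection to an isomorphism of topological (or smooth) $G$-spaces. Give $G/H$ the quotient topology and let $\pi\colon G \to G/H$ be the quotient map, which is open; then $\bar\phi$ is continuous because $\bar\phi \circ \pi = \phi$ is continuous. The delicate direction is continuity of $\bar\phi^{-1}$, equivalently openness of $\phi$ onto $X$: this can fail for pathological topological groups, but holds under the standing assumption that $G$ is a (second countable) locally compact group via a Baire category argument, and when $X$ is a manifold and the action is smooth it holds because $\phi$ then has locally constant rank and is a surjective submersion, so $\bar\phi$ is a diffeomorphism. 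I would therefore state the regularity hypotheses explicitly and invoke the standard open-mapping / constant-rank theorems rather than reproduce their proofs; this is the main obstacle, and everything else is bookkeeping.
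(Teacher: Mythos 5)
Your proposal is correct and follows exactly the standard orbit--stabilizer argument; the paper itself gives no proof, deferring to \citet{Lee2012}, where this same argument (orbit map, descent to a $G$-equivariant bijection $G/H\to X$, and the equivariant rank theorem / openness of the orbit map to upgrade it to a diffeomorphism or homeomorphism) is the one carried out. You also correctly identify that the only non-formal step is the regularity of $\bar\phi^{-1}$, which is precisely where the local compactness or smoothness hypotheses enter.
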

\begin{proof}
	See e.g. \citet{Lee2012} for proof.
\end{proof}
It is clear from the isomorphism that any group $G$ is itself a homogeneous space where the stabilizer subgroup $H$ is trivial.
More generally, one can interpret the isomorphism as the observation that homogeneous spaces are those that look like some group $G$ up to operations which leave a point $x_0$ fixed.
\begin{example}
	Homogeneous spaces are not seldom spaces which are defined without reference to a group action, but on which a transistive group action can be defined and as a consequence is isomorphism to a quotient space.
	Examples include the $n$-sphere $S^n \isom O(n)/O(n-1)$, the Euclidean space  $E^n \isom E(n)/O(n)$ and the projective space  $ \mathbb{R}P^n \isom SO(n+1) / O(n)$.
\end{example}

\subsection{Bundles and equivariance}\label{sec:bundles}
Before we move on to discussing data in neural networks in detail in \cref{sec:features} we need to introduce some fundamental definitions and concepts relating to fibre bundles.
These constructions arise naturally in the study of equivariance and homogeneous space and will play a central role in the framework presented in subsequent sections.

Formulated intuitively, a fibre bundle $ E $ with base space $ B $ and fibre $ F $ can be viewed as gluing a copy of $ F $ to each point of $ B $.
Additionally, each fibre bundle has a projection $ \pi:\, E \,\to\, B $ and the fibre over a specific point is the preimage of this projection $ \pi^{-1}(x)=:F_x $ and is isomorphic to the general fibre $ F $.
A trivial example of a fibre bundle is the trivial bundle with $ E=F \times B $, and generally, all fibre bundles are locally trivial.
A non-trivial example is the Möbius strip which can be constructed with the fibre $ F $ as the line segment $ F=[-1,1] $ glued to the circle $ B=S^{1} $, which is the base, with a global twist.

Sometimes the notation $E_F$ is used for the total space if we feel the need to specify the fibre.
Special cases include vector bundles where the fibre have the structure of vector spaces and principal bundles where the fibres are isomorphic as a set to some group.
We assume some familiarity with bundles and their sections, local maps from the base space to the total space, and refer the interested reader to \citet{isham} for a detailed introduction to the subject.
We will, however, define the principal bundles, as these will play a key role in subsequent chapters.

\begin{definition}[Principal $H$-bundle]\label{def:principal-bundle}
Consider a group $H$ and a right $H$-space $P$ with action $\lhd: P \times H \to P$.
A principal $H$-bundle $\pi:P\to B$ is a fibre bundle such that the projection is $H$-invariant $\pi(p\lhd h) = \pi(p)$ and that $H$ acts freely and transitively on the fibres.
\end{definition}
A consequence of that $H$ acts freely and transitively on the fibres is that the fibres are $H$-torsors, and given a choice of identity in each fibre, are isomorphic to $H$.
Since $ H $ acts freely and transitively on the fibres, the fibres themselves are $ H $-torsors.
That is, the fibres are isomorphic, as sets, to the group $ H $ since the fibres has no preferred identity. 
Also worth noting is that, though conventionally chosen that way, the $H$ need not act on $P$ from the right but can equally well be defined as a left action, as long as everything else is adjusted accordingly.
Unless otherwise stated, we will use a right action for the principal bundles.
\begin{example}
	\label{ex:homogeneous-principal-bundle}
A homogenous $G$-space $X$ with isomorphism $\psi: X \xrightarrow{\isom} G/H$ can be viewed as the base space of a principal $H$-bundle $G\to G/H$.
In this perspective, a choice of section $s:G/H \to G$ defines a lifting map which takes a point $x \in X$ to a representative group element $g_x := s(\psi(x)) \in G$.
In particular, $g_x$ is a representative of the orbit of the action of $H$ on $G$.
\end{example}
Now, allowing a group action on both the total and base space, independently of any fibre preserving group action, leads to the natural condition that the projection map commutes with the group action.
This results in a construction used in equivariant cohomology: the equivariant bundle.\footnote{We choose to use this terminology which is familiar from the mathematics community.
The same structure is referred to as an \textit{isometry/diffeomorphism pushforward} in \citet{weiler2023EquivariantAndCoordinateIndependentCNNs}.}

\begin{definition}[$G$-equivariant bundle]\label{def:equivariant-bundle}
    A fibre bundle $\pi:E\to B$ where both $E$ and $B$ are $G$-spaces is a $G$-equivariant bundle if $\pi(ge)=g\pi(e)$, with the corresponding group actions.
\end{definition}
\begin{remark}
If the base space is homogeneous these bundles are often simply called homogeneous bundles.
\end{remark}
\begin{remark}\label{rmk:equivariant-bundle-map-between-fibres}
From $\pi(ge)=g\pi(e)$ we see that the $G$-action maps fibres to fibres:
\begin{equation}
    E_{gb}=\{e\in E:\pi(e)=gb\}=\{e\in E:g^{-1}\pi(e)=b\}=\{ge\in E:\pi(e)=b\}=g\{e\in E:\pi(e)=b\}=gE_b.
\end{equation}
\end{remark}
While a principal $H$-bundle gives a group structure to the fibres if we have a representation of $H$ on the vector space $V$ there is a very natural construction of a bundle which has typical fibre $V$ but incorporates the group structure from both the principal bundle and $V$. 
\begin{definition}[Associated bundle]\label{def:associated-vector-bundle}
Given a principal $H$-bundle $\pi_P:P\to B$ and a left $H$-space $M$, the total space of the associated bundle $P\times_H M$ is constructed by defining a right $H$-action on the free product $P \times M$ as 
\begin{equation}
    (p,m)\triangleleft h:=(ph,h^{-1}m)
\end{equation}
and imposing the equivalence relation
\begin{equation}
    (p,m)\sim (ph,h^{-1}m).
\end{equation}
Then the total space of the associated bundle is constructed through  taking the quotient by this equivalence relation:
\begin{equation}
    P\times _H M=(P\times M)/\sim.
\end{equation}
The projection to the base space is given by $\pi([p,m])=\pi_P(p)$ which is independent of the choice of representative of the coset since $\pi_P$ is independent of the $H$ action on the total space. 
Furthermore, the fibre of the associated bundle is $M$.
\end{definition}
\begin{remark}
A common use case is for $M$ to be a $H$-representation $(\rho,V_\rho)$ on a vector space $V_\rho$.
In those cases this is called an associated vector bundle.
Equipping the space of sections $\Gamma(P \times _H V_ \rho)$ with pointwise addition and multiplications by scalars in the field underlying $V_{\rho}$ turns this space into a vector space.
\end{remark}
\begin{remark}
   Regarding notation of the total space of the associated bundle.
   In the setting in which the space $ M $ used to construct the associated bundle is a vector space with a $ H $-representation $ \left( \sigma,V_\sigma \right) $ we denote the total space as $ P \times _\sigma V_\sigma $, while in the general setting where only a $ H $-action is present we leave the total space as $ P \times _H M $. 
\end{remark}
The construction of the associated bundle and its relation to the corresponding principal bundle can be represented diagramatically through the Cartan's mixing diagram which is a tool  used in equivariant cohomology. 
\begin{definition}[Cartan's mixing diagram \citep{tuIntroductoryLecturesEquivariant2020}]\label{def:cartan-mixing-diagram}
    Given a principal $H$-bundle $\alpha:P\to B$ and a left $H$-space $M$, Cartan's mixing diagram is the commutative diagram 
    \begin{equation}\label{diagram:cartans-mixing-diagram}
\begin{tikzcd}[cramped]
	P && {P\times M} && M \\
	\\
	B && {P\times_G M} && {H\backslash M}
	\arrow["\alpha"', from=1-1, to=3-1]
	\arrow["{\pi_1}"', from=1-3, to=1-1]
	\arrow["{\pi_2}", from=1-3, to=1-5]
	\arrow["\beta"{description}, from=1-3, to=3-3]
	\arrow["\gamma", from=1-5, to=3-5]
	\arrow["{\tau_\alpha}", from=3-3, to=3-1]
	\arrow["{\tau_\beta}"', from=3-3, to=3-5]
\end{tikzcd}
    \end{equation}
    where
    \begin{align*}
        \tau_\alpha([p,m])=\alpha(p),&&       \tau_\gamma([p,m])=\gamma(m)=Hm
    \end{align*}
    and $\beta$ is the quotient map of $P\times M$ with respect to the diagonal right $H$-action $(p,m)h=(ph,h^{-1}m)$.
    Note that $\tau_\gamma$ is well-defined since all representatives $(p,m)$ in the same equivalence class map to the same object $Hm$ in $H\backslash M$. 
    Moreover, by symmetry, if $\gamma$ is a principal $H$-bundle, then $\tau_\beta$ is a fibre bundle with fibre $P$.
\end{definition}
We can, in addition to the action of $H$ on the total space $P$, introduce a new group $G$ acting on both the total space $P$ and the base space $B$. This leads to the following definition of an equivariant principal bundle which will be relevant later.
\begin{definition}[$G$-equivariant  principal $H$-bundle \citep{zou2021notesequivariantbundles}]\label{def:equivariant-principal-bundle}
Let $G,H$ be groups.
Then $\alpha:P \to{}B$ is called a $G$-equivariant principal $H$-bundle if
\begin{enumerate}
    \item The map $\alpha$ is a principal $H$-bundle; 
    \item Both $P$ and $B$ are $G$-spaces and $\alpha$ is $G$-equivariant;
    \item The actions of $G$ and $H$ commute on $P$.
\end{enumerate}

\end{definition}

With the definition above of the associated vector bundle $ \pi_A:G \times _\rho V_\rho \to G/H $ it is natural to define a $ G $-action on the total space as $ k[g,v]=[kg,v] $.
Together with the principal bundle $ \pi_H:G\to G/H $, obtained from the homogeneous  space $ G/H ,$ and the $ G $-action on this space $ k(gH)=(kg)H $, it is easy to check that the $ G $-action commutes with the projection of the associated bundle: 
\begin{equation}
     \pi_A(k[g,v])=\pi_A([kg,v])=\pi_H(kg)=(kg)H=k(gH)=k\pi_H(g)=k \pi_A([g,v]).
\end{equation}
This shows that the associated vector bundle with this $ G $-action is a $ G $-equivariant associated vector bundle.

Now, since we're interested in sections of this associated vector bundle we need to make the sections compatible with this equivariant structure.
This is done by defining a suitable $ G $-action on the space of sections.

\begin{definition}[$G$-representation on sections of associated vector bundles]\label{def:G-rep-on-sections-associated-bundle}
    Given an associated vector bundle $G\times _\rho V_{\rho}\to G/H$ the space of sections $\Gamma(G\times_\rho V_{\rho})$ is naturally equipped with a $G$-representation\footnote{This construction is referred to as an \textit{isometry pushforward of sections} in \citet{weiler2023EquivariantAndCoordinateIndependentCNNs}.} acting as
\begin{equation}
    [\rho_\Gamma (g)f](x)=gf(g^{-1}x).  
\end{equation}

\begin{remark}
    It might not be immediately obvious that $gf(g ^{-1}x )$ is still in the fibre over $x$.
    For this purpose, note that $[\rho_\Gamma (g)f](x)\in(G\times_\rho V_{\rho})_x$ since $f(g^{-1}x)\in(G\times_\rho V_{\rho})_{g^{-1}x}$ and \cref{rmk:equivariant-bundle-map-between-fibres} says that $g:(G\times_\rho V_{\rho})_x\to (G\times_\rho V_{\rho})_{gx}$. 
    This gives that $g:(G\times_\rho V_{\rho})_{g^{-1}x}\to (G\times_\rho V_{\rho})_{x}$ and hence $[\rho_\Gamma (g)f](x)\in(G\times_\rho V_{\rho})_x$ showing that this action preserves the fibres.
\end{remark}
\end{definition}

Since the fibre of the associated vector bundle is isomorphic to the representation $(\rho,V_{\rho})$ it is natural to ask what this $G$-action on sections of the associated bundle looks like for sections $f_{V_{\rho}}:G/H\to E$ of a vector bundle with fibre $V_{\rho}$.

\begin{remark}
    For this process we need a local reference in the fibres of the $H$-principal bundle $G \to G/H$.
    This reference value is determined by a choice of local section $s: G/H \to G$ which amounts to a local choice of coordinates.
    The section exists globally if and only if the bundle $G \to G/H$ is trivial, however choosing this section locally and working with transition functions between different choices is perfectly doable and well studied.
    Due to this we omit these details here and refer the reader to, e.g., \cite{sontzPrincipalBundlesClassical2015}, for the technical details.
\end{remark}

To get the $G$-representation on the sections $f_{V_{\rho}}$ of the vector bundle $E_{V_\rho} \to G/H$ we start by writing a section $f$ of the associated bundle as $f(x)=[s(x),f_{V_{\rho}}(x)]$ where $s:G/H\to G$ is a local section and $f_{V_{\rho}}:G/H\to E_{V_\rho}$ is  a section of the vector bundle.

\begin{remark}\label{rmk:fibre-twist-map}
One important detail is the interaction between the section $s$ and the $G$-action; because, while $\pi(gs(x))=\pi(s(gx))$ in the base space, in the total space we have $gs(x)=s(gx)h$ for some $h\in H$.
Rearranging we see that this $h$ depends on $g$ and $x$ and we get the expression $\mathrm{h}(x,g)=s(gx)^{-1}gs(x)$.

This function explains the mismatch between $s(x)\in P_x$ and $s(g^{-1}x)\in P_{g^{-1}x}$ when $s(x)$ is pushed to $P_x$ via left multiplication with $g$.
Intuitively, $h(x,g)$ encodes the twist in the fibre and is visualised in \cref{fig:twist-in-fibre}. 
For more details, see \cite{cohen-theory-equivariant-hom}.\footnote{The element $ \mathrm{h}(x,g) $ is called \textit{(isometry) induced gauge transformation} in \cite{weiler2023EquivariantAndCoordinateIndependentCNNs}.}
\end{remark}
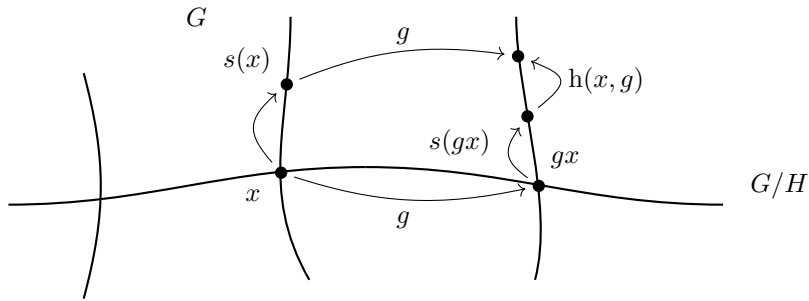
\begin{figure}[h]
\[ 
\begin{tikzpicture}
	\begin{pgfonlayer}{nodelayer}
		\node [style=none] (0) at (-7.5, 0.25) {};
		\node [style=none] (1) at (-2.75, 0.75) {};
		\node [style=none] (2) at (2, 0.25) {};
		\node [style=none] (3) at (2.75, 0.5) {$G/H$};
		\node [style=none] (4) at (-6.5, 2) {};
		\node [style=none] (5) at (-3.75, 2.75) {};
		\node [style=none] (6) at (-0.75, 2.75) {};
		\node [style=none] (7) at (-6.5, -1) {};
		\node [style=none] (8) at (-3.5, -0.75) {};
		\node [style=none] (9) at (-0.5, -0.75) {};
		\node [style=none] (10) at (-5, 2.75) {$G$};
		\node [style={small_dot}] (11) at (-0.45, 0.5) {};
		\node [style=none] (12) at (-0.1, 0.85) {$gx$};
		\node [style={small_dot}] (13) at (-0.6, 1.425) {};
		\node [style=none] (14) at (-1.5, 1.075) {$s(gx)$};
		\node [style={small_dot}] (19) at (-3.875, 0.675) {};
		\node [style={small_dot}] (20) at (-3.8, 1.85) {};
		\node [style=none] (21) at (-4.25, 0.375) {$x$};
		\node [style=none] (24) at (-4.325, 2.175) {$s(x)$};
		\node [style={small_dot}] (25) at (-0.725, 2.225) {};
		\node [style=none] (26) at (0.45, 1.9) {$\mathrm{h}(x,g)$};
		\node [style=none] (27) at (-2.25, 2.5) {$g$};
		\node [style=none] (28) at (-2.25, 0.05) {$g$};
	\end{pgfonlayer}
	\begin{pgfonlayer}{edgelayer}
		\draw [style={line_thick}, in=180, out=0] (0.center) to (1.center);
		\draw [style={line_thick}, in=180, out=0] (1.center) to (2.center);
		\draw [style={line_thick}, in=75, out=-75] (4.center) to (7.center);
		\draw [style={line_thick}, in=120, out=-90] (5.center) to (8.center);
		\draw [style={line_thick}, in=75, out=-90, looseness=0.75] (6.center) to (9.center);
		\draw [style={arrow_both_gap}, bend left=45, looseness=1.50] (11) to (13);
		\draw [style={arrow_both_gap}, bend left=45, looseness=1.50] (19) to (20);
		\draw [style={arrow_both_gap}, bend right=60, looseness=2.25] (13) to (25);
		\draw [style={arrow_both_gap}, bend left=15] (20) to (25);
		\draw [style={arrow_both_gap}, bend right=15] (19) to (11);
	\end{pgfonlayer}
\end{tikzpicture}
\]
    \caption{A visualisation of the map $ \mathrm{h}:G/H \times G\to H $ encoding the twist in the fibre.}\label{fig:twist-in-fibre}
\end{figure}

Sparing the reader the rest of the technical details, this results in the following representation on the sections of the vector bundle:
\begin{equation}\label{eq:rep-on-sections-of-vector-bundle}
(gf_{V_{\rho}})(x)=\rho(\mathrm{h}(x,g^{-1})^{-1})f_{V_{\rho}}(g^{-1}x).
\end{equation}

\subsection{Induced representations}\label{sec:induced-reps-and-frobenius-reciprocity}
Although it is intuitive to work with sections of a vector bundle over the homogeneous base space, this representation is rather cumbersome to work with as one needs to work with local coordinates and transition functions. 
Luckily there is a way to map these sections bijectively to functions $f:G\to V_{\rho}$ subject to the equivariance constraint 
\begin{equation}\label{eq:mackey-criteria}
    f(gh)=\rho(h^{-1})f(g),
\end{equation}
which utilises this transformation property to arrive at a representation free of local coordinates. 

Functions satisfying this criteria are sometimes called Mackey functions \citep{cohen-theory-equivariant-hom,cohen2018intertwinersinducedrepresentationswith} and we will use the same nomenclature. 
This space with the $G$-action $(gf)(k)=f(g^{-1}k)$ is a induced representation which is defined as follows:
\begin{definition}[The induced representation]\label{def:induced-representation}
Given a representation $(\rho, V_\rho)$ of a subgroup $H\leq G$ and a space of functions $\mcI:G\to V_\rho$, the induced representation is the subset 
\begin{equation}
	\label{eq:induced-representation}
    \mcI_\rho=\ind^G_H(\rho):=\{f\in \mcI:f(gh)=\rho(h^{-1})f(g)\}\subset \mcI
\end{equation}
together with the $G$-action
\begin{equation}
    \label{eq:induced-representation-G-action}
    [gf](k)=f(g^{-1}k).
\end{equation}
\end{definition}
\begin{remark}
The induced representation $\mcI_\rho$ is naturally equipped with pointwise addition and multiplication by scalars in the underlying field of $V_\rho$ which turns it into a vector space. 
\end{remark}
Now, it so happens that the space of such functions is isomorphic to the space of sections of the associated bundle.
\begin{proposition}[\cite{cohen-theory-equivariant-hom,kolarNaturalOperationsDifferential2013} \S 10.12]
	\label{prop:function-isomorphism}
Given a representation $(\rho, V_\rho)$ of a subgroup $H\subset G$ then there exists an equivariant bijection between $\mcI_\rho$ and $\Gamma(G\times _\rho V_\rho)$.  
\end{proposition}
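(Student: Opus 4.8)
The plan is to produce explicit maps in both directions, check they are mutually inverse, verify $G$-equivariance, and finally upgrade the set-theoretic bijection to the relevant category (continuous, smooth, or $L^2$ sections) by passing to local trivialisations. This is the classical identification referenced above, so the work is mostly bookkeeping.

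First I would define $\Psi\colon\mcI_\rho\to\Gamma(G\times_\rho V_\rho)$ by $\Psi(f)(gH):=[g,f(g)]$, and check well-definedness: replacing the representative $g$ by $gh$ gives $[gh,f(gh)]=[gh,\rho(h^{-1})f(g)]$, and since the equivalence relation defining the associated bundle (\cref{def:associated-vector-bundle}) is $(p,m)\sim(ph,h^{-1}m)$, we have $(g,f(g))\sim(gh,\rho(h^{-1})f(g))$, so the two representatives name the same point; moreover $\pi_A([g,f(g)])=\pi_H(g)=gH$, so $\Psi(f)$ is a genuine section. Conversely I would define $\Phi\colon\Gamma(G\times_\rho V_\rho)\to\mcI_\rho$ using that, for a \emph{fixed} $g\in G$, the fibre over $gH$ contains exactly one representative with first coordinate $g$: if $[g,v]=[g,v']$ then $(g,v)\sim(g,v')$ forces some $h\in H$ with $g=gh$, hence $h=e$ and $v=v'$. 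Setting $\Phi(s)(g)$ to be that unique vector, the identity $s(ghH)=s(gH)$ unwinds to $[gh,\Phi(s)(gh)]=[g,\Phi(s)(g)]=[gh,\rho(h^{-1})\Phi(s)(g)]$, and uniqueness of the representative with first coordinate $gh$ yields the Mackey condition $\Phi(s)(gh)=\rho(h^{-1})\Phi(s)(g)$, so $\Phi(s)\in\mcI_\rho$.

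Next I would note that $\Psi$ and $\Phi$ are mutually inverse, which is immediate: $\Phi(\Psi(f))(g)$ is the unique $v$ with $[g,v]=[g,f(g)]$, namely $f(g)$, and $\Psi(\Phi(s))(gH)=[g,\Phi(s)(g)]=s(gH)$. For $G$-equivariance, recall the action $[gf](k)=f(g^{-1}k)$ on $\mcI_\rho$ (\cref{def:induced-representation}), the total-space action $k[g',v]=[kg',v]$, and the induced action $[\rho_\Gamma(g)s](x)=g\,s(g^{-1}x)$ on sections (\cref{def:G-rep-on-sections-associated-bundle}). Then $\Psi(gf)(kH)=[k,(gf)(k)]=[k,f(g^{-1}k)]$, whereas $[\rho_\Gamma(g)\Psi(f)](kH)=g\cdot\Psi(f)(g^{-1}kH)=g\cdot[g^{-1}k,f(g^{-1}k)]=[k,f(g^{-1}k)]$, so $\Psi$ intertwines the two $G$-actions (and hence so does $\Phi$).

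Finally, to promote this to a bijection of the relevant spaces of sections, I would argue locally: over an open $U\subseteq G/H$ with a local section $s_U\colon U\to G$ as in \cref{ex:homogeneous-principal-bundle}, the section $\Psi(f)$ is represented by $x\mapsto f(s_U(x))$ in the corresponding trivialisation, and the Mackey condition recovers all of $f$ on $\pi_H^{-1}(U)$ from this trace; since the transition functions of $G\times_\rho V_\rho$ are $\rho$ applied to the fibre cocycle $\mathrm{h}$ of \cref{rmk:fibre-twist-map}, the correspondence is compatible with changes of chart and preserves the regularity class. I expect this last step — the careful handling of local sections and transition functions — to be the only genuine subtlety; the algebraic core, namely well-definedness through the associated-bundle equivalence relation and the equivariance computation, is routine.
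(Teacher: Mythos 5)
Your proposal is correct and is essentially the classical identification that the paper (and its cited sources) uses: your coordinate-free maps $\Psi(f)(gH)=[g,f(g)]$ and its inverse $\Phi$ coincide with the paper's $\Lambda^{\uparrow},\Lambda^{\downarrow}$ once a local section $s$ is chosen, since the unique representative of $s(gH)$ with first coordinate $g$ is exactly $\rho(\mathrm{h}(g)^{-1})$ applied to the local fibre representative. The only presentational difference is that you define the bijection globally on the associated bundle and defer the choice of section to the regularity discussion, which cleanly sidesteps the local-triviality caveat the paper flags in a footnote.
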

\Cref{prop:function-isomorphism} is well illustrated using the Cartan mixing diagram as can be seen in \cref{fig:feature-maps-diagram}.
\begin{figure}[h]
\[
\begin{tikzcd}[cramped]
	G && {G\times V_\rho} && {V_\rho} \\
	\\
	{G/H} && {G\times_\rho V_\rho} && {H\backslash V_\rho}
	\arrow["{f_G}", curve={height=-24pt}, dashed, from=1-1, to=1-5]
	\arrow["{\pi_{H}}"{description}, from=1-1, to=3-1]
	\arrow["{\pi_1}"{description}, from=1-3, to=1-1]
	\arrow["{\pi_2}"{description}, from=1-3, to=1-5]
	\arrow["\beta"{description}, from=1-3, to=3-3]
	\arrow["{\pi_{H\backslash}}"{description}, from=1-5, to=3-5]
	\arrow["{f_X}", curve={height=-24pt}, dashed, from=3-1, to=3-3]
	\arrow["{\tau_{\pi_H}}"{description}, from=3-3, to=3-1]
	\arrow["{\tau_{\pi_{H\backslash}}}"{description}, from=3-3, to=3-5]
\end{tikzcd}
\]
\caption{The Cartan mixing diagram for the principal $H$-bundle $G \to G/H$ and the two equivalent representations of data: either  as an element $ f_G $ of the induced representation $ \mathcal{I}_\rho:\, G \,\to\, V_\rho $, or as a section $ f_X \,\in\, \Gamma(G \times _\rho V_\rho)  $ of the associated vector bundle.}
\label{fig:feature-maps-diagram}
\end{figure}

For the proof we refer the reader to the cited works.
However, the isomorphism used in the proof that will be useful later on and so for convenience we repeat it here.
Let $s:G/H\to G$ be a local section, and define two maps: one to lift sections of the associated bundle to the induced representation and one to map back down from the induced representation to sections of the associated bundle. 
Specifically, 
    \begin{align}
        &\Lambda^\uparrow:\Gamma(G\times_\rho V_\rho)\to \mcI_\rho&[\Lambda^\uparrow f_X](g)&=\rho(\mr{h}(g)^{-1})(f_X\circ \pi)(g)\label{eq:function_lift}\\
        &\Lambda^\downarrow:\mcI_\rho\to\Gamma(G\times_\rho V_\rho)&[\Lambda^\downarrow f_G](x)&=(f_G\circ s)(x),\label{eq:function_sink}
    \end{align}
    where $\mr{h}:G\to H$ acting as $\mr{h}(g)=s(gH)^{-1}g$.
    This construction results in $\Lambda^\uparrow=\Lambda^{\downarrow-1}$ and each choice of section $ s $ yields a different, but equivalent, lift.
This means that $\mcI_\rho$ and $\Gamma(G\times_\rho V_\rho)$ are different realisations of the induced representation $\ind^G_H\rho$ of the representation $(\rho, V_\rho)$, and hence we can work equally well with either.

Finally, we introduce some notation for the space of linear equivariant maps between induced representations.
\begin{definition}[Space of intertwiners]
    Let $\mcI_\rho$ and $\mcI_\sigma$ be induced representation of $H'$ on $(\rho,V_\rho)$ and $H$ on $(\sigma, V_\sigma)$ where $H,H'\subseteq G$, then we denote the space of all \emph{linear $G$-equivariant maps} $\Phi:\mcI_\rho\to\mcI_\sigma$ as $\hom_G(\mcI_\rho,\mcI_\sigma)$.
    Elements in $\hom_G(\mc{I}_\rho, \mc{I}_\sigma)$ are referred to as \emph{intertwiners} and are defined in \cref{def:equivariant-maps-intertwiner}. 
\end{definition}

Note the distinction between $\hom_G(\mathcal{I}_\rho,\mathcal{I}_\sigma)$, which consists of all $G$-equivariant linear maps, intertwiners, between $\mathcal{I}_\rho$ and $\mathcal{I}_\sigma$, of which $\Phi$ from the previous paragraph is an element, and $\hom(V_\rho, V_\sigma)$.
The elements of the latter, while being linear maps between two representations, are not assumed to be intertwiners themselves.

\subsection{Compactly supported functions}
In \cref{def:induced-representation} we refer to an arbitrary function space $ \mathcal{I}$ of which the induced representation $ \mathcal{I}_\rho$ is a subspace.
For our purposes the appropriate function space to consider will be the space of compactly supported smooth functions, $ \mathcal{I} = C^\infty_c(G, V)$.

\begin{definition}
\label{def:c-infinity-compact}
	The space $C^\infty_c(G,V)$ of compactly supported smooth functions from $G$ to $V$ is the space of all functions $f: G \to V$ where the closure of the support $ \operatorname{supp}(f) = \{g \in G \mid f(g) \neq 0\}$ is a compact subset of $G$ and and for which all derivatives $f^{(n)}$ are continuously differentiable.
If $(\rho, V_\rho)$ is a representation, then we will let $C^\infty_{c,\rho}(G,V_\rho)$ denote the induced representation $\ind ^G_H(\rho)$ as subset of $C^\infty_c(G,V_\rho)$.
\end{definition}

\begin{remark}
    Specifically, the projection from $ C _{c}^{\infty}(G,V_\rho) $ to $ C _{c,\rho}^{\infty}(G,V_\rho) \subset \mathcal{I}_\rho $ is given by 
    \begin{equation}
        P: C _{c}^{\infty}(G,V_\rho)\to C _{c,\rho}^{\infty}(G,V_\rho),\quad [P f](g)= \int_H \Delta_H(h)\rho(h^{-1})f(gh^{-1})\mathrm{d}h,
    \end{equation}
    where $ \Delta_H $ is the modularity function of $ H $, which is identically one for a unimodular group.
    Additionally, this projection is $ G $-equivariant in that $ [P[kf]](g)=[k[Pf]](g) $.
\end{remark}

Compactly supported functions enjoy many convenient properties in relation to integration, of which the following result will be particularly useful.

\begin{theorem}[Quotient Integral Formula]
	\label{thm:quotient-integral-formula}
	Assume that $G$ is a locally compact group with closed subgroup $H$ and $s: X \to G$ is a global section in the bundle $\pi: G \to X, g \mapsto g x_0$.
	Given Haar measures on $G$ and $H$, there exist a unique choice of measure on the homogeneous space $X \cong G/H$ such that for every function $f \in C^\infty_c(G,V)$ the following integral decomposes as
	\begin{equation}
		\label{eq:quotient-integral-formula}
	\int_G f(g) \mathrm{d}g = \int _X \int_H f(s(x)h) \mathrm{d}h \mathrm{d}x.
	\end{equation}
\end{theorem}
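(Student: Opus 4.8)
The scalar-valued case of this statement is the classical quotient integral formula (Weil's formula), so the plan is to recover it in the stated shape; the $V$-valued case will then follow at once by fixing a basis of $V$ and applying the scalar result to each coordinate of $f$. I would therefore work with $f \in C_c(G)$ (noting $C^\infty_c(G,V) \subseteq C_c(G,V)$), the global section $s$ serving only to pin down coset representatives in $X \cong G/H$.

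The first step is to observe that, for $f \in C_c(G)$, the function $g \mapsto \int_H f(gh)\,\mathrm{d}h$ is constant on the left cosets $gH$: replacing $g$ by $gh_0$ and using the left-invariance of the Haar measure on $H$ yields the same value. Hence it descends to a function $\bar f \colon X \to \mathbb{R}$, and evaluating on the representative $s(x)$ identifies $\bar f(x) = \int_H f(s(x)h)\,\mathrm{d}h =: (Tf)(x)$. One then checks $Tf \in C_c(X)$: continuity follows from the continuity of $s$, joint continuity of left translation on $G$, and the uniform continuity of the compactly supported $f$; and $\operatorname{supp}(Tf) \subseteq \pi(\operatorname{supp} f)$ is compact since $\pi$ is continuous. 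A partition-of-unity argument then shows $T \colon C_c(G) \to C_c(X)$ is \emph{surjective}: given $\phi \in C_c(X)$, pick a non-negative $\psi \in C_c(G)$ with $(T\psi)(x) > 0$ on $\operatorname{supp}\phi$, and set $f(g) := \phi(\pi(g))\,\psi(g)/(T\psi)(\pi(g))$, which lies in $C_c(G)$, satisfies $Tf = \phi$, and may be taken $\ge 0$ when $\phi \ge 0$.

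The heart of the argument is a Riesz-representation step: define $\ell \colon C_c(X) \to \mathbb{R}$ by $\ell(\phi) := \int_G f\,\mathrm{d}g$ for any $f \in C_c(G)$ with $Tf = \phi$. The main obstacle is the \emph{well-definedness} of $\ell$, i.e.\ that $\int_G f\,\mathrm{d}g$ depends on $f$ only through $Tf$ (equivalently $Tf = 0 \Rightarrow \int_G f\,\mathrm{d}g = 0$). Here one expresses $\int_G f\,\mathrm{d}g$ in terms of the fibre integrals via a Bruhat $\rho$-function for the pair $(G,H)$; this $\rho$-function can be taken identically $1$ precisely when the modular functions satisfy $\Delta_G|_H = \Delta_H$ (automatic when $G$ and $H$ are unimodular, the setting of interest), in which case $\int_G f\,\mathrm{d}g$ is manifestly a functional of $Tf$ alone. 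Granting well-definedness, $\ell$ is clearly linear, and positive because any $\phi \ge 0$ is represented by some $f \ge 0$; the Riesz representation theorem then produces a unique Radon measure on $X$, which we name $\mathrm{d}x$, with $\ell(\phi) = \int_X \phi\,\mathrm{d}x$. Consequently, for every $f \in C^\infty_c(G,V)$ one obtains $\int_G f\,\mathrm{d}g = \ell(Tf) = \int_X (Tf)(x)\,\mathrm{d}x = \int_X \int_H f(s(x)h)\,\mathrm{d}h\,\mathrm{d}x$, which is the claimed identity.

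Uniqueness of $\mathrm{d}x$ is then immediate: if two measures $\mu, \mu'$ both satisfy the identity, then $\int_X (Tf)\,\mathrm{d}\mu = \int_X (Tf)\,\mathrm{d}\mu'$ for all $f$, hence $\int_X \phi\,\mathrm{d}\mu = \int_X \phi\,\mathrm{d}\mu'$ for all $\phi \in C_c(X)$ by surjectivity of $T$, so $\mu = \mu'$. The two genuinely non-routine points are the surjectivity of $T$ (a standard but slightly fussy compactness/bump-function argument) and, above all, the well-definedness of $\ell$, where the compatibility $\Delta_G|_H = \Delta_H$ of the modular functions does the real work — and indeed the formula, in the form stated, fails without it.
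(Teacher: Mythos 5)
The paper does not prove this statement itself but defers to the cited reference (Deitmar 2014), and your proposal is a correct reconstruction of exactly that standard argument: the averaging operator $T f(x)=\int_H f(s(x)h)\,\mathrm{d}h$, its surjectivity onto $C_c(X)$ via a bump-function/partition-of-unity construction, well-definedness of the induced positive linear functional, Riesz representation to produce the measure, and uniqueness from surjectivity of $T$. Your observation that well-definedness hinges on the compatibility $\Delta_G|_H = \Delta_H$ of the modular functions --- a hypothesis not written into the theorem statement but supplied by the paper's standing assumption that $G$ is unimodular --- is accurate and correctly identifies the one genuinely delicate point of the proof.
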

\begin{proof}
	See e.g. \citet{Deitmar2014} for proof.
\end{proof}
Note that in non-trivial bundles global sections do not generally exist.
Then $s(x)$ in \cref{eq:quotient-integral-formula} can then be replaced with any $g_x$ such that  $\pi(g_x) = x$, where $g_x$ does not have to depend continuously on $x$.
The reader is referred to \citet{Deitmar2014} for more details.

\section{Vector features and linear equivariant operators}
\label{sec:linear-maps}
In this section we define feature maps and study equivariant operators between them.
On the neural network side, they represents data and equivariant network layers, respectively.
These constructions make use of the mathematical abstractions introduced in the previous chapter.
The content of this chapter is heavily inspired by, and a slight reformulation of, the material on $G$-CNNs in \citet{cohen-theory-equivariant-hom} and \citet{weiler2023EquivariantAndCoordinateIndependentCNNs}.
We present this framework based on linear operators here and, inspired by this, present a generalisation to non-linear maps in \cref{sec:nonlinear-maps}. 
Just as in the mentioned references we will generally work in the setting where $G$ is a unimodular group.

\subsection{Vector features on homogeneous spaces}
\label{sec:features}

We would like to describe  features as maps that assign to the points in the homogeneous space $ G/H $ some feature vector in the vector space $ V $.
Examples include image data, where $X$ is a grid of pixels and $Y$ is the vector space $ \mathbb{R}^3$ of three colour channels, and text data, where $X$ is an index set of integers and $Y$ is the space of all valid characters.\footnote{It is common to embed categorical data in a vector space, e.g., using \texttt{word2vec}.}

Unfortunately, it is not possible to have our features be global functions $ f:\, G/H \,\to\, V $ unless the vector bundle $ E_V \to G/H $ is trivial.\footnote{In order to have a global function requires a global continuous choice of coordinates, a global continuous section $ s:\, G/H \,\to\, G $, and this only exists if the principal bundle $ G \to G/H $ is trival.}
Instead we first need local coordinates which are expressed through a choice of a local section $ s:U \to G $, where $ U $ is a trivialising neighbourhood in $ G/H $. 
Once such a section is chosen we can express the feature map in these local coordinates, and this is generally what is done in implementations.

Working with such local descriptions of features necessitates the use of transition function when moving between areas with different local coordinates.
While this is standard, doing pen and paper work with these are somewhat annoying.
Luckily, as was described in \cref{sec:induced-reps-and-frobenius-reciprocity}, we can use the choice of local coordinates $ s:G/H \to G $ to lift the local features to a global, essentially coordinate free, perspective as elements of the induced representation.
In that setting the transformation of the features are encoded as a $ H $-redundancy through the Mackey condition: $ f(gh)=\rho(h^{-1})f(g) $.

This redundancy makes the features easy to work with from a theoretical standpoint, but if used in a machine learning model this would result in much higher memory use by an order $ \left| H \right|  $.
Hence, it is more memory efficient to work with features in local coordinates.

\begin{definition}[Features on a homogeneous space]
	The feature maps on $G/H$ are defined as the sections of the associated vector bundle $\Gamma(G \times_\rho V_\rho)$.
\end{definition}

\begin{definition}[Features on a group]
	The feature maps on $G$ are defined as the elements in the $ G $-representation space $\mathcal{I}_\rho$ induced from the representation $\rho$ of $ H $ on features $V_\rho$.
    We will mainly consider the case $ \mathcal{I} = C^\infty_{c,\rho}(G,V_\rho)$
    as given in \cref{def:c-infinity-compact}.
\end{definition}
Note that, as established in \cref{prop:function-isomorphism}, our two notions of feature maps are isomorphic.
The feature maps fit into the Cartan mixing diagram introduced in the previous section. 
The relevant Cartan mixing diagram is given in \cref{fig:feature-maps-diagram}.

\subsection{Linear operators on feature vector fields}
\label{sec:lin-ops-on-vec-feats}

We now move onto studying operators which are $G$-equivariant homomorphisms between the function spaces defined in the previous section.

Again, we require that relevant operators are compatible with the group actions on their input and output, which in this case are group actions on (function spaces of) feature vector fields.
These operators model layers in neural networks in the same way as the maps of the previous section model data, 
and we will see that we get two different criteria on the integration kernels: one from requiring equivariance and a compatibility condition coming from the redundancy in Mackey functions.

As noted in \cref{sec:homogeneous-spaces-and-equivariance}, the spaces $ \mathcal{I}_\rho$ and $\Gamma(G \times_H V_\rho)$ of feature maps are both vector spaces.
In the present section it is assumed that all operators are linear maps on these spaces.
However, this linearity assumption is lifted in the subsequent \cref{sec:nonlinear-maps}, which makes up the main contribution of this article.

More specifically, we begin by investigating (not necessarily equivariant) linear maps $\Phi$ between induced representations $\mcI_\rho$ and $\mcI_\sigma$ and derive the fact that these can be written as an integral with a two-argument kernel.
We will see that the space of those two-argument integration kernels can itself be thought of as an induced representation of linear maps between the representations $V_\rho$ and $V_\sigma$. We will later impose the equivariance condition on the map $\Phi:\mcI_\rho\to\mcI_\sigma$ in \cref{sec:equivariant-lin-operators}. 
This will allow us to define the kernels on a sequence of domains with a rich geometrical structure.

The material of this section concerns the $G$-equivariant linear maps defined in \cite{cohen-theory-equivariant-hom} and repeated in \cref{def:gcnn-layer}.
We add additional context and detail not present in \cite{cohen-theory-equivariant-hom} on the space on the kernels (see \cref{lemma:induced-image-constraint,lemma:kernel-redundancy}), their domains (see, e.g., the text below \cref{prop:double-coset-kernels} and the text around \cref{fig:kernel-on-GxG-factors}), and how they connect into the same general framework as the feature maps (see \cref{cor:kernel-isomorphism}).

To be able to develop the mathematical side of our results, we will have to make some technical assumptions on the feature maps $f$.
In particular, we restrict attention to the space $C^\infty_c(G, V)$ of smooth maps of compact support between $G$ and $V$.
This restriction is natural, as in applications, the feature maps are often zero outside some finite region.
We will also assume that the group $G$ is locally compact.
These restrictions are in line with other works of similar focus, see e.g. \citet{aronssonHomogeneousVectorBundles2022} and \citet{jenner2021steerable}.
Integration kernels, which will play a central role in this section, then naturally lie in the dual space $ \mathcal{D}'(G,V) = (C^\infty_c(G,V))'$, i.e. the space of vector-valued distributions.

In this and subsequent sections, we will let $ C^\infty_{c,\rho}(G, V_\rho)$ and $C^\infty_{c,\sigma}(G, V_\sigma)$ denote the subspaces of induced representations in $C^\infty_{c}(G, V_\rho)$ and $C^\infty_c(G, V_\sigma)$ respectively.
This is in line with the notation introduced in \cref{def:induced-representation}.
Note that the restriction to compactly supported feature maps preserves the bijection to (compactly supported) sections of an associated vector bundle presented in \cref{prop:function-isomorphism}.
On this form, we can study which restrictions on the kernel $\kappa$ are necessary for the resulting operators to have desired properties.
In particular, we want the operator to preserve the condition $f(gh) = \rho(h^{-1})f(g)$, i.e. map induced representations to other induced representations.
However we do allow the representations to be induced from two different subgroups $H,H' \leq G$.

Letting the subgroups to be different allows to study the more general case of $G$ equivariant maps between $\ind^G_{H}(\sigma)$ and $\ind^G_{H'}(\rho)$.
In applications, however, the most common case is that one takes the subgroups to be the same.

\begin{lemma}
	\label{lemma:induced-image-constraint}
    Consider a subgroup $H \subseteq G$ and a representation $\sigma$ of $H$ on a vector space $V_\sigma$ and a function $f \in C^\infty_c(G, V')$, where $V'$ does not necessarily need to be a representation.
    Then the function $G\to V_\sigma$ given by
    \begin{equation}
        g \mapsto \int_G \kappa(g, g') f(g') \mathrm{d}g'
    \end{equation}
    lies in the induced representation $\mathcal{I}_\sigma$ if and only if the map $\kappa:G\times G\to\hom(V',V_\sigma)$ satisfies $\kappa(gh, g') = \sigma(h^{-1})\kappa(g, g')$ for all $h \in H$.
\end{lemma}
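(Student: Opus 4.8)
The plan is to prove the two implications separately: the ``if'' direction by a one-line substitution, and the ``only if'' direction by a test-function (fundamental lemma of the calculus of variations) argument.

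For the ``if'' direction I would assume $\kappa(gh, g') = \sigma(h^{-1})\kappa(g, g')$ for all $h \in H$ and all $g, g' \in G$, set $F(g) := \int_G \kappa(g, g') f(g')\,\mathrm{d}g'$, and simply compute, for $h \in H$,
\[
F(gh) = \int_G \kappa(gh, g') f(g')\,\mathrm{d}g' = \int_G \sigma(h^{-1})\kappa(g, g') f(g')\,\mathrm{d}g' = \sigma(h^{-1}) \int_G \kappa(g, g') f(g')\,\mathrm{d}g' = \sigma(h^{-1}) F(g),
\]
using that $\sigma(h^{-1})$ is linear and constant in the integration variable $g'$, so it passes through the integral. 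This shows $F$ satisfies the defining Mackey condition of the induced representation (\cref{def:induced-representation}), i.e.\ $F \in \mathcal{I}_\sigma$; note this direction works for each fixed $f$.

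For the ``only if'' direction I would read the hypothesis as: $\Phi_\kappa f := \int_G \kappa(\cdot, g') f(g')\,\mathrm{d}g'$ lies in $\mathcal{I}_\sigma$ for \emph{every} $f \in C^\infty_c(G, V')$ --- this is the natural reading, since for a single $f$ (e.g.\ $f \equiv 0$) the statement would be vacuous. Fix $g \in G$, $h \in H$ and put $D(g') := \kappa(gh, g') - \sigma(h^{-1})\kappa(g, g') \in \hom(V', V_\sigma)$. The Mackey condition on $\Phi_\kappa f$, after moving the constant operator $\sigma(h^{-1})$ inside the integral, gives $\int_G D(g') f(g')\,\mathrm{d}g' = 0$ for all $f \in C^\infty_c(G, V')$. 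Taking $f$ of the separated form $f(g') = \varphi(g')\, v$ with $\varphi \in C^\infty_c(G)$ scalar-valued and $v \in V'$ arbitrary, this becomes $\int_G \varphi(g')\, D(g') v\,\mathrm{d}g' = 0$ for all such $\varphi$; concentrating $\varphi$ near an arbitrary point and invoking the fundamental lemma of the calculus of variations yields $D(g') v = 0$ for (Haar-almost, hence by continuity of $\kappa$ all) $g'$, and letting $v$ run over a basis of $V'$ gives $D \equiv 0$, that is $\kappa(gh, g') = \sigma(h^{-1})\kappa(g, g')$. Since $g$ and $h$ were arbitrary, this is exactly the claimed constraint.

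The main obstacle is purely one of rigor in the converse: one must (i) fix the quantifier on $f$ (it has to range over all of $C^\infty_c(G, V')$) and (ii) have enough regularity of $\kappa$ --- continuity, or at least local integrability in $g'$ --- to turn the vanishing of all these integrals into a pointwise identity rather than merely an almost-everywhere one. If one instead works with genuinely distributional kernels, as the surrounding discussion allows, the same computation gives $D = 0$ as an equality in $\mathcal{D}'\!\left(G, \hom(V', V_\sigma)\right)$, which is precisely the statement one wants; everything else is routine bookkeeping with the Haar measure and linearity of $\sigma$.
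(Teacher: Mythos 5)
Your proof is correct and follows essentially the same route as the paper's: the ``if'' direction is the same one-line substitution, and your ``only if'' direction simply makes precise the paper's terse step ``since the function $f$ is arbitrary, we conclude $\kappa(gh,g')=\sigma(h^{-1})\kappa(g,g')$ for almost all $g'$'' by testing against separated functions $\varphi(g')v$ and invoking the fundamental lemma. Your remarks on the quantifier over $f$ and on the almost-everywhere versus pointwise issue match the paper's own caveat, so there is nothing to correct.
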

\begin{remark}
    Although this lemma is presented in a quite general setting, the most common case here is that $f$ is in the induced representation $\mathcal{I}_\rho$ of the representation $(\rho,V_\rho)$ of $H'$ since we want to in the end study the $G$-equivariant maps between induced representations.
\end{remark}
\begin{proof}
    By definition we have $\int_G \kappa(g, g') f(g') \mathrm{d}g' \in \mathcal{I}_\sigma$ if and only if
    \begin{equation}
    \begin{aligned}
	    \int_G \kappa(gh, g') f(g') \mathrm{d}g' &= \sigma(h^{-1})\int_G \kappa(g, g') f(g') \mathrm{d}g' \\
						     &= \int_G \sigma(h^{-1})\kappa(g, g') f(g') \mathrm{d}g'
    \end{aligned}
    \end{equation}
    for all $h \in H, g \in G$.
    Since the function $f$ is arbitrary, we conclude
     \begin{equation}
	     \kappa(gh, g') = \sigma(h^{-1})\kappa(g, g')
    \end{equation}
    for all $g \in G, h \in H$ and almost all $g' \in G$.
\end{proof}

The next result shows that if the function $f$ in the integrand lies in the induced representation then there exist a redundancy in the space of kernels.
We can remove this redundancy by picking representatives from each equivalence class of kernels.
\begin{lemma}
	\label{lemma:kernel-redundancy}
	Consider subgroups $H, H' \leq G$, representations $\rho, \sigma$ of $H', H$ on the vector spaces $V_\rho, V_\sigma$, respectively, and the induced representation $C^\infty_{c,\rho}(G, V_\rho)$ of $H'$.
    Let $\kappa, \kappa': G \times G \to \hom(V_\rho, V_\sigma)$, define an equivalence condition on kernels
    \begin{equation}
	    \label{eq:equivalent-kernels}
        \kappa \sim \kappa' \iff \int_{H'} \kappa(g, g'h)\rho(h^{-1}) \mathrm{d}h = \int_{H'} \kappa'(g, g'h)\rho(h^{-1}) \mathrm{d}h
    \end{equation}
    and consider the equivalence classes $[\kappa]$ generated by this relation.
    Then all kernels in the same class $[\kappa]$ define the same integral operator
    \begin{equation}
        f(g) \mapsto \int_G\kappa(g, g')f(g') \mathrm{d}g'
    \end{equation}
    on elements $f \in C^\infty_{c,\rho}(G, V_\rho)$ of the induced representation, and in each equivalence class $[\kappa]$ there exist exactly one element $\kappa_0$ such that
    \begin{equation}
    \kappa_{0}(g, g'h) = \kappa_{0}(g, g')\rho(h).
    \end{equation}
\end{lemma}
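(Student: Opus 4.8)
The plan is to verify the three assertions of the lemma in turn. First, the claim that equivalent kernels define the same operator on $C^\infty_{c,\rho}(G,V_\rho)$. Take $f \in C^\infty_{c,\rho}(G,V_\rho)$ so that $f(g'h) = \rho(h^{-1})f(g')$ for $h \in H'$. The idea is to decompose the integral $\int_G \kappa(g,g')f(g')\mathrm{d}g'$ using the Quotient Integral Formula (\cref{thm:quotient-integral-formula}) with respect to the subgroup $H'$ and a section $s':G/H' \to G$: this gives $\int_{G/H'}\int_{H'}\kappa(g,s'(x)h)f(s'(x)h)\mathrm{d}h\,\mathrm{d}x = \int_{G/H'}\int_{H'}\kappa(g,s'(x)h)\rho(h^{-1})f(s'(x))\mathrm{d}h\,\mathrm{d}x$, where I used the Mackey condition on $f$. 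The inner $H'$-integral is exactly the object appearing in the equivalence relation \cref{eq:equivalent-kernels} (after substituting $g' = s'(x)$), so if $\kappa \sim \kappa'$ the two integrands agree for almost every $x$ and hence the operators coincide. One should double-check unimodularity / modular-function factors here — since the paper works with $G$ unimodular and uses the stated Quotient Integral Formula, I expect these to be harmless, but this bookkeeping is where care is needed.

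Second, existence of a canonical representative $\kappa_0$ in each class satisfying $\kappa_0(g,g'h) = \kappa_0(g,g')\rho(h)$. The natural candidate is obtained by ``averaging against $\rho$'': set
\begin{equation}
    \kappa_0(g,g') := \int_{H'} \kappa(g,g'h)\rho(h)\,\mathrm{d}h,
\end{equation}
assuming $H'$ compact (or more generally handling the non-compact case by inserting the appropriate cutoff / using that the expression appearing in \cref{eq:equivalent-kernels} is well-defined — this mirrors the projector $P$ in the remark after \cref{def:c-infinity-compact}). One then checks directly that $\kappa_0(g,g'h_0) = \int_{H'}\kappa(g,g'h_0h)\rho(h)\,\mathrm{d}h = \int_{H'}\kappa(g,g'h')\rho(h_0^{-1}h')\,\mathrm{d}h' = \kappa_0(g,g')\rho(h_0)$ by the substitution $h' = h_0 h$ and invariance of Haar measure on $H'$. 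It also needs to be checked that $\kappa_0 \sim \kappa$, i.e. that $\int_{H'}\kappa_0(g,g'h)\rho(h^{-1})\mathrm{d}h = \int_{H'}\kappa(g,g'h)\rho(h^{-1})\mathrm{d}h$; expanding the left side and reindexing the double $H'$-integral should reduce it to the right side (up to a normalisation of Haar measure on $H'$, which one fixes at the outset).

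Third, uniqueness of $\kappa_0$ within its class. Suppose $\kappa_0, \kappa_0'$ both lie in $[\kappa]$ and both satisfy the right-equivariance $\kappa_0(g,g'h)=\kappa_0(g,g')\rho(h)$. Then the defining integral in \cref{eq:equivalent-kernels} applied to each collapses: $\int_{H'}\kappa_0(g,g'h)\rho(h^{-1})\mathrm{d}h = \int_{H'}\kappa_0(g,g')\rho(h)\rho(h^{-1})\mathrm{d}h = \kappa_0(g,g')\,\mathrm{vol}(H')$, and likewise for $\kappa_0'$; since $\kappa_0 \sim \kappa_0'$ these are equal, so $\kappa_0 = \kappa_0'$ (the $\mathrm{vol}(H')$ factor being the reason to normalise Haar measure on $H'$, or to restrict to compact $H'$, at the start).

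The main obstacle I anticipate is not conceptual but analytic: making the $H'$-averaging integrals and the Quotient Integral Formula rigorous when $H'$ is non-compact and $\kappa$ is merely a distribution rather than a function (as flagged in the surrounding text, kernels live in $\mathcal{D}'(G,V)$). In that generality ``$\int_{H'}\kappa(g,g'h)\rho(h)\mathrm{d}h$'' must be interpreted distributionally, and one has to ensure the resulting object is still a well-defined element of the kernel space and that the formal manipulations (reindexing, Fubini) are licensed. For a first pass I would carry out the argument assuming $H'$ compact and $\kappa$ continuous, then remark that the general case follows by the same distributional bookkeeping used elsewhere in the paper (e.g. around the projector $P$).
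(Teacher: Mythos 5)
Your first part (equivalent kernels give the same operator, via the Quotient Integral Formula and the Mackey condition on $f$) and your uniqueness argument (the defining integral collapses to $\kappa_0(g,g')\operatorname{vol}(H')$ for a right-covariant representative) are both correct and follow essentially the same route as the paper.

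However, your existence step contains a genuine error: the candidate $\kappa_0(g,g') := \int_{H'}\kappa(g,g'h)\rho(h)\,\mathrm{d}h$ does \emph{not} satisfy $\kappa_0(g,g'h_0)=\kappa_0(g,g')\rho(h_0)$ in general. After your substitution $h'=h_0h$ you obtain
\begin{equation}
  \kappa_0(g,g'h_0)=\int_{H'}\kappa(g,g'h')\,\rho(h_0^{-1})\rho(h')\,\mathrm{d}h',
\end{equation}
and the factor $\rho(h_0^{-1})$ is trapped between the operator-valued $\kappa(g,g'h')$ and $\rho(h')$; it cannot be commuted past $\rho(h')$ to the right unless the image of $\rho$ is abelian. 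Your claimed final equality implicitly performs exactly this illegal commutation. The same defect means your candidate generally fails the check $\kappa_0\sim\kappa$ that you deferred. The correct weight is $\rho(h^{-1})$: with $\kappa_0(g,g'):=\int_{H'}\kappa(g,g'h)\rho(h^{-1})\,\mathrm{d}h$ the substitution $h'=h_0h$ gives $\rho(h^{-1})=\rho(h'^{-1}h_0)=\rho(h'^{-1})\rho(h_0)$, so $\rho(h_0)$ sits in the rightmost position and factors out, yielding $\kappa_0(g,g'h_0)=\kappa_0(g,g')\rho(h_0)$. Note that this corrected formula is exactly the integrand of the equivalence relation \cref{eq:equivalent-kernels}, which is how the paper obtains it: rather than guessing a candidate, it shows that any right-covariant $\kappa_0\sim\kappa$ must equal $\int_{H'}\kappa(g,g'h)\rho(h^{-1})\,\mathrm{d}h$, giving existence and uniqueness in one computation and making $\kappa_0\sim\kappa$ automatic. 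Your remarks on normalising the Haar measure on $H'$ and on the distributional caveats are appropriate and consistent with the paper's (implicit) assumptions.
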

\begin{proof}
	Given Haar measures $ \mathrm{d}h'$ on $H'$ and $ \mathrm{d}g'$ on $G$, \cref{thm:quotient-integral-formula} shows that there exist a unique measure on $X \isom G/H$ such that
	\begin{equation}
		\int_G\kappa(g, g')f(g') \mathrm{d}g' = \int_X \left[ \int_{H'} \kappa(g, s(x)h')f(s(x)h') \mathrm{d}h' \right] \mathrm{d}x
	\end{equation}
	for some $s: X \rightarrow G$ which picks a representative from each equivalence class $[g]$ associated to $x \in X$.
	The fact that $f(gh) = \rho(h^{-1})f(g)$ implies that $f$ is in fact independent of $h'$ and can be moved out of the inner integral
	\begin{equation}
		\int_X \left[ \int_{H'} \kappa(g, s(x)h')f(s(x)h') \mathrm{d}h' \right] \mathrm{d}x
		= \int_X \left[ \int_{H'} \kappa(g, s(x)h')\rho(h'^{-1})  \mathrm{d}h' \right] f(s(x)) \mathrm{d}x.
	\end{equation}
	This shows that two kernels $\kappa, \kappa'$ equivalent under the relation in \cref{eq:equivalent-kernels} yield the same operator.
	It is straightforward to show that the relation $\sim$ has the three defining properties of an equivalence relation: reflexivity, symmetry and transistivity.
	Further, if $\kappa_0 \sim \kappa$, then $\kappa_0(g, g'h) = \kappa(g,g')\rho(h)$ if and only if
	\begin{equation}
		\begin{aligned}
			\int_{H'} \kappa(g, g'h)\rho(h^{-1}) \mathrm{d}h &= \int_{H'} \kappa_0(g, g'h)\rho(h^{-1}) \mathrm{d}h \\
									 &= \int_{H'} \kappa_0(g, g')\rho(h)\rho(h^{-1}) \mathrm{d}h \\
									 &= \kappa_0(g, g') \int_{H'} \mathrm{d}h \\
									 &= \kappa_0(g, g').
		\end{aligned}
	\end{equation}
	Hence, the map $\kappa \mapsto \kappa_0$ to the representative in $[\kappa]$ is given by
	\begin{equation}
	\kappa_0(g, g') =\int_{H'} \kappa(g, g'h)\rho(h^{-1}) \mathrm{d}h.
	\end{equation}
\end{proof}
The redundancy in kernel space formalised by \cref{lemma:kernel-redundancy} is an artifact of the lift from the homogeneous space $X \isom G/H$ to the full group $G$.
And just like with the induced representations, we can fix a representative in the equivalence class $[\kappa]$ by fixing the values of $\kappa$ on $H$-orbits using a representation $\rho$.

The two lemmas inform our definition of kernels of interest, which we choose such that they are in bijection with linear operators on induced representations.
\begin{definition}[Two argument integral kernel]\label{def:two-arg-int-kernel}
Consider two, not necessarily distinct, subgroups $H,H' \leq G$.
    Let $(\rho, V_\rho)$  and $(\sigma, V_\sigma)$ be two representations of $H'$ and of $H$ respectively.
    Then we define the space of two argument kernels as
    \begin{equation}\label{eq:two-arg-kernel-constraint}
        \mc{K}=\{\kappa:G\times G\to \hom(V_\rho,V_\sigma)\ |\ \kappa(gh,g'h')=\sigma(h^{-1})\kappa(g,g')\rho(h'),\ \forall g,g'\in G, h\in H, h'\in H'\}.
    \end{equation}
\end{definition}

\begin{theorem}
	\label{thm:non-equivariant-kernels}
    Consider two representations $(\rho, V_\rho)$ of $H'$ and $(\sigma, V_\sigma)$ of $H$ for two subgroups $H, H' \leq G$.
	A linear operator $\Phi\in \hom( C^\infty_{c,\rho}(G, V_\rho), C^\infty_{c,\sigma}(G, V_\sigma))$ between the spaces of compactly supported smooth induced representations can always be written as an integral operator
    \begin{equation}\label{eq:two-argument-kernel-integral}
		[\Phi f](g) = \int_G \kappa(g, g') f(g') \mathrm{d}g'
	\end{equation}
    for some integral kernel $\kappa\in \mc{K}$.
	Furthermore, the space of such linear operators and the space of such kernels are isomorphic.
\end{theorem}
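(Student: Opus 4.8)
The plan is to reduce the theorem to the two lemmas just proved (\cref{lemma:induced-image-constraint} and \cref{lemma:kernel-redundancy}) plus one external input, the Schwartz kernel (nuclear) theorem. The one technical point to handle at the outset is that $\Phi$ is given only on the subspace $C^\infty_{c,\rho}(G,V_\rho)\subset C^\infty_c(G,V_\rho)$: I would precompose with the equivariant projector $P$ from the remark following \cref{def:c-infinity-compact} to obtain a continuous linear operator $\tilde\Phi=\Phi\circ P:C^\infty_c(G,V_\rho)\to C^\infty_c(G,V_\sigma)$, which restricts to $\Phi$ on $C^\infty_{c,\rho}(G,V_\rho)$ (since $P$ is the identity there) and whose image still lies in $C^\infty_{c,\sigma}(G,V_\sigma)$. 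As $C^\infty_c(G,V_\rho)$ is nuclear, the Schwartz kernel theorem applied to $\tilde\Phi$ yields a kernel $\tilde\kappa\in\mathcal{D}'(G\times G,\hom(V_\rho,V_\sigma))$ with $[\tilde\Phi f](g)=\int_G\tilde\kappa(g,g')f(g')\,\mathrm{d}g'$, the integral read as a distributional pairing. (Continuity of $\Phi$ is implicit in the distributional framework set up in this section.)

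Next I would impose the two symmetry constraints. Because $\operatorname{im}\tilde\Phi\subseteq\mathcal{I}_\sigma$, \cref{lemma:induced-image-constraint} forces $\tilde\kappa(gh,g')=\sigma(h^{-1})\tilde\kappa(g,g')$ for all $h\in H$. The kernel $\tilde\kappa$ need not be $\rho$-equivariant in its second argument, but \cref{lemma:kernel-redundancy} provides a unique representative $\kappa\in[\tilde\kappa]$ with $\kappa(g,g'h')=\kappa(g,g')\rho(h')$; since the $H'$-averaging $\kappa(g,g')=\int_{H'}\tilde\kappa(g,g'h')\rho(h'^{-1})\,\mathrm{d}h'$ touches only the second slot, the first-slot identity $\kappa(gh,g')=\sigma(h^{-1})\kappa(g,g')$ is preserved, so $\kappa$ satisfies the full constraint of \cref{def:two-arg-int-kernel}, i.e.\ $\kappa\in\mathcal{K}$. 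By \cref{lemma:kernel-redundancy}, $\kappa$ and $\tilde\kappa$ define the same operator on $C^\infty_{c,\rho}(G,V_\rho)$, so $[\Phi f](g)=\int_G\kappa(g,g')f(g')\,\mathrm{d}g'$ as claimed.

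For the isomorphism, consider the linear map $\kappa\mapsto\Phi_\kappa$, $[\Phi_\kappa f](g)=\int_G\kappa(g,g')f(g')\,\mathrm{d}g'$, from $\mathcal{K}$ into $\hom(C^\infty_{c,\rho}(G,V_\rho),C^\infty_{c,\sigma}(G,V_\sigma))$. It is well defined: the first-slot $\sigma$-equivariance and \cref{lemma:induced-image-constraint} put the output in $\mathcal{I}_\sigma$, and the output inherits smoothness and (using the quotient integral formula to localise the $g'$-integration to $G/H'$) compact support, hence lands in $C^\infty_{c,\sigma}(G,V_\sigma)$. Surjectivity is the construction above. Injectivity is where the definition of $\mathcal{K}$ is used: the condition $\kappa(g,g'h')=\kappa(g,g')\rho(h')$ built into $\mathcal{K}$ is precisely the canonical-representative condition of \cref{lemma:kernel-redundancy}, so distinct elements of $\mathcal{K}$ lie in distinct classes $[\kappa]$; and distinct classes give distinct operators, since running the quotient integral computation from the proof of \cref{lemma:kernel-redundancy} and using that $x\mapsto f(s(x))$ ranges over all of $C^\infty_c(X,V_\rho)$, the operator determines $\int_{H'}\kappa(g,s(x)h')\rho(h'^{-1})\,\mathrm{d}h'$ for almost every $x$, hence determines $[\kappa]$.

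The main obstacle is analytic bookkeeping rather than conceptual content: the Schwartz kernel is a priori only a distribution, so each pointwise manipulation above — the restrictions to the loci $g'h'$ and $gh$, the $H'$-average, and the appeals to \cref{lemma:induced-image-constraint,lemma:kernel-redundancy} — must be carried out at the distributional level, as pullbacks and pushforwards along the relevant maps $G\times G\times H\to G\times G$ (which are submersions, so these operations are legitimate). One must also check that $P$ is genuinely continuous and that ``compactly supported'' is interpreted as compact support downstairs on $G/H$ when $H$ is noncompact, so that all the integrals converge and the function spaces are the intended ones; these points are routine but worth stating.
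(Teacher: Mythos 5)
Your proposal is correct and follows essentially the same route as the paper, which itself only sketches the argument as ``Schwartz Kernel Theorem combined with \cref{lemma:induced-image-constraint,lemma:kernel-redundancy}'' and defers the details to \citet{aronssonHomogeneousVectorBundles2022}; you have simply filled in those details (the extension via $P$, the $H'$-averaging to the canonical representative, and the injectivity/surjectivity bookkeeping) along the intended lines. The only point worth flagging is that $P$ restricting to the identity on $C^\infty_{c,\rho}(G,V_\rho)$ uses $\int_{H'}\mathrm{d}h'=1$, i.e.\ the same normalised-compact-subgroup convention the paper already relies on in \cref{lemma:kernel-redundancy}.
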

\begin{proof}
	In broad terms, the fact that $\Phi$ can be written on integral form follows from the Schwartz Kernel Theorem in combination with \cref{lemma:induced-image-constraint,lemma:kernel-redundancy}.
	The isomorphism as vector spaces follows from the linearity of the integral $\int_G$.
    For details of the proof we refer the reader to \citet{aronssonHomogeneousVectorBundles2022}.
\end{proof}

\begin{remark}
    Despite the $H, H'$-compatibility condition in \cref{eq:two-arg-kernel-constraint}, $\Phi$ is not assumed or implied to be equivariant but remains an unconstrained linear map.
    Its role is merely to ensure that $\Phi f$ respects the Mackey condition on induced representation in \cref{eq:induced-representation} if $f$ does.
\end{remark}

One might spot similarities between the $H,H'$-compatibility contraint on $\kappa$ and the equivariance constraint on the induced representation of \cref{eq:mackey-criteria}.
This turns out to be more than just a similarity; the kernels as defined in \cref{def:two-arg-int-kernel} are in fact induced representations themselves.
To see this we first need to define the representation on the $\hom$ space.

\begin{definition}[Representation on the hom space]\label{def:hom-representation}
    Given $L\in \hom(V_\rho,V_\sigma) \cong V_\sigma\otimes V_\rho^*$, we define the representation $\sigma\otimes\rho^*$ of $H\times H'\leq G\times G$ on $L$ as\footnote{This is similar to the definition of kernel fields as $ M $-morphisms in \cite{weiler2023EquivariantAndCoordinateIndependentCNNs}, see definition 12.2.1.}
\begin{equation}
    (\sigma\otimes \rho^*)(h,h')L=\sigma(h)L\rho(h'^{-1}).
\end{equation}
\end{definition}
With this definition we can now make the precise statement.
\begin{proposition}\label{prop:two-arg-kernel-is-induced-rep}
	Given the representation $(\sigma\otimes\rho^*, \hom(V_\rho,V_\sigma))$ we have
    \begin{equation}
        \mc{I}_{\sigma\otimes\rho^*}\isom \mc{K}.
    \end{equation}
    That is, the space of two argument integral kernels as defined in \cref{def:two-arg-int-kernel} is the induced representation of $(\sigma\otimes\rho^*,\hom(V_\rho,V_\sigma))$.
\end{proposition}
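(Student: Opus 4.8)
The plan is to recognise that $\mc{K}$ is, essentially by construction, the space of Mackey functions for the induced representation of the \emph{product} group $G\times G$ from the \emph{product} subgroup $H\times H'\leq G\times G$, so that the asserted isomorphism is realised by the identity map on the underlying function space. Concretely, first I would instantiate \cref{def:induced-representation} with $G\times G$ in place of $G$, with $H\times H'$ in place of $H$, and with the representation $(\sigma\otimes\rho^*,\hom(V_\rho,V_\sigma))$ of \cref{def:hom-representation} in place of $(\rho,V_\rho)$. This presents
\[
\mc{I}_{\sigma\otimes\rho^*}=\bigl\{\kappa:G\times G\to\hom(V_\rho,V_\sigma)\ \big|\ \kappa\bigl((g,g')(h,h')\bigr)=(\sigma\otimes\rho^*)\bigl((h,h')^{-1}\bigr)\kappa(g,g'),\ \forall (h,h')\in H\times H'\bigr\},
\]
equipped with the action $[(k,k')\kappa](g,g')=\kappa\bigl((k,k')^{-1}(g,g')\bigr)=\kappa(k^{-1}g,k'^{-1}g')$ dictated by \cref{eq:induced-representation-G-action}.

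Next I would unpack the Mackey condition. Using $(g,g')(h,h')=(gh,g'h')$ and $(h,h')^{-1}=(h^{-1},h'^{-1})$ together with \cref{def:hom-representation}, the right-hand side becomes $(\sigma\otimes\rho^*)(h^{-1},h'^{-1})\kappa(g,g')=\sigma(h^{-1})\,\kappa(g,g')\,\rho\bigl((h'^{-1})^{-1}\bigr)=\sigma(h^{-1})\,\kappa(g,g')\,\rho(h')$. Hence the constraint defining $\mc{I}_{\sigma\otimes\rho^*}$ reads $\kappa(gh,g'h')=\sigma(h^{-1})\kappa(g,g')\rho(h')$ for all $h\in H$, $h'\in H'$, which is verbatim the constraint \cref{eq:two-arg-kernel-constraint} defining $\mc{K}$. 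So the two spaces have literally the same underlying set of functions.

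It then remains to check that this identification is an isomorphism of $G\times G$-representations, i.e.\ that it is linear and equivariant. Linearity is immediate since both spaces carry pointwise vector-space structure. Equivariance reduces to observing that the $G\times G$-action on two-argument kernels — the one implicit in \cref{def:two-arg-int-kernel} and used when one later imposes $G$-equivariance on $\Phi$, cf.\ \cref{sec:equivariant-lin-operators} — is precisely $[(k,k')\kappa](g,g')=\kappa(k^{-1}g,k'^{-1}g')$, which coincides with the induced-representation action written above, so the identity map intertwines them. As a preliminary sanity check I would also record that $(\sigma\otimes\rho^*,\hom(V_\rho,V_\sigma))$ really is a representation of $H\times H'$: $(\sigma\otimes\rho^*)(h_1,h_1')(\sigma\otimes\rho^*)(h_2,h_2')L=\sigma(h_1h_2)L\rho\bigl((h_1'h_2')^{-1}\bigr)=(\sigma\otimes\rho^*)(h_1h_2,h_1'h_2')L$, which is the only compatibility needed to legitimately invoke \cref{def:induced-representation} for the pair $(G\times G,H\times H')$.

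The main obstacle is purely bookkeeping rather than analytic: one must keep the inverse in the Mackey condition, the ordering of the factors in $H\times H'$, and the placement of $\sigma$ versus $\rho$ on the left versus the right of $L$ all mutually consistent, so that the unpacked condition matches \cref{eq:two-arg-kernel-constraint} on the nose and not up to a transpose or an inversion. Once the product-group set-up is written down, the statement is a definitional identification with no further content.
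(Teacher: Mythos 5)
Your proposal is correct and is essentially the intended argument: the paper itself defers the proof to \citet{cohen-theory-equivariant-hom}, where the result is obtained by exactly this identification of the two-argument kernel constraint with the Mackey condition for $\ind_{H\times H'}^{G\times G}(\sigma\otimes\rho^*)$. Your unpacking of $(\sigma\otimes\rho^*)\bigl((h,h')^{-1}\bigr)$ against \cref{def:hom-representation}, and the check that the $G\times G$-action $[(k,k')\kappa](g,g')=\kappa(k^{-1}g,k'^{-1}g')$ is the one the paper later uses in \cref{sec:equivariant-lin-operators}, supply precisely the bookkeeping the citation leaves implicit.
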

\begin{proof}
    For proof see \cite{cohen-theory-equivariant-hom}.
\end{proof}

As we have now established that the kernels of linear integral operators are induced representations, we can use \cref{prop:function-isomorphism} to write down an equivalent space in terms of sections in a bundle.
\begin{corollary}\label{cor:kernel-isomorphism}
	The space of integral kernels of linear operators are isomorphic to the space of section of the associated vector bundle
	\begin{equation}
		(G \times G) \times _{\sigma\otimes\rho^*} \hom(V_\rho,V_\sigma) \rightarrow (G \times G)/(H \times H').
	\end{equation}
\end{corollary}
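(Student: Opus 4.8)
The plan is to assemble the statement from isomorphisms already established, by applying \cref{prop:function-isomorphism} to the product group $G\times G$. First I would recall the chain we are building on: by \cref{thm:non-equivariant-kernels} the space of linear operators $\hom\big(C^\infty_{c,\rho}(G,V_\rho),C^\infty_{c,\sigma}(G,V_\sigma)\big)$ is isomorphic to the space $\mc{K}$ of two-argument kernels of \cref{def:two-arg-int-kernel}, and by \cref{prop:two-arg-kernel-is-induced-rep} we have $\mc{K}\isom\mc{I}_{\sigma\otimes\rho^*}$. So it remains only to identify the induced representation $\mc{I}_{\sigma\otimes\rho^*}$ with the section space of the claimed associated bundle and then to compose the three isomorphisms.

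Next I would set up \cref{prop:function-isomorphism} with the substitutions $G\leadsto G\times G$, $H\leadsto H\times H'$ and $(\rho,V_\rho)\leadsto(\sigma\otimes\rho^*,\hom(V_\rho,V_\sigma))$, checking the hypotheses: a product of locally compact groups is locally compact, $H\times H'$ is a closed subgroup of $G\times G$, and $(\sigma\otimes\rho^*,\hom(V_\rho,V_\sigma))$ is a bona fide representation of $H\times H'$ by \cref{def:hom-representation}. The one point worth spelling out is that the kernel constraint $\kappa(gh,g'h')=\sigma(h^{-1})\kappa(g,g')\rho(h')$ from \cref{eq:two-arg-kernel-constraint} is exactly the Mackey condition $\kappa\big((g,g')(h,h')\big)=(\sigma\otimes\rho^*)\big((h,h')^{-1}\big)\kappa(g,g')$ for a $\hom(V_\rho,V_\sigma)$-valued function on $G\times G$, which is why $\mc{K}=\mc{I}_{\sigma\otimes\rho^*}=\ind^{G\times G}_{H\times H'}(\sigma\otimes\rho^*)$. \cref{prop:function-isomorphism} then yields an equivariant bijection $\mc{I}_{\sigma\otimes\rho^*}\isom\Gamma\big((G\times G)\times_{\sigma\otimes\rho^*}\hom(V_\rho,V_\sigma)\big)$ over the homogeneous space $(G\times G)/(H\times H')$; if I wanted this explicitly I would build the lift/sink maps $\Lambda^\uparrow,\Lambda^\downarrow$ of \cref{eq:function_lift,eq:function_sink} from a local section $(s,s')$ of $G\times G\to(G\times G)/(H\times H')$ assembled componentwise from local sections of the two factors. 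Composing with the two isomorphisms above closes the argument.

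I do not expect a genuine obstacle here: the result is a formal corollary, so the only work is bookkeeping. The points to be careful about are that right multiplication by $H\times H'$ on $G\times G$ is componentwise, so $(G\times G)/(H\times H')\isom(G/H)\times(G/H')$ and, on unwinding the associated-bundle construction of \cref{def:associated-vector-bundle}, the bundle $(G\times G)\times_{\sigma\otimes\rho^*}\hom(V_\rho,V_\sigma)$ is the external tensor product of the homogeneous vector bundle $G\times_\sigma V_\sigma\to G/H$ with the dual of $G\times_\rho V_\rho\to G/H'$ — matching the intuition that a kernel is a section-valued object pairing an input fibre over $G/H'$ with an output fibre over $G/H$. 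If one also wants to track regularity (compact support / smoothness), it is inherited automatically through the lift maps, so it does not affect the abstract isomorphism statement.
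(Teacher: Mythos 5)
Your proposal is correct and follows essentially the same route as the paper: the paper's proof likewise just applies \cref{prop:function-isomorphism} to the subgroup $H\times H'\leq G\times G$ with the representation $\eta=\sigma\otimes\rho^*$ on $\hom(V_\rho,V_\sigma)$, using \cref{prop:two-arg-kernel-is-induced-rep} to identify $\mc{K}$ with the induced representation. Your additional checks (that the constraint in \cref{eq:two-arg-kernel-constraint} is exactly the Mackey condition for $(\sigma\otimes\rho^*)((h,h')^{-1})$, and that the quotient factors as $(G/H)\times(G/H')$) are accurate elaborations of what the paper leaves implicit.
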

\begin{proof}
    From previous results have that the induced representation $\ind _{H}^{G}(\eta)$ is isomorphic to the space of sections of the corresponding associated bundle $\Gamma(G \times _\eta V_\eta)$.
    This statement immediately follows since $\hom(V_\rho,V_\sigma)$ is a $\eta$-representation of $H \times H^{\prime}$ through $\eta=\sigma \otimes \rho^*$, and $H \times H^{\prime} \leq G \times G$.
\end{proof}
These results can be conveniently represented by a Cartan mixing diagram, as presented in \cref{fig:mixing-diagram-linear-kernels}.
This diagram can be directly compared with \cref{fig:feature-maps-diagram} of the previous section.

\begin{figure}[h]
\[
\begin{tikzcd}[cramped]
	{G\times G} && {(G\times G)\times \mathrm{Hom}(V_\rho,V_\sigma)} && {\mathrm{Hom}(V_\rho,V_\sigma)} \\
	\\
	{(G\times G)/(H\times H')} && {(G\times G)\times_{\sigma\otimes \rho^*}\mathrm{Hom}(V_\rho,V_\sigma)} && {\mathrm{Hom}(V_\rho,V_\sigma)/(H\times H')}
	\arrow["\kappa", curve={height=-24pt}, dashed, from=1-1, to=1-5]
	\arrow["{\pi_{H \times H'}}"{description}, from=1-1, to=3-1]
	\arrow["{\pi_1}"{description}, from=1-3, to=1-1]
	\arrow["{\pi_2}"{description}, from=1-3, to=1-5]
	\arrow["\beta"{description}, from=1-3, to=3-3]
	\arrow["{\pi_{\operatorname{Hom}}}"{description}, from=1-5, to=3-5]
	\arrow["{\kappa_X}", curve={height=-18pt}, dashed, from=3-1, to=3-3]
	\arrow["{\tau_{\pi_{H\times H'}}}", from=3-3, to=3-1]
	\arrow["{\tau_{\pi_{\operatorname{Hom}}}}"', from=3-3, to=3-5]
\end{tikzcd}
\]
\caption{The Cartan mixing diagram perspective on equivalent formulations of the integral kernels of \cref{thm:non-equivariant-kernels}.
    Note that all solid arrows commute.
The kernel can be viewed either as an element of the sections of an associated vector bundle $ \kappa_X\in\Gamma((G\times G)\times_{\sigma\otimes \rho^*}\mathrm{Hom}(V_\rho,V_\sigma)) $ or, equivalently, as elements of the induced representation $ \kappa\in\mathcal{I}_{\sigma \otimes \rho^*} $.}
\label{fig:mixing-diagram-linear-kernels}
\end{figure}

\subsection{Kernels of equivariant linear operators}\label{sec:equivariant-lin-operators}
We now impose the equivariance condition on the map $\Phi:\mcI_\rho\to\mcI_\sigma$ between the representations $\mcI_\rho$ induced by $H'$ acting on $(\rho,V_\rho)$ and $\mcI_\sigma$ induced by $H$ acting on $(\sigma, V_\sigma)$.
Since the $G$-action on these spaces, as defined in \cref{eq:induced-representation-G-action} and repeated here for convenience, is
\begin{equation}
    [gf](k)=f(g^{-1}k),
\end{equation}
we get that for $\Phi$ to be equivariant it must satisfy
\begin{equation}\label{eq:equivariance-for-lin-map}
    (k[\Phi f])(g)=[\Phi f](k^{-1}g)=[\Phi (kf)](g).
\end{equation}
This puts an additional constraint on the kernels.
\begin{proposition}[Theorem 3.1 in \cite{cohen-theory-equivariant-hom}]\label{prop:gcnn-integral}
    Given an equivariant linear operator $\Phi:\mcI_\rho\to\mcI_\sigma$ between induced representations, then the kernel $\kappa$ in the integral formulation of $\Phi$, defined in \cref{eq:two-argument-kernel-integral}, is constant on $G$-orbits in $G\times G$:
\begin{equation}
\kappa(u(g,g'))=\kappa(ug,ug')=\kappa(g,g'), \quad \forall u\in G.
\end{equation}
Furthermore, the map $\Phi$ can be written
	\begin{equation}
        [\Phi f](g) = \int_G \hat\kappa(g^{-1} g') f(g') \mathrm{d}g'
	\end{equation}
    for a single argument kernel$\,$\footnote{These are called \textit{isometry invariant kernel fields} in \cite{weiler2023EquivariantAndCoordinateIndependentCNNs}, see definition 13.2.3.} $\hat\kappa:G\to \hom(V_\rho,V_\sigma)$ satisfying 
    \begin{equation}\label{eq:one-arg-kernel-constraint}
		\hat\kappa(h gh') = \sigma(h)\hat\kappa(g) \rho(h'), \quad \forall h\in H,\ h'\in H'.
    \end{equation}
\end{proposition}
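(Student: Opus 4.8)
The plan is to begin from the integral representation of $\Phi$ supplied by \cref{thm:non-equivariant-kernels}, namely $[\Phi f](g)=\int_G \kappa(g,g')f(g')\,\mathrm{d}g'$ with $\kappa\in\mc{K}$, and to convert the equivariance condition \cref{eq:equivariance-for-lin-map} into a symmetry of $\kappa$. Expanding $[\Phi(kf)](g)=\int_G\kappa(g,g')f(k^{-1}g')\,\mathrm{d}g'$ and substituting $g'\mapsto kg'$ — which is legitimate because $G$ is unimodular, hence the Haar measure is left invariant — rewrites this as $\int_G\kappa(g,kg')f(g')\,\mathrm{d}g'$. On the other hand $[k\Phi f](g)=[\Phi f](k^{-1}g)=\int_G\kappa(k^{-1}g,g')f(g')\,\mathrm{d}g'$. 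Equivariance forces these two integrals to agree for every $f\in\mcI_\rho$ and every $g,k\in G$.

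Next I would upgrade this equality of operators to an equality of kernels. Since $f$ ranges only over the induced representation (Mackey functions), one cannot immediately read off a pointwise identity; but both $(g,g')\mapsto\kappa(g,kg')$ and $(g,g')\mapsto\kappa(k^{-1}g,g')$ still lie in $\mc{K}$, as one checks directly against the defining relation \cref{eq:two-arg-kernel-constraint}, and by \cref{thm:non-equivariant-kernels} together with \cref{lemma:kernel-redundancy} the correspondence between operators and kernels in $\mc{K}$ is a bijection. Hence $\kappa(g,kg')=\kappa(k^{-1}g,g')$. Writing $k=u$ and then relabelling $g\mapsto ug$ yields $\kappa(ug,ug')=\kappa(g,g')$ for all $u\in G$, i.e.\ $\kappa$ is constant on the diagonal $G$-orbits in $G\times G$, which is the first assertion.

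Finally I would define the single-argument kernel $\hat\kappa(g):=\kappa(e,g)$. Applying the orbit invariance with $u=g^{-1}$ gives $\kappa(g,g')=\kappa(e,g^{-1}g')=\hat\kappa(g^{-1}g')$, so $[\Phi f](g)=\int_G\hat\kappa(g^{-1}g')f(g')\,\mathrm{d}g'$ as claimed. The steerability constraint on $\hat\kappa$ then follows by combining orbit invariance with the $\mc{K}$-relation: $\hat\kappa(hgh')=\kappa(e,hgh')=\kappa(h^{-1},gh')$ by orbit invariance, and then \cref{eq:two-arg-kernel-constraint} (with $e\cdot h^{-1}$ in the first slot, $h^{-1}\in H$, and $g\cdot h'$ in the second, $h'\in H'$) gives $\kappa(h^{-1},gh')=\sigma(h)\kappa(e,g)\rho(h')=\sigma(h)\hat\kappa(g)\rho(h')$.

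The main obstacle — indeed the only subtle point — is the passage from ``the two integral operators agree on all $f\in\mcI_\rho$'' to ``the two kernels are equal'': one must ensure that testing against the restricted class of Mackey functions suffices. This is exactly what the redundancy analysis of \cref{lemma:kernel-redundancy} and the resulting bijection in \cref{thm:non-equivariant-kernels} provide, so the argument is to verify that both candidate kernels satisfy the canonical normalisation built into $\mc{K}$ and then invoke uniqueness; everything else is bookkeeping with the Haar measure and the defining relations.
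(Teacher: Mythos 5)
Your proposal is correct and follows essentially the same route as the paper, which defers the full argument to \cite{cohen-theory-equivariant-hom} but reproduces exactly your construction $\hat\kappa(g^{-1}g'):=\kappa(e,g^{-1}g')$ from the orbit-invariance of $\kappa$. You also correctly identify and handle the one genuinely delicate step — passing from equality of operators on Mackey functions to equality of kernels — by checking that both translated kernels remain in $\mc{K}$ and invoking the bijection of \cref{thm:non-equivariant-kernels} (underpinned by \cref{lemma:kernel-redundancy}), and your derivation of the constraint $\hat\kappa(hgh')=\sigma(h)\hat\kappa(g)\rho(h')$ from orbit-invariance combined with \cref{eq:two-arg-kernel-constraint} is exactly right.
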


\begin{proof}
    For the full proof see \cite{cohen-theory-equivariant-hom}.
    
Although we refer the reader to \cite{cohen-theory-equivariant-hom} for the proof, we reproduce the construction of the one-argument kernel as it will be relevant later.

To get the one-argument kernel note that the invariance of the kernel on the $ G $-orbits $ \kappa(u(g,g^{\prime}))=\kappa(g,g^{\prime}) $ allows the definition of the one-argument kernel as $\kappa(g,g')=\kappa(e,g^{-1}g')=:\hat\kappa(g^{-1}g')$.
\end{proof}

\begin{remark}
    Note that many properties of the two argument integral kernel $\kappa(g, g')$ also hold for the single argument kernel $\hat{\kappa}(g)$.
    In particular, there exist many kernels without the property $\hat{\kappa}(gh') = \hat{\kappa}(g)\rho(h')$ that yield equivariant operators under the integral sign as was shown in \cref{lemma:kernel-redundancy}.
    However, this condition picks out one kernel per equivalence class so that there exist a bijection between kernels and operators. 
\end{remark}

The enforcing of analytical properties, e.g. equivariance, restricts the underlying geometry. 
    This insight is something we have not seen in other works to date.
    Since $\kappa\in \mcI_{\rho\times \sigma}$ there is a well defined representation of $G\times G$ as 
    \begin{equation}
        [(u,k)\kappa](g,g')=\kappa(u^{-1}g,k^{-1}g').
    \end{equation}
    In this light, the previous $G$-action on $G\times G$ as $ k(g,g^{\prime})=(kg,kg^{\prime}) $ is really the action of the diagonal subgroup $\{(g,g)\in G\times G\ |\ g\in G\}\isom G$. 
    Motivated by the constancy of the kernels on the orbits of the diagonal subgroup, we can introduce an equivalence relation 
    \begin{equation}
    (g,g')\sim (k,k') \quad \text{iff}\quad \exists u\in G: (k,k')=(ug,ug').
    \end{equation}
    Taking the quotient with respect to this equivalence relation yields a projection $G\times G\to G$ as $\pi_G((g,g'))= g^{-1}g'$.
    Which is exactly the input to the one-argument kernel, and hence we can write $\kappa=\hat\kappa\circ\pi_G$ resulting in the commutative diagram in \cref{fig:kernel-on-GxG-factors}.
    \begin{figure}[h]
        \[
\begin{tikzcd}[ampersand replacement=\&,cramped]
	{G\times G} \&\& {\mathrm{Hom}(V_\rho,V_\sigma)} \\
	\\
	G
	\arrow["\kappa"{description}, from=1-1, to=1-3]
	\arrow["{\pi_G}"{description}, from=1-1, to=3-1]
	\arrow["{\hat\kappa}"{description}, from=3-1, to=1-3]
\end{tikzcd}
\]
            \caption{The kernel $\kappa$ factors through the projection $\pi_G:G\times G\to G$ as $ \kappa=\hat{\kappa} \circ \pi_G $.}\label{fig:kernel-on-GxG-factors}
    \end{figure}
    That is, $\kappa$ factors through the qoutient, or projection, map $\pi_G$.

    Furthermore, since we had a right action of $H\times H'$ on $G\times G$ this projection gives a natural definition of an $H\times H'$ action on $G$ through $(g,g')\triangleleft(h,h')=(gh,g'h')\overset{\pi}{\mapsto} h^{-1}g^{-1}g'h'$.
    Hence defining a right $H\times H'$ action on $G$ as $g\triangleleft(h,h^{\prime})=h^{-1}gh'$ yield that this commutes with the projection map $\pi_G((g,g')\triangleleft(h,h'))=\pi_G((g,g'))\triangleleft(h,h')$.

    Finally, it is not difficult to show that $\pi_G:G\times G\to G$ as $\pi_G(g,g')=g^{-1}g'$ forms a principal $G$-bundle, and with the $H\times H'$ action this becomes a $H\times H'$-equivariant principal $G$-bundle, see \cref{def:equivariant-principal-bundle}. 
    Worth noting is that here, for the sake of convenience, the $G$-action on the total space $G\times G$ is defined as a left action rather than a right action, but this does not matter for the details.

Now, a reasonable question is if the space of kernels on $G$ spans the space of linear $G$-equivariant maps between induced representations.
As is formalised by the following theorems, the answer to the question turns out to be affirmitive.
These results can be regarded as specialisations of \cref{thm:non-equivariant-kernels} in the equivariant setting.
\begin{theorem}[Theorem 3.2 in \cite{cohen-theory-equivariant-hom}]
\label{thm:equivariant-kernels}
	The space of intertwiners $\hom_G(\mc{I}_\rho, \mc{I}_\sigma)$ is isomorphic to the space of bi-equivariant kernels on $G$, defined as
    \begin{equation}
        \mc{K}_G=\{\hat\kappa:G\to \hom(V_\rho,V_\sigma) \mid\hat\kappa(hgh') = \sigma(h)\hat\kappa(g) \rho(h'),\ \forall g\in G, h\in H, h'\in H'\},
    \end{equation}
    and hence linear equivariant maps between induced representations are equivariant convolutions.
\end{theorem}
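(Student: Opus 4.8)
The plan is to obtain \cref{thm:equivariant-kernels} as the ``equivariant shadow'' of the non-equivariant correspondence in \cref{thm:non-equivariant-kernels}. By that theorem, every $\Phi\in\hom(C^\infty_{c,\rho}(G,V_\rho),C^\infty_{c,\sigma}(G,V_\sigma))$ is $[\Phi f](g)=\int_G\kappa(g,g')f(g')\,\mathrm{d}g'$ for a unique $\kappa\in\mcK$, and $\kappa\mapsto\Phi$ is linear; hence it restricts to a linear isomorphism between the subspace of $G$-equivariant operators and the subspace of $\mcK$ carrying them. So the whole proof reduces to (i) identifying which $\kappa\in\mcK$ produce intertwiners, and (ii) exhibiting a linear bijection of that subspace onto $\mcK_G$ via the diagonalisation $\kappa\mapsto\hat\kappa$, $\hat\kappa(g):=\kappa(e,g)$. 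Much of this is already contained in \cref{prop:gcnn-integral}, which supplies one direction; the additional content is the converse implications and the bijectivity/isomorphism claim.

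For step (i) I would plug the integral form into the equivariance requirement $[\Phi(kf)](g)=[\Phi f](k^{-1}g)$ of \cref{eq:equivariance-for-lin-map}. The right-hand side is $\int_G\kappa(k^{-1}g,g')f(g')\,\mathrm{d}g'$; on the left, $[\Phi(kf)](g)=\int_G\kappa(g,g')f(k^{-1}g')\,\mathrm{d}g'$, and the substitution $g'\mapsto kg'$ (left-invariance of the Haar measure) turns this into $\int_G\kappa(g,kg')f(g')\,\mathrm{d}g'$. Since $f$ ranges over all of $C^\infty_{c,\rho}$, the ``arbitrary-$f$'' argument already used in \cref{lemma:induced-image-constraint} forces $\kappa(g,kg')=\kappa(k^{-1}g,g')$ for every $k$ and almost every $g'$, i.e.\ $\kappa(kg,kg')=\kappa(g,g')$ --- constancy on the orbits of the diagonal $G$-action. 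The forward direction here is exactly \cref{prop:gcnn-integral}; reading the computation backwards gives the converse, so the intertwiners correspond precisely to the diagonal-$G$-invariant kernels in $\mcK$.

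For step (ii), given such a $\kappa$ I would set $\hat\kappa(g):=\kappa(e,g)$; orbit-invariance then gives $\kappa(g,g')=\kappa(g\cdot e,\,g\cdot(g^{-1}g'))=\hat\kappa(g^{-1}g')$, so nothing is lost. Specialising the $\mcK$-constraint $\kappa(gh,g'h')=\sigma(h^{-1})\kappa(g,g')\rho(h')$ from \cref{def:two-arg-int-kernel} at $g=e$ and comparing with $\kappa(h,g'h')=\hat\kappa(h^{-1}g'h')$ yields $\hat\kappa(hgh')=\sigma(h)\hat\kappa(g)\rho(h')$, i.e.\ $\hat\kappa\in\mcK_G$ --- this is constraint \cref{eq:one-arg-kernel-constraint}. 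Conversely, for $\hat\kappa\in\mcK_G$ the kernel $\kappa(g,g'):=\hat\kappa(g^{-1}g')$ is manifestly diagonal-invariant, and a one-line check against \cref{def:two-arg-int-kernel} puts it in $\mcK$; the two assignments are visibly mutually inverse and linear. Composing the three isomorphisms gives $\hom_G(\mcI_\rho,\mcI_\sigma)\cong\{\kappa\in\mcK:\kappa\text{ diagonal-invariant}\}\cong\mcK_G$, and the resulting convolutional form $[\Phi f](g)=\int_G\hat\kappa(g^{-1}g')f(g')\,\mathrm{d}g'$ justifies the final clause. (One could alternatively route (i)--(ii) through \cref{prop:two-arg-kernel-is-induced-rep}: $\mcK\cong\mcI_{\sigma\otimes\rho^*}$ carries a $G\times G$-action whose diagonal-$G$-invariants are the intertwiners, a Frobenius-reciprocity-flavoured statement --- but the direct route above is shorter.)

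The one genuinely delicate point is the ``arbitrary $f$'' step: passing from equality of two integral operators on \emph{all} compactly supported induced-representation inputs to the pointwise (a.e.) identity of the kernels. This is where one leans on the distributional/Schwartz-kernel machinery underpinning \cref{thm:non-equivariant-kernels} rather than on an elementary argument, and it is also what forces the compact-support and local-compactness hypotheses. Everything else --- transporting the Mackey-type constraint from two arguments to one, noting that the equivalence-class bookkeeping of \cref{lemma:kernel-redundancy} has already been absorbed into the definition of $\mcK$, and checking that $\hat\kappa$ inherits the needed regularity from $\kappa$ --- is routine.
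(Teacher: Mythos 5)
Your proof is correct and follows essentially the same route as the paper, which defers to Cohen et al.\ for this theorem but reproduces the key step $\hat\kappa(g^{-1}g'):=\kappa(e,g^{-1}g')$ in the proof of \cref{prop:gcnn-integral}: restrict the non-equivariant kernel correspondence of \cref{thm:non-equivariant-kernels} to intertwiners, derive diagonal $G$-invariance of $\kappa$ from the arbitrary-$f$ argument (using that the normalisation built into $\mc{K}$ upgrades equivalence of operators to a.e.\ equality of kernels), and diagonalise to a one-argument kernel. The one caveat — which you partly flag and which the paper's own remark after the theorem makes explicit — is that surjectivity is delicate: intertwiners such as $\id$ only admit distributional kernels, so the claimed isomorphism onto $\mc{K}_G$ as a space of genuine functions requires the additional care taken in \citet{aronssonHomogeneousVectorBundles2022}.
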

\begin{remark}
   This theorem was investigated and expanded further in \cite{aronssonHomogeneousVectorBundles2022} as, without proper care, the map $ \id:\mathcal{I}_\rho \to \mathcal{I}_\rho $, which is obviously an equivariant map, cannot be written as a convolution.
\end{remark}

Based on this theorem we make the following definition, following \cite{cohen-theory-equivariant-hom}.
\begin{definition}[$G$-CNN layer]\label{def:gcnn-layer}
A $G$-CNN layer is a linear $G$-equivariant map $\Phi:\mcI_\rho\to \mcI_\sigma$ of the form
\begin{equation}\label{eq:gcnn-layer}
    [\Phi f](g)=[\kappa\star f](g)=\int_G\hat\kappa(g^{-1}g')f(g')\mathrm{d}g',
\end{equation}
where $ \kappa\in \mathcal{K}_G $.
\end{definition}

Imposing the equivariance property thus affects the underlying geometry on the domain of the kernels.
We can however go further by decomposing $g\in G$ as $g=s(gH)\mr{h}(g)$ where $s:G/H\to G$ is a section and $\mr{h}(g)\in H$ is defined through $\mr{h}(g)=s(gH)^{-1}g$.
The decomposition is a special case of the more general map $\mr{h}:G/H\times G \to H$ defined as $\mathrm{h}(x,g)=s(gx)^{-1}gs(x)$ evaluated at $\mr{h}(eH,g)=s(gH)^{-1}gs(eH)$, and where, without loss of generality, we choose $s$ such that $s(eH)=e$.
For more details, see the discussion in \cref{rmk:fibre-twist-map}. 
This results in a isomorphism between the space of one-argument kernels defined on $G$ to a space of kernels defined on the quotient $G/H$. 
\begin{remark}\label{rmk:lin-domain-reduction-non-equiv}
 Note that the isomorphism between the kernels defined on $ G $ and the kernels defined on $ G/H^{\prime} $ only relies on the transformation properties of the kernel under the right action of $ H^{\prime} $ on the argument.
 Hence, one can do a similar construction in the non-equivariant case to reduce from kernels $ \kappa:\, G \times G \,\to\, \hom(V_\rho,V_\sigma) $ to kernels $ \kappa_X:\, G \times G/H^{\prime} \,\to\, \hom(V_\rho,V_\sigma) $.    
\end{remark}

\begin{proposition}[Theorem 3.3 in \cite{cohen-theory-equivariant-hom}]\label{prop:g-cnn-kernel-on-hom-space}
    The space of intertwiners $\hom_G(\mc{I}_\rho, \mc{I}_\sigma)$ is isomorphic to the space of left-equivariant kernels on $X\isom G/H'$, defined as
\begin{equation}
    \mc{K}_X=\{\lak:X\to \hom(V_\rho,V_\sigma) \mid\lak(hx)=\sigma(h)\lak(x)\rho(\mr{h}'(x,h)^{-1}),\ \forall x\in X,h\in H\}.
\end{equation}
\end{proposition}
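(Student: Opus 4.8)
The plan is to transport the isomorphism $\hom_G(\mcI_\rho,\mcI_\sigma)\isom\mc{K}_G$ of \cref{thm:equivariant-kernels} across an explicit ``domain reduction'' bijection $\mc{K}_G\isom\mc{K}_X$ built from a choice of local section $s:X\to G$ of the principal $H'$-bundle $G\to G/H'$, normalised (without loss of generality) so that $s(eH')=e$. Where a global section fails to exist one works over a trivialising cover and checks compatibility on overlaps, exactly as for the lifts $\Lambda^\uparrow,\Lambda^\downarrow$ in \cref{eq:function_lift,eq:function_sink} and as indicated in \cref{rmk:fibre-twist-map,rmk:lin-domain-reduction-non-equiv}; I will write the argument as if $s$ were global.

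First I would define the forward map $R:\mc{K}_G\to\{\text{maps }X\to\hom(V_\rho,V_\sigma)\}$ by restriction along the section, $[R\hat\kappa](x):=\hat\kappa(s(x))$, and check that $R\hat\kappa\in\mc{K}_X$. The key input is the twist identity for the $H'$-bundle: since $h\,s(x)$ and $s(hx)$ both lie over $hx\in X$, they differ by $\mr{h}'(x,h):=s(hx)^{-1}h\,s(x)\in H'$, i.e.\ $s(hx)=h\,s(x)\,\mr{h}'(x,h)^{-1}$. Feeding this into the bi-equivariance $\hat\kappa(h_1gh_2)=\sigma(h_1)\hat\kappa(g)\rho(h_2)$ of $\hat\kappa\in\mc{K}_G$ gives
\begin{align*}
[R\hat\kappa](hx) &= \hat\kappa\bigl(h\,s(x)\,\mr{h}'(x,h)^{-1}\bigr)\\
&= \sigma(h)\,\hat\kappa(s(x))\,\rho\bigl(\mr{h}'(x,h)^{-1}\bigr)
 = \sigma(h)\,[R\hat\kappa](x)\,\rho\bigl(\mr{h}'(x,h)^{-1}\bigr),
\end{align*}
which is precisely the defining constraint of $\mc{K}_X$; linearity of $R$ is immediate.

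Next I would exhibit the inverse $L:\mc{K}_X\to\{\text{maps }G\to\hom(V_\rho,V_\sigma)\}$, $[L\lak](g):=\lak(gH')\,\rho\!\bigl(s(gH')^{-1}g\bigr)$, using the canonical factorisation $g=s(gH')\cdot\bigl(s(gH')^{-1}g\bigr)$ with $s(gH')^{-1}g\in H'$. A short computation shows $L\lak\in\mc{K}_G$: one writes $(hgh')H'=h(gH')$, uses the cocycle identity $s(h(gH'))^{-1}hgh'=\mr{h}'(gH',h)\,\bigl(s(gH')^{-1}g\bigr)\,h'$, and applies the $\mc{K}_X$-constraint on $\lak$; the two resulting occurrences of $\rho(\mr{h}'(gH',h))$ cancel, leaving $[L\lak](hgh')=\sigma(h)[L\lak](g)\rho(h')$. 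Finally $R\circ L=\id$ is immediate from $s(s(x)H')=s(x)$, and $L\circ R=\id$ follows from $g=s(gH')\,\mr{h}'(g)$ together with the $H'$-equivariance $\hat\kappa(gh')=\hat\kappa(g)\rho(h')$ that every $\hat\kappa\in\mc{K}_G$ already satisfies. Composing $\hom_G(\mcI_\rho,\mcI_\sigma)\isom\mc{K}_G$ with the isomorphism $R:\mc{K}_G\to\mc{K}_X$ then proves the claim.

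The only real difficulty is bookkeeping: keeping straight the two roles of the subgroups ($H$ acting on the left of $X\isom G/H'$ versus $H'$ as the fibre group), and verifying that the twist map $\mr{h}'$ obeys the cocycle relation needed so that $L\lak$ satisfies the $\mc{K}_G$-constraint on \emph{all} of $G$ and not merely on section representatives. A secondary, purely topological point is the absence of a global section in nontrivial bundles, which is handled locally and does not affect the isomorphism itself (different sections yield different but naturally isomorphic realisations of $\mc{K}_X$).
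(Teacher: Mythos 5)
Your proposal is correct and follows exactly the route the paper intends: it composes the isomorphism $\hom_G(\mcI_\rho,\mcI_\sigma)\isom\mc{K}_G$ of \cref{thm:equivariant-kernels} with the section-based domain reduction $g=s(gH')\,\mr{h}'(g)$ sketched in the text preceding the proposition (the paper itself defers the proof to Theorem 3.3 of the cited reference, which argues the same way). Your verification of the twist/cocycle identity and of $R\circ L=\id$, $L\circ R=\id$ is accurate, so nothing is missing.
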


We can further relate the space of linear $G$-equivariant maps to a space of kernels on the double coset space $H\backslash G/H'$.

\begin{proposition}[Theorem 3.4 in \cite{cohen-theory-equivariant-hom}]\label{prop:double-coset-kernels}
    The space of intertwiners $\hom_G(\mc{I}_\rho, \mc{I}_\sigma)$ is isomorphic to the space of $H^{\gamma(x)H}$-equivariant kernels on $H\backslash G/H'$, defined as
    \begin{equation}
        \mc{K}_D=\{\overline{\kappa}:H\backslash G/H'\to \hom(V_\rho,V_\sigma) \mid\overline{\kappa}(x)=\sigma(h)\overline{\kappa}(x)\rho^x(h)^{-1},\ \forall x\in H\backslash G/H', h\in H^{\gamma(x)H}\},
    \end{equation}
    where $\gamma:H\backslash G/H'\to G$ is a section (choice of double coset representative), and $\rho^x$ is a representation of the stabiliser $H^{\gamma(x)H}=\{h\in H \mid h\gamma(x)H'=\gamma(x)H'\}\leq H'$ defined as
    \begin{equation}
        \rho^x(h)=\rho(\mr{h}'(\gamma(x)H',h))=\rho(\gamma(x)^{-1}h\gamma(x)).
    \end{equation}
\end{proposition}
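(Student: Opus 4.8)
The plan is to piggyback directly on \cref{prop:g-cnn-kernel-on-hom-space}, which already identifies $\hom_G(\mcI_\rho,\mcI_\sigma)$ with the space $\mcK_X$ of left-equivariant kernels $\lak\colon X\to\hom(V_\rho,V_\sigma)$ on $X\isom G/H'$ obeying $\lak(hx)=\sigma(h)\lak(x)\rho(\mr{h}'(x,h)^{-1})$. This constraint says that the restriction of $\lak$ to any left $H$-orbit in $G/H'$ is determined by a single value on that orbit, so a kernel in $\mcK_X$ should carry the same information as a kernel on the orbit space $H\backslash X\isom H\backslash G/H'$, with only a residual constraint coming from the stabiliser of the chosen orbit representative. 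I would fix once and for all a section $\gamma\colon H\backslash G/H'\to G$ of double-coset representatives, chosen compatibly with the section $s'\colon G/H'\to G$ defining $\mr{h}'$, so that $s'(\gamma(x)H')=\gamma(x)$; a different compatible choice yields an isomorphic description. Throughout I will use the cocycle identity $\mr{h}'(x,h_1h_2)=\mr{h}'(h_2x,h_1)\,\mr{h}'(x,h_2)$, which follows immediately from $\mr{h}'(x,h)=s'(hx)^{-1}hs'(x)$.

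First, the forward map: given $\lak\in\mcK_X$, set $\overline\kappa(x):=\lak(\gamma(x)H')$. For $h\in H^{\gamma(x)H}$, i.e. $h\gamma(x)H'=\gamma(x)H'$, the defining relation of $\mcK_X$ evaluated at the point $\gamma(x)H'$ gives
\[
\overline\kappa(x)=\lak(\gamma(x)H')=\lak(h\gamma(x)H')=\sigma(h)\,\lak(\gamma(x)H')\,\rho(\mr{h}'(\gamma(x)H',h)^{-1})=\sigma(h)\,\overline\kappa(x)\,\rho^x(h)^{-1},
\]
where the compatible choice of section yields $\mr{h}'(\gamma(x)H',h)=\gamma(x)^{-1}h\gamma(x)$, which lies in $H'$ exactly because $h$ fixes the coset $\gamma(x)H'$; this is the stated $\rho^x(h)$. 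The cocycle identity, specialised to $h_1,h_2\in H^{\gamma(x)H}$ (where $h_2\gamma(x)H'=\gamma(x)H'$), shows $h\mapsto\mr{h}'(\gamma(x)H',h)$ is a homomorphism on the stabiliser, so $\rho^x$ is genuinely a representation and $\overline\kappa\in\mcK_D$.

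Conversely, I would build the inverse by extending $\overline\kappa\in\mcK_D$ along $H$-orbits: write any $y\in G/H'$ as $y=h\gamma(x)H'$ with $x$ the double coset of $y$ and $h\in H$, and put $\lak(y):=\sigma(h)\,\overline\kappa(x)\,\rho(\mr{h}'(\gamma(x)H',h)^{-1})$. The main obstacle — and the only nontrivial point — is \emph{well-definedness}: if $y=h_1\gamma(x)H'=h_2\gamma(x)H'$ then $h:=h_2^{-1}h_1\in H^{\gamma(x)H}$ and $h_1=h_2h$; the cocycle identity together with $h\gamma(x)H'=\gamma(x)H'$ gives $\mr{h}'(\gamma(x)H',h_1)=\mr{h}'(\gamma(x)H',h_2)\,\mr{h}'(\gamma(x)H',h)$, and substituting this, then using the $\mcK_D$-constraint $\sigma(h)\overline\kappa(x)\rho^x(h)^{-1}=\overline\kappa(x)$, collapses the two candidate values of $\lak(y)$ to a single one. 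This is where the stabiliser condition and the cocycle bookkeeping must mesh exactly, so I expect it to be the delicate step.

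It then remains to verify the routine items: that the extended $\lak$ satisfies the left-equivariance constraint of $\mcK_X$ (one more application of the cocycle identity), that the two assignments $\lak\mapsto\overline\kappa$ and $\overline\kappa\mapsto\lak$ are mutually inverse, and that both are linear. Composing the resulting isomorphism $\mcK_X\isom\mcK_D$ with \cref{prop:g-cnn-kernel-on-hom-space} yields $\hom_G(\mcI_\rho,\mcI_\sigma)\isom\mcK_D$, as claimed.
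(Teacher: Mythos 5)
Your proof is correct, and it follows essentially the same route as the argument the paper defers to (Theorem 3.4 of \cite{cohen-theory-equivariant-hom}): quotient the left $H$-action on $\mc{K}_X$ by restricting to double-coset representatives, with the residual stabiliser constraint handled via the cocycle identity for $\mr{h}'$. The paper itself gives no proof beyond the citation, and your well-definedness check for the extension $\overline{\kappa}\mapsto\lak$ is exactly the delicate step that argument also hinges on, so nothing is missing.
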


In words, the Mackey property of the kernels with respect to the subgroups $H,H'$ allows us to reduce the domains of the kernels from kernels on $G$ to kernels on $X\isom G/H'$, and finally to kernels on the double coset space $H\backslash G/H'$. 
Graphically this domain reduction can be expressed in the diagram shown in \cref{fig:domain-reduction-gcnn-case}.
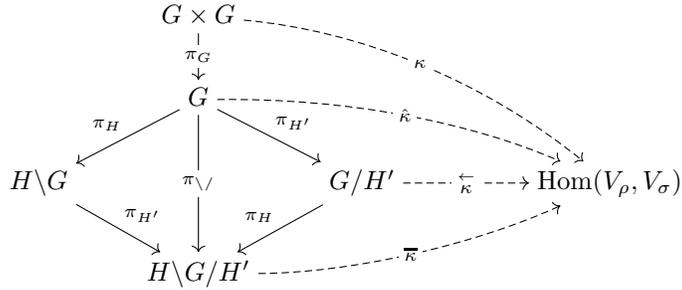
\begin{figure}[h]
    \[
\begin{tikzcd}[cramped]
	& {G\times G} \\
	& G \\
	{H\backslash G} && {G/H'} && {\mathrm{Hom}(V_\rho,V_\sigma)} \\
	& {H\backslash G/H'}
	\arrow["{\pi_G}"{description}, from=1-2, to=2-2]
	\arrow["\kappa"{description}, curve={height=-18pt}, dashed, from=1-2, to=3-5]
	\arrow["{\pi_{H}}"', from=2-2, to=3-1]
	\arrow["{\pi_{H'}}", from=2-2, to=3-3]
	\arrow["{\hat\kappa}"{description}, curve={height=-12pt}, dashed, from=2-2, to=3-5]
	\arrow["{\pi_{\backslash/}}"{description}, from=2-2, to=4-2]
	\arrow["{\pi_{H'}}", from=3-1, to=4-2]
	\arrow["{\overset{\leftarrow}{\kappa}}"{description}, dashed, from=3-3, to=3-5]
	\arrow["{\pi_{H}}"', from=3-3, to=4-2]
	\arrow["{\overline{\kappa}}"{description}, curve={height=12pt}, dashed, from=4-2, to=3-5]
\end{tikzcd}
    \]
    \caption{Graphical representation of the domain reduction for the kernels. 
        The kernel can be viewed either as a two-argument kernel $ \kappa\in \mathcal{K} $, as a one-argument kernel $ \hat{\kappa}\in \mathcal{K}_G $, as a kernel on the homogeneous space $ \overset{ \leftarrow }{\kappa}\in \mathcal{K}_X $ or as a kernel on the double coset space $ \overline{\kappa} \in \mathcal{K}_D $.
        All these kernel spaces are isomorphic.
    Note that all solid arrows commute.}\label{fig:domain-reduction-gcnn-case}
\end{figure}
\begin{remark}
    Although the domains are related with simple projections, the isomorphisms linking the space of integration kernels are more intricate.
\end{remark}

This can be collected into a single diagram based on the Cartan mixing diagram in \cref{fig:mixing-diagram-linear-kernels} with the addition of the domain reduction diagram in \cref{fig:domain-reduction-gcnn-case}. 
The resulting diagram is shown in \cref{fig:complete-gcnn-diagram}.
    \begin{figure}[h]
\begin{adjustwidth}{-2cm}{-1cm}
    \centering
    \[
\begin{tikzcd}[cramped]
	& G && {G\times G} && {(G\times G)\times \mathrm{Hom}(V_\rho,V_\sigma)} && {\mathrm{Hom}(V_\rho,V_\sigma)} \\
	{H\backslash G} && {G/H'} \\
	& {H\backslash G/H'} && {(G\times G)/(H\times H')} && {(G\times G)\times_{\sigma\otimes\rho^*}\mathrm{Hom}(V_\rho,V_\sigma)} && {\mathrm{Hom}(V_\rho,V_\sigma)/(H\times H')}
	\arrow["{\hat\kappa}"{description}, shift left=2, curve={height=-30pt}, dashed, from=1-2, to=1-8]
	\arrow["{\pi_{H}}"', from=1-2, to=2-1]
	\arrow["{\pi_{H'}}", from=1-2, to=2-3]
	\arrow["{\pi_{\backslash/}}"{description}, from=1-2, to=3-2]
	\arrow["{\pi_G}"{description}, from=1-4, to=1-2]
	\arrow["\kappa"{description, pos=0.4}, curve={height=-18pt}, dashed, from=1-4, to=1-8]
	\arrow["{\pi_{H \times H'}}"{description, pos=0.7}, from=1-4, to=3-4]
	\arrow["{\pi_1}"{description}, from=1-6, to=1-4]
	\arrow["{\pi_2}"{description}, from=1-6, to=1-8]
	\arrow["\beta"{description, pos=0.7}, from=1-6, to=3-6]
	\arrow["{\pi_{\operatorname{Hom}}}"{description}, from=1-8, to=3-8]
	\arrow["{\pi_{H'}}", from=2-1, to=3-2]
	\arrow["{\overset{\leftarrow}{\kappa}}"{description}, shift left, curve={height=12pt}, dashed, from=2-3, to=1-8]
	\arrow["{\pi_{H}}"', from=2-3, to=3-2]
	\arrow["{\overline{\kappa}}"{description}, curve={height=12pt}, dashed, from=3-2, to=1-8]
	\arrow["\pi"{description}, from=3-4, to=3-2]
	\arrow["{\kappa_X}"', curve={height=18pt}, dashed, from=3-4, to=3-6]
	\arrow["{\tau_{\pi_{H\times H'}}}", from=3-6, to=3-4]
	\arrow["{\tau_{\pi_{\operatorname{Hom}}}}"', from=3-6, to=3-8]
\end{tikzcd} 
\]
\end{adjustwidth}
    \caption{The complete mixing diagram for the equivariant linear kernels over homogeneous spaces which is constructed from joining \cref{fig:mixing-diagram-linear-kernels} with \cref{fig:domain-reduction-gcnn-case}.
    This shows that there are five ways to view the integration kernels: two without equivariance and three with equivariance. 
    Without equivariance one can view the kernel as a section of an associated vector bundle $ \kappa_X\in\Gamma((G\times G)\times_{\sigma\otimes \rho^*}\mathrm{Hom}(V_\rho,V_\sigma)) $ or as an element in the induced representation $ \kappa\in\mathcal{K}=\mathcal{I}_{\sigma \otimes \rho^*} $.
    With equivariance the kernel can be viewed as a one-argument kernel $ \hat\kappa \in \mathcal{K}_G $ on $ G $, as a kernel on the homogeneous space $ \overset{\leftarrow}{\kappa} \in \mathcal{K}_X $, or as a kernel on the double coset space $ \overline{\kappa}\in \mathcal{K}_D  $.
}\label{fig:complete-gcnn-diagram}
\end{figure}
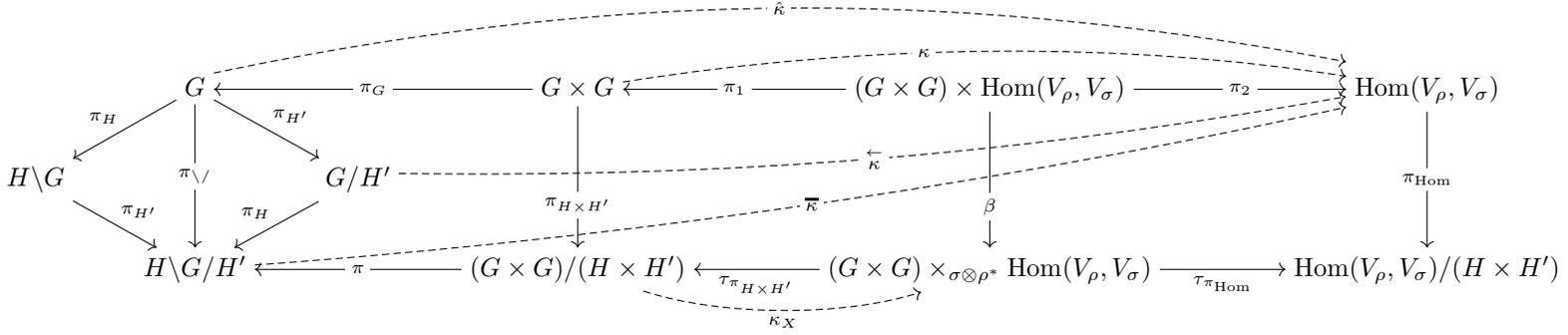
    Note that since $H\times H'$ acts on each factor of $G\times G$ we have that $(G\times G)/(H\times H')\isom G/H\times G/H'$, hence we can view the projection $\pi_{H \times H^{\prime}}:G\times G\to (G\times G)/(H\times H')$ in \cref{fig:complete-gcnn-diagram} as $\pi_{H \times H^{\prime}}:G\times G\to G/H\times G/H'$.

    The map $ \pi: G/H \times G/H^{\prime} \,\to\, H \backslash G/H^{\prime} $ is defined as $\pi(gH,g'H')=Hg^{-1}g'H'$ and defines a fibre bundle where the fibre can be viewed as either $(G\times H)/H$ or $(G\times H')/H'$, which in either case is isomorphic to $G$.
    Comparing this to the bundle $\pi_G:G\times G\to G$ where the fibre is $G$ we see that the fibres are the same.

    This means that, naturally, all horisontal arrows in \cref{fig:complete-gcnn-diagram} pairwise generates the same fibrations; in order from left to right: $G$, $\hom(V_\rho,V_\sigma)$, $G\times G$.
    Following the same reasoning all the vertical arrows yield fibres $H\times H'$.

    Worth noting is that altough $ \pi_{ \backslash /}: G \,\to\, H \backslash G/H^{\prime} $ generates a fibre of $ H \times H^{\prime} $ it is a principal $ H \times H^{\prime} $-bundle iff $ H \cap gH^{\prime}g^{-1}= \left\{ e \right\}  $ for all $ g \in G $.

\section{Arbitrary equivariant operators on vector features}
\label{sec:nonlinear-maps}
In \cref{sec:linear-maps} we covered \emph{linear} maps between induced representations and in this section we generalize this to the setting of arbitrary \emph{non-linear} maps.
We start in \cref{sec:nonlinear_maps_as_integral_operators} by defining such non-linear
maps as integral operators.
Section \ref{sec:nonlin-equivariant-maps} studies the symmetry constraints imposed on them when additionally demanding their equivariance.
As in \cref{sec:linear-maps}, if nothing else is stated, $G$ is a unimodular group.

\subsection{Non-linear maps as integral operators}
\label{sec:nonlinear_maps_as_integral_operators}
As we have previously motivated, the goal is to construct a framework that encompasses all non-linear maps between induced representations.
Now, a general map $ \Phi $ between two induced representations $ \mathcal{I}_\rho $ and $ \mathcal{I}_\sigma $ can take any number of forms, but a common occurrence in machine learning is to have some kind of spatial aggregation, or message passing, that allows the output at any specific point to depend on the input features at other points.

Taking inspiration from this practice, and making the spatial aggregation explicit, we propose that $ \Phi: \mathcal{I}_\rho \to \mathcal{I}_\sigma $ takes the form
\begin{equation}\label{eq:prototype-general-map}
	[\Phi f](g) = \int_{G^{\prime}} \omega(f, g,g') \mathrm{d}g'
\end{equation}
where $ \omega:\, \mathcal{I}_\rho \times G \times G^{\prime} \,\to\, V_\sigma $ is a map which,
conditioned on the whole feature field $f$,
aggregates features from any $g^{\prime}\in G^{\prime}$ into an output feature at $g\in G$.
Note that, a priori, $ G $ and $ G^{\prime} $ can be different.

\begin{remark}
    We will treat $ G $ and $ G^{\prime} $ as groups without any relation between them. 
    However, when $ G=G^{\prime} $, then the message passing point of view is much clearer and the functional dependence means that $ f $ may potentially be evaluated at positions $ g^{\prime \prime}\in G $ different from $ g $ and $ g^{\prime} $.
\end{remark}

\begin{remark}
    The $f$ dependence in $\omega$ may also include derivatives of $f$.
    This is however beyond the scope of this paper and will not be discussed here.
\end{remark}

To further structure this we want to define the space of the integrands.
\begin{definition}\label{def:general-Omega-space}
    Let $\Omega \subset \operatorname{Map}(\mathcal{I}_\rho \times G \times G^{\prime}, V_\sigma)$ be the space of maps $\omega:\mcI_\rho\times G\times G^{\prime}\to V_\sigma$ satisfying 
    \begin{equation}\label{eq:three-arg-general-map-constraint}
    \omega(f,gh,g')=\sigma(h^{-1})\omega(f,g,g^{\prime}), \quad \forall h\in H.
    \end{equation}
\end{definition}

Note that \cref{def:general-Omega-space} is reminiscent of the equivariance condition used in the induced representation.
In order to fully put $ \omega $ in the same context as the previous setting, that is, describe it as an induced representation as we did for the feature maps and the integration kernels, we need a way to endow $ \mathcal{I}_\rho \times G \times G^{\prime} $ with a group structure such that we can view $ H \leq G $ as a subgroup of $ \mathcal{I}_\rho \times G \times G^{\prime} $. The easiest group structure to equip $ \mathcal{I}_\rho \times G \times G^{\prime} $ with is 
\begin{equation}
    (f,g,g^{\prime})(\hat{f},\hat{g},\hat{g}^{\prime})=(f+\hat{g},g \hat{g}, g^{\prime} \hat{g}^{\prime}),
\end{equation}
and with that we can make the following proposition.

\begin{proposition}
    $ \Omega $, as defined above, is an induced representation $ \ind _{H}^{\mathcal{I}_\rho \times G \times G^{\prime}}\sigma $ with the $ \mathcal{I}_\rho \times G \times G^{\prime} $-action 
    \begin{equation}
        [(\hat{f},\hat{g},\hat{g}^{\prime})\omega](f,g,g^{\prime})=\omega(f-\hat{f},\hat{g}^{-1}g,\hat{g}^{\prime-1}g^{\prime}),
    \end{equation}
    satisfying the Mackey condition 
    \begin{equation}
        \omega((f,g,g^{\prime})\triangleleft(0,h,e))=\omega(f,gh,g^{\prime})=\sigma(h^{-1})\omega(f,g,g^{\prime}).
    \end{equation}
    Equipping $ \Omega $ with pointwise addition and scalar multiplication makes it a vector space.
\end{proposition}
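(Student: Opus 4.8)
The plan is to observe that the binary operation placed on $\mathcal{I}_\rho \times G \times G'$ is nothing but the direct product of the additive group of the vector space $\mathcal{I}_\rho$ with the groups $G$ and $G'$ (so the first component should be read as the vector-space addition $f + \hat f$ on $\mathcal{I}_\rho$). Thus $\tilde G := \mathcal{I}_\rho \times G \times G'$ is a group with identity $(0,e,e)$ and inverses $(f,g,g')^{-1} = (-f, g^{-1}, g'^{-1})$; associativity and closure are inherited componentwise and require no real work. The key structural point is then that $\iota\colon H \to \tilde G$, $h \mapsto (0,h,e)$, is an injective group homomorphism, so $H$ is realised as the subgroup $\iota(H) = \{(0,h,e) : h \in H\} \leq \tilde G$, on which $\sigma$ (transported along $\iota^{-1}$) is a representation; as usual we abuse notation and write $\ind_H^{\tilde G}\sigma$ for the resulting induced representation.

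Granting this, the heart of the argument is to match \cref{def:general-Omega-space} against \cref{def:induced-representation}. Right translation in $\tilde G$ by $\iota(h) = (0,h,e)$ acts as $(f,g,g') \triangleleft (0,h,e) = (f, gh, g')$, and $\iota(h)^{-1} = (0,h^{-1},e)$ corresponds to $h^{-1} \in H$, so the Mackey condition $\omega(\tilde k\, \iota(h)) = \sigma(\iota(h)^{-1}) \omega(\tilde k)$ becomes exactly $\omega(f,gh,g') = \sigma(h^{-1})\omega(f,g,g')$, i.e.\ \cref{eq:three-arg-general-map-constraint}. Hence $\Omega$ coincides, as a set, with $\ind_H^{\tilde G}\sigma \subset \operatorname{Map}(\tilde G, V_\sigma)$, and the displayed Mackey identity in the statement is just this equivalence spelled out. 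The stated $\tilde G$-action is then read off from the general induced-representation action $[\tilde g\,\omega](\tilde k) = \omega(\tilde g^{-1}\tilde k)$ of \cref{def:induced-representation}: taking $\tilde g = (\hat f, \hat g, \hat g')$ and $\tilde k = (f,g,g')$ and computing $\tilde g^{-1}\tilde k = (-\hat f, \hat g^{-1}, \hat g'^{-1})(f,g,g') = (f - \hat f, \hat g^{-1} g, \hat g'^{-1} g')$ gives precisely $[(\hat f, \hat g, \hat g')\omega](f,g,g') = \omega(f - \hat f, \hat g^{-1} g, \hat g'^{-1} g')$.

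For the final clause I would verify that $\Omega$ is a linear subspace of $\operatorname{Map}(\mathcal{I}_\rho \times G \times G', V_\sigma)$ under pointwise addition and scalar multiplication: since each $\sigma(h^{-1})$ is linear on $V_\sigma$, for $\omega_1, \omega_2 \in \Omega$ and a scalar $c$ one has $(\omega_1 + c\,\omega_2)(f, gh, g') = \sigma(h^{-1})\big(\omega_1 + c\,\omega_2\big)(f,g,g')$, so the defining constraint is preserved; together with the ambient vector-space structure this gives the claim, consistent with the remark after \cref{def:induced-representation} that every induced representation carries a natural vector-space structure. The proposition is thus essentially bookkeeping and I expect no serious obstacle; the only points needing a little care are the identification of $H$ with $\iota(H)$ (with the attendant harmless notational abuse in writing $\ind_H^{\tilde G}\sigma$) and checking that the product really is well defined on the first factor, which it is because $\mathcal{I}_\rho$ is closed under addition.
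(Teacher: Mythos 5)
Your proof is correct and follows essentially the same route as the paper's: realising $H$ as the subgroup $\{0\}\times H\times\{e\}$ of the product group and matching the constraint in \cref{def:general-Omega-space} against \cref{def:induced-representation}, with the action read off from $[\tilde g\,\omega](\tilde k)=\omega(\tilde g^{-1}\tilde k)$. Your version is in fact slightly more careful than the paper's (which leaves the Mackey-condition matching and the vector-space check implicit, and contains a typo $f+\hat g$ for $f+\hat f$ in the group law that you correctly read as componentwise addition on $\mathcal{I}_\rho$).
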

\begin{proof}
    This statement follows immediately from that $ V_\sigma $ is a left $ H $-space and that $ \mathcal{I}_\rho \times G \times G^{\prime} $ is a product group.
    Specifically, the inclusion $ \iota :H \to \mathcal{I}_\rho \times G \times G^{\prime} $ as $ \iota(H)= \left\{ 0 \right\} \times H \times \left\{ e \right\} $ yields that $ H $ is a subgroup to $ \mathcal{I}_\rho \times G \times G^{\prime} $.
    Hence $ \Omega $ together with the $ \mathcal{I}_\rho \times G \times G^{\prime} $-action 
    \begin{equation}
        (\hat{f},\hat{g}, \hat{g}^{\prime})\omega(f,g,g^{\prime})=\omega((\hat{f},\hat{g},\hat{g}^{\prime})^{-1}(f,g,g^{\prime}))=\omega(f-\hat{f},\hat{g}^{-1}g,\hat{g}^{\prime-1}g^{\prime}),
    \end{equation}
    forms an induced representation.
\end{proof}
Furthermore, taking the quotient by this right $ H $-action on $ \mathcal{I}_\rho \times G \times G^{\prime} $ results in a principal $ H $-bundle as $ \pi_H:\mathcal{I}_\rho \times G \times G^{\prime}\to \mathcal{I}_\rho \times G/H \times G^{\prime} $.
As a consequence we can put this into the same context as the previous sections using the Cartan mixing diagram, see \cref{fig:mixing-diagram-for-omega}.
Additionally, by \cref{prop:function-isomorphism} we know that we can also represent these objects as local sections of an associated bundle $ \left( \mathcal{I}_\rho \times G \times G^{\prime} \right) \times _\sigma V_\sigma\to \mathcal{I}_\rho \times G/H \times G^{\prime} $.

\begin{figure}[h]
    \centering
    \[
\begin{tikzcd}[cramped]
	{\mathcal{I}_\rho \times G\times G'} && {(\mathcal{I}_\rho \times G\times G')\times V_\sigma} && {V_\sigma} \\
	\\
	{\mathcal{I}_\rho \times G/H\times G'} && {(\mathcal{I}_\rho \times G\times G')\times_\sigma V_\sigma} && {H\backslash V_\sigma}
	\arrow["\omega", curve={height=-18pt}, dashed, from=1-1, to=1-5]
	\arrow["{\pi_H}"{description}, from=1-1, to=3-1]
	\arrow["{\pi_1}"{description}, from=1-3, to=1-1]
	\arrow["{\pi_2}"{description}, from=1-3, to=1-5]
	\arrow["\beta"{description}, from=1-3, to=3-3]
	\arrow["{\pi_{H\backslash}}"{description}, from=1-5, to=3-5]
	\arrow["{\omega_X}", curve={height=-18pt}, dashed, from=3-1, to=3-3]
	\arrow["{\tau_{\pi_H}}", from=3-3, to=3-1]
	\arrow["{\tau_{\pi_{H\backslash}}}"', from=3-3, to=3-5]
\end{tikzcd}
\]
\caption{The mixing diagram for the induced representation used in the integral formulation of maps between induced representations.
    This shows that one can equivalently view the ``kernel'' either as an element of the induced representation $ \omega \in \Omega $ or as a section of the corresponding associated vector bundle $ \omega_X\in\Gamma((\mathcal{I}_\rho \times G\times G')\times_\sigma V_\sigma) $.}\label{fig:mixing-diagram-for-omega}
\end{figure}

Even though the spatial aggregation is well motivated from common practices in machine learning, it raises the question if the map $ \Phi $ is universal. 
That is, if any given map $ \lambda $ between induced representations can be represented by $ \Phi $ for some choice of $ \omega $.

\begin{theorem}\label{thm:generality-of-integral-formulation}
    Any map $\lambda:\mcI_\rho\to\mcI_\sigma$ between induced representations can be written as an integral
\begin{equation}\label{eq:general-map-between-induced-reps}
\lambda[f](g)=[\Phi f](g):=\int_{G^{\prime}}\omega(f,g,g')\mathrm{d}g',
\end{equation}
with a suitable choice of $\omega\in\Omega$. 
\end{theorem}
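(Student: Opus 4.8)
The plan is to exploit the fact that $\lambda$ is allowed to be a completely \emph{arbitrary} map, so the integral operator in \cref{eq:prototype-general-map} need not perform any genuine spatial aggregation: it only has to reproduce the value $\lambda[f](g)$ while its integrand respects the $H$-constraint \cref{eq:three-arg-general-map-constraint} defining $\Omega$. Concretely, I would look for $\omega$ of the separated form
\[
    \omega(f,g,g') \;=\; \phi(g')\,\lambda[f](g),
\]
where $\phi:G'\to\R$ is a fixed scalar function, chosen once and for all, with $\int_{G'}\phi(g')\,\mathrm{d}g'=1$.

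First I would establish the existence of such a $\phi$. Since $G'$ is locally compact it carries a left Haar measure for which every compact neighbourhood $K$ of the identity satisfies $0<\mathrm{vol}(K)<\infty$; then $\phi:=\mathrm{vol}(K)^{-1}\mathbf{1}_K$ is non-negative, compactly supported and has unit integral, which in particular makes $g'\mapsto\omega(f,g,g')$ integrable for every fixed $f$ and $g$. (If a later use tacitly requires $\omega$ to be continuous or smooth in its last argument, as when $G'$ is a Lie group, one simply replaces $\mathbf{1}_K$ by a normalised non-negative bump function in $C^\infty_c(G')$.) Next I would verify the two required properties. Membership $\omega\in\Omega$ is immediate from the Mackey condition on the \emph{output}: since $\lambda[f]\in\mcI_\sigma$ satisfies $\lambda[f](gh)=\sigma(h^{-1})\lambda[f](g)$, we get $\omega(f,gh,g')=\phi(g')\sigma(h^{-1})\lambda[f](g)=\sigma(h^{-1})\omega(f,g,g')$ for all $h\in H$, which is exactly \cref{eq:three-arg-general-map-constraint}. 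And since $\lambda[f](g)$ is constant in $g'$ while $\phi$ integrates to $1$, we obtain $[\Phi f](g)=\int_{G'}\omega(f,g,g')\,\mathrm{d}g'=\lambda[f](g)$, so $\Phi=\lambda$.

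The ``main obstacle'' here is cosmetic rather than mathematical: one must confirm that a normalising function $\phi$ exists on an arbitrary locally compact $G'$ (it does, via a compact identity neighbourhood of positive finite Haar measure) and, if relevant, that it can be taken inside whatever regularity class is implicit for $\Omega$. The genuine content of this theorem is therefore not that the integral representation is hard to realise, but that before equivariance is imposed the class of operators \cref{eq:prototype-general-map} is no restriction whatsoever; the real rigidity — and hence the interesting universality statement — appears only in the equivariant refinement \cref{cor:steerable_functional_argument_reduction}.
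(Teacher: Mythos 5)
Your proof is correct, and it differs from the paper's in one substantive respect: where you take $\omega(f,g,g')=\phi(g')\,\lambda[f](g)$ for a genuine, normalised, compactly supported function $\phi$ with $\int_{G'}\phi =1$, the paper takes $\omega(f,g,g')=\delta(g')\,\lambda[f](g)$ with $\delta$ the Dirac delta, and must then append a remark that the theorem "should be interpreted in a distributional sense," deferring the rigorous treatment to \cref{app:functional-appraoch-to-non-lin-maps}. Both arguments share the same key idea — make the integrand independent of any real aggregation by placing all of $\lambda[f](g)$ inside and multiplying by something that integrates to $1$ over $g'$ — and both verify membership in $\Omega$ identically, via the Mackey condition on the output $\lambda[f]\in\mcI_\sigma$. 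What your choice buys is that $\omega$ is an honest element of $\operatorname{Map}(\mcI_\rho\times G\times G',V_\sigma)$ as $\Omega$ is literally defined in \cref{def:general-Omega-space}, so no distributional caveat is needed; your existence argument for $\phi$ (a compact identity neighbourhood has positive finite Haar measure, or a smooth bump if regularity is wanted) is standard and fine. What the paper's choice buys is consistency with the later equivariant universality statement (\cref{prop:arbitrary-map-from-general-equiv-map}), where the delta reappears and a smeared version would no longer work as directly, since there $\hat\omega(g^{-1}f,g')=\phi(g')\lambda[g^{-1}f](e)$ would integrate to $\lambda[f](g)\int\phi$ only because $\lambda[g^{-1}f](e)$ is constant in $g'$ — which in fact still holds, so your construction would adapt there too. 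In short: your route is a strict improvement in rigour for this particular theorem, at no cost.
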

\begin{proof}
    Let $ \lambda $ be any map between $ \mathcal{I}_\rho $ and $ \mathcal{I}_\sigma $ and choose $\omega(f,g,g')=\delta(g')\lambda[f](g)$. 
This yields
\begin{equation}
[\Phi f](g)=\int_{G^{\prime}}\omega(f,g,g')\mathrm{d}g'=\int_{G^{\prime}}\delta(g')\lambda[f](g)\mathrm{d}g'=\lambda[f](g)
\end{equation}
as wanted.
To show the required Mackey-property of $\Phi f$ that $[\Phi f](gh)=\sigma(h^{-1})[\Phi f](g)$ we use the transformation of $\omega$ as $\omega(f,gh,g')=\sigma(h^{-1})\omega(f,g,g')$.
This property is satisfied here since 
\begin{equation}
 \omega(f,gh,g^{\prime})=\delta(g^{\prime})\lambda[f](gh)=\sigma(h^{-1})\delta(g^{\prime})\lambda[f](g) = \sigma(h^{-1})\omega(f,g,g^{\prime}).
\end{equation}
Now, using the definition of $\Phi f$ we obtain
\begin{equation}
[\Phi f](gh)=\int_G\omega(f,gh,g')\mathrm{d}g'=\sigma(h^{-1})\int_G\omega(f,g,g')\mathrm{d}g'=\sigma(h^{-1})[\Phi f](g)
\end{equation}
which results in $[\Phi f](gh)=\sigma(h^{-1})[\Phi f](g)$ and hence $\Phi f\in \mcI_\sigma$.
\end{proof}
This shows that the suggested form is universal.
 
\begin{remark}
   The theorem above should be interpreted in a distributional sense where the kernel $ \omega $ may be a generalised function.
   In \cref{app:functional-appraoch-to-non-lin-maps} we present the formulation of the theory in the language of distributions.
\end{remark}

\subsection{Equivariant non-linear maps}
\label{sec:nonlin-equivariant-maps}
Now we turn to the equivariant case for the non-linear map and start by stating a sufficient condition for the map to be $ G $-equivariant.

\begin{proposition}\label{prop:equivariance-constraint-for-general-kernel}
If $\omega\in \Omega$ satisfies
\begin{equation}\label{eq:equivariance-constraint-for-general-kernel}
\omega(f,g,g')=\omega(kf,kg,g'),\quad \forall k,g \in G,\ f \in \mathcal{I}_\rho, 
\end{equation}
then the integral map $\Phi:\mcI_\rho \to \mcI_\sigma$, as defined in \cref{eq:general-map-between-induced-reps}, is equivariant.
\end{proposition}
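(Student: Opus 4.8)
The plan is a direct verification. I would start by unwinding what $G$-equivariance means for $\Phi$: since the $G$-action on an induced representation is $[kf](g) = f(k^{-1}g)$ (\cref{def:induced-representation}), the claim $k[\Phi f] = \Phi(kf)$ is, evaluated at a point, the identity $[\Phi(kf)](g) = [\Phi f](k^{-1}g)$, to be checked for all $k,g \in G$ and $f \in \mathcal{I}_\rho$.

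Before this, I would record that $\Phi$ is a well-defined map $\mathcal{I}_\rho \to \mathcal{I}_\sigma$, so that both sides of the equivariance identity are meaningful: $\mathcal{I}_\rho$ is a $G$-representation, hence closed under the action, so $kf \in \mathcal{I}_\rho$; and since $\omega \in \Omega$, the computation
\[
[\Phi f](gh) = \int_{G'}\omega(f,gh,g')\,\mathrm{d}g' = \sigma(h^{-1})\int_{G'}\omega(f,g,g')\,\mathrm{d}g' = \sigma(h^{-1})[\Phi f](g),
\]
using \cref{eq:three-arg-general-map-constraint}, shows $\Phi f \in \mathcal{I}_\sigma$ --- this is exactly the argument already used in the proof of \cref{thm:generality-of-integral-formulation}. (If needed, the integral is read in the distributional sense of the remark following that theorem.)

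The main step is then a single substitution. Taking $g \mapsto k^{-1}g$ in the hypothesis \cref{eq:equivariance-constraint-for-general-kernel}, which reads $\omega(f,g,g') = \omega(kf,kg,g')$, gives $\omega(kf,g,g') = \omega(f,k^{-1}g,g')$. Plugging this into the definition of $\Phi$,
\[
[\Phi(kf)](g) = \int_{G'}\omega(kf,g,g')\,\mathrm{d}g' = \int_{G'}\omega(f,k^{-1}g,g')\,\mathrm{d}g' = [\Phi f](k^{-1}g) = [k(\Phi f)](g),
\]
and since $k,g,f$ were arbitrary this is exactly the required $G$-equivariance.

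I do not expect a genuine obstacle: this is essentially a one-line computation. The only point worth flagging is the contrast with the linear case of \cref{prop:gcnn-integral}, where one rewrites the kernel and then performs a change of the integration variable that relies on unimodularity of $G$; here, by design, the group element $k$ is absorbed entirely into the \emph{arguments} of $\omega$, the integration variable $g' \in G'$ is untouched, and no invariance of Haar measure is invoked. One should also just double-check the direction of the substitution in \cref{eq:equivariance-constraint-for-general-kernel} (stated as $\omega(f,g,g')=\omega(kf,kg,g')$, so $\omega(kf,g,g')=\omega(f,k^{-1}g,g')$ by $g\mapsto k^{-1}g$), but this is routine.
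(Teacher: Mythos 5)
Your proof is correct and follows essentially the same route as the paper: unwind the equivariance condition to the identity $\int_{G'}\omega(f,k^{-1}g,g')\,\mathrm{d}g'=\int_{G'}\omega(kf,g,g')\,\mathrm{d}g'$ and deduce it pointwise from the hypothesis via the substitution $g\mapsto k^{-1}g$. The only addition is your explicit check that $\Phi f\in\mathcal{I}_\sigma$, which the paper relegates to the proof of \cref{thm:generality-of-integral-formulation}.
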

\begin{proof}
Again, for equivariance, we need $k[\Phi f](g)=[\Phi (kf)](g)$.
This results in
\begin{equation}
    \begin{aligned}
k[\Phi  f](g)&=[\Phi (kf)](g)&&\Leftrightarrow\\
\int_{G^{\prime}}\omega(f,k^{-1}g,g')\mathrm{d}g'&=\int_{G^{\prime}}\omega(kf,g,g')\mathrm{d}g.&&
    \end{aligned}
\end{equation}
Now, using that $ \omega(f,g,g')=\omega(kf,kg,g') $ we obtain that $ \omega(f,k^{-1}g,g')=\omega(kf,g,g') $, which shows the wanted result.
\end{proof}
   This shows, as we had in the linear case in \cref{sec:linear-maps}, that $ \omega $ is constant on some $ G $-orbit.
   Specifically, if we define a left action of $ G $ on $ \mathcal{I}_\rho \times G \times G^{\prime} $ as 
   \begin{equation}
       k(f,g,g^{\prime})=(kf,kg,g^{\prime}),
   \end{equation}
   then the equivariance condition becomes $ \omega(k(f,g,g^{\prime}))=\omega(f,g,g^{\prime}) $.
   This motivates a quotient by this action and let $ \omega $ descend to the resulting space.
   Naturally, this is done through a projection $ \pi_G:\mathcal{I}_\rho \times G \times G^{\prime} \to \mathcal{I}_\rho \times G^{\prime} $ acting as $ \pi_G(f,g,g^{\prime})=(g^{-1}f,g^{\prime}) $.
   Recall that, in the linear case, the integration kernel for equivariant maps was constant on $ G $-orbits, $ \kappa(k(g,g^{\prime}))=\kappa(kg,kg^{\prime})=\kappa(g,g^{\prime}) $, and this allowed to reduce the kernel to a single argument through $ \kappa(g,g^{\prime})=\kappa(e,g^{-1}g^{\prime}):=\hat{\kappa}(g^{-1}g^{\prime}) $.
   We can do the same process for the three-argument $ \omega $ in order to reduce it to two arguments.

\begin{remark}
    The projection $ \pi_G(f,g,g^{\prime})=(g^{-1}f,g^{\prime}) $ together with the right action of $ H $ on $ \mathcal{I}_\rho \times G \times G^{\prime} $ as $ \left( f,g,g^{\prime} \right) \triangleleft h= \left( f,gh,g^{\prime} \right) $ induces a right action of $ H $ on $ \mathcal{I}_\rho \times G^{\prime} $ through
    \begin{equation}
        \pi_G(f,gh,g^{\prime})=(h^{-1}g^{-1}f,g^{\prime})=(g^{-1}f,g^{\prime})\triangleleft h.
    \end{equation}
    With this action we get that $ \pi_G((f,g,g^{\prime})\triangleleft h)=\pi_G(f,g,g^{\prime})\triangleleft h $ and hence $ \pi_G:\mathcal{I}_\rho \times G \times G^{\prime}\to \mathcal{I}_\rho \times G^{\prime} $ is a $ H $-equivariant principal $ G $-bundle.
\end{remark}

\begin{theorem}\label{cor:steerable_functional_argument_reduction}
If $\omega\in \Omega$ satisfies $ \omega(f,g,g^{\prime})=\omega(kf,kg,g^{\prime}) $ for all $ k\in G $ then $ \Phi $ is equivariant and $ \omega $ can be reduced to a two-argument map $ \hat\omega(f,g^{\prime}) $ satisfying the Mackey constraint
\begin{equation}\label{eq:two-arg-general-map-constraint}
\hat\omega(hf,g')=\sigma(h)\hat\omega(f,g'), \quad \forall h\in H.
\end{equation}
Hence $ \Phi $ can be formulated as 
\begin{equation}
\label{eq:nonlinear-operator}
    [\Phi f](g)=\int_{G^{\prime}}\omega(f,g,g^{\prime})\mathrm{d}g^{\prime}= \int_{G^{\prime}} \hat{\omega}(g^{-1}f,g^{\prime})\mathrm{d}g^{\prime}.
\end{equation}
\end{theorem}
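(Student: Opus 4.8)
The plan is to obtain the equivariance claim for free and then build $\hat\omega$ by restricting $\omega$ to the slice $g=e$; both asserted properties then drop out by feeding two well-chosen group elements into the standing hypotheses. Since $\omega$ satisfies \cref{eq:equivariance-constraint-for-general-kernel}, the operator $\Phi$ is equivariant by \cref{prop:equivariance-constraint-for-general-kernel}, so only the reduction needs work. To that end I would set
\begin{equation}
    \hat\omega(f,g') := \omega(f,e,g'),
\end{equation}
which is nothing but $\omega$ read off along the section $g\mapsto e$ of the projection $\pi_G:\mathcal{I}_\rho\times G\times G'\to\mathcal{I}_\rho\times G'$, $\pi_G(f,g,g')=(g^{-1}f,g')$.

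First, to recover the integral formula, apply the $G$-invariance \cref{eq:equivariance-constraint-for-general-kernel} with $k=g^{-1}$, which gives $\omega(f,g,g') = \omega(g^{-1}f,\,g^{-1}g,\,g') = \omega(g^{-1}f,e,g') = \hat\omega(g^{-1}f,g')$. Substituting this into \cref{eq:general-map-between-induced-reps} immediately yields $[\Phi f](g)=\int_{G'}\hat\omega(g^{-1}f,g')\,\mathrm{d}g'$, as claimed; in particular $\hat\omega$ carries exactly the same information as $\omega$ under the hypothesis, so the reduction loses nothing.

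Second, for the Mackey constraint on $\hat\omega$, fix $h\in H\le G$. Applying the $G$-invariance once more (now with base arguments $f$, $h^{-1}$ and $k=h$) and then the defining constraint of $\Omega$ (\cref{eq:three-arg-general-map-constraint}) with the right-multiplying element taken to be $h^{-1}$, I compute
\begin{equation}
    \hat\omega(hf,g') \;=\; \omega(hf,e,g') \;=\; \omega(f,\,h^{-1},\,g') \;=\; \sigma(h)\,\omega(f,e,g') \;=\; \sigma(h)\,\hat\omega(f,g'),
\end{equation}
which is precisely \cref{eq:two-arg-general-map-constraint}.

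There is no genuine obstacle here; the statement is purely bookkeeping in substituting $k=g^{-1}$ and $k=h^{-1}$. The one point demanding a little care is the order of the two identities in the last display: the $\Omega$-Mackey relation only pulls a right $H$-factor through $\sigma$ when that factor sits in the \emph{middle} argument of $\omega$, so one must first use the $G$-action identity to move the $h$ out of the feature slot and into the group slot, and only then apply \cref{eq:three-arg-general-map-constraint}. A more structural phrasing — which I would mention but not belabour — is that the $G$-invariance lets $\omega$ descend along $\pi_G$ to a map on $\mathcal{I}_\rho\times G'$, and the induced residual right $H$-action $(f,g')\triangleleft h=(h^{-1}f,g')$ on that quotient (as recorded in the remark preceding the theorem) is exactly what converts the three-argument Mackey condition into the two-argument one; the direct computation above is simply the shortest route to the same conclusion.
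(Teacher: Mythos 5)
Your proposal is correct and follows essentially the same route as the paper: equivariance is delegated to \cref{prop:equivariance-constraint-for-general-kernel}, $\hat\omega$ is defined by slicing $\omega$ at $g=e$, the integral formula follows from substituting $k=g^{-1}$, and the Mackey constraint comes from combining the $G$-invariance with \cref{eq:three-arg-general-map-constraint}. The only (cosmetic) difference is that you derive $\hat\omega(hf,g')=\sigma(h)\hat\omega(f,g')$ directly by moving $h$ from the feature slot into the group slot, whereas the paper expands $\omega(f,gh,g')$ in two ways; these are the same substitutions in a different order.
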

\begin{proof}
    That $ \Phi $ is equivariant is shown in the previous proposition.
    The two-argument map can be constructed by using $  \omega(f,g,g^{\prime})=\omega(kf,kg,g^{\prime})  $ with $k=g^{-1}$, for which $ \omega(f,g,g^{\prime})=\omega(g^{-1}f,e,g^{\prime}) $.
Following this we define the two-argument map as $ \hat\omega(f,g^{\prime}):=\omega(f,e,g^{\prime}) $, or equivalently $ \omega(f,g,g^{\prime})=:\hat\omega(g^{-1}f,g^{\prime}) $.

For the transformation constraint on $\hat\omega$, we note from the definition of $\omega$ that $\omega(f,gh,g')=\sigma(h^{-1})\omega(f,g,g')$.
Hence, with $ \omega(f,g,g^{\prime})=\hat\omega(g^{-1}f,g^{\prime}) $ we get that 
\begin{equation}
    \omega(f,gh,g')=\hat\omega(h^{-1}g^{-1}f,g'),
\end{equation}
and thus
\begin{equation}
    \sigma(h^{-1})\hat\omega(g^{-1}f,g')=\hat\omega(h^{-1}g^{-1}f,g'),    
\end{equation}
which gives the expected transformation $\hat\omega(hf,g')=\sigma(h)\hat\omega(f,g')$.
Putting this together we get 
\begin{equation}
[\Phi f](g)=\int_{G^{\prime}}\omega(f,g,g')\mathrm{d}g'=\int_{G^{\prime}}\hat\omega(g^{-1}f,g')\mathrm{d}g'.
\end{equation}
This completes the proof.
\end{proof}
\begin{remark}
    Intuitively one can view the process of projecting $ \mathcal{I}_\rho \times G \times G^{\prime} \,\to\, \mathcal{I}_\rho \times G^{\prime} $ that maps $ (f,g,g^{\prime}) \,\mapsto\, (g^{-1}f,g^{\prime}) $ as centring the feature map $ f $ at the group element $ g $. 
\end{remark}
\begin{definition}
    Let $ \hat{\Omega} \subset \operatorname{Map}(\mathcal{I}_\rho \times G^{\prime}, V_\sigma) $ denote the space of $ \hat{\omega}:\mathcal{I}_\rho \times G^{\prime} \to V_\sigma $ such that $ \hat{\omega}(hf,g^{\prime}h^{\prime})=\sigma(h)\hat{\omega}(f,g^{\prime}) $ for all $ h\in H,\, h^{\prime}\in H^{\prime} $.
\end{definition}

We can now state a result that this representation of (non-linear) equivariant maps is general.
To do this we choose $\hat\omega(g^{-1}f,g')=\delta(g')\lambda[g^{-1}f](e)$ where $\lambda$ is any map between induced rep satisfying the equivariance condition $\lambda[g^{-1}f](e)=\lambda[f](g)$.
Then, inserting this into the $\Phi f$ map yields $[\Phi f](g)=\lambda[f](g)$.

\begin{theorem}\label{prop:arbitrary-map-from-general-equiv-map}
	Given any equivariant operator $\lambda: \mathcal{I}_\rho \to \mathcal{I}_\sigma$ there exist a choice of $ \hat{\omega}$ where $\Phi = \lambda$, given by
\begin{equation}\label{eq:abritrary-from-general-map}
\hat\omega(g^{-1}f,g')=\delta(g')\lambda[g^{-1}f](e)
\end{equation}
where $\delta(g^{\prime})$ is the Dirac delta distribution.
\end{theorem}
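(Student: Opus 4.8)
The plan is to proceed exactly as in the proof of \cref{thm:generality-of-integral-formulation}, now carrying the equivariance along. There are two things to check: that the proposed $\hat\omega$ is an admissible integrand (it lies in $\hat\Omega$, at least in the distributional sense), and that substituting it into the integral formula \cref{eq:nonlinear-operator} of \cref{cor:steerable_functional_argument_reduction} returns $\lambda$ on the nose.

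The core of the argument is a one-line substitution. Writing $\Phi$ in the reduced form \cref{eq:nonlinear-operator} and inserting \cref{eq:abritrary-from-general-map},
\begin{equation}
    [\Phi f](g) = \int_{G^{\prime}} \hat\omega(g^{-1}f, g^{\prime})\,\mathrm{d}g^{\prime} = \int_{G^{\prime}} \delta(g^{\prime})\,\lambda[g^{-1}f](e)\,\mathrm{d}g^{\prime} = \lambda[g^{-1}f](e),
\end{equation}
by the sifting property of the Dirac delta. It then remains to recognise $\lambda[g^{-1}f](e) = [\lambda f](g)$, which is precisely the hypothesis that $\lambda$ is equivariant: from $\lambda(kf) = k\lambda(f)$ and the $G$-action $[k\psi](g^{\prime}) = \psi(k^{-1}g^{\prime})$ on $\mathcal{I}_\sigma$, taking $k = g^{-1}$ and evaluating at $g^{\prime}=e$ gives $\lambda[g^{-1}f](e) = [g^{-1}\lambda(f)](e) = [\lambda f](g)$. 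Hence $[\Phi f](g) = [\lambda f](g)$ for all $g$ and all $f$, i.e.\ $\Phi = \lambda$.

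For admissibility, write $\hat\omega(f,g^{\prime}) = \delta(g^{\prime})\,\lambda[f](e)$ as a function of a general first argument. For $h \in H$ the required identity $\hat\omega(hf,g^{\prime}) = \sigma(h)\hat\omega(f,g^{\prime})$ follows from $\lambda[hf](e) = [h\lambda(f)](e) = [\lambda f](h^{-1}) = \sigma(h)[\lambda f](e)$, using equivariance of $\lambda$ and then the Mackey condition obeyed by $\lambda f \in \mathcal{I}_\sigma$. The $H^{\prime}$-part of the $\hat\Omega$-constraint concerns only the $G^{\prime}$-slot and is, exactly as the kernel redundancy of \cref{lemma:kernel-redundancy} in the linear case, a choice of representative within an equivalence class that does not affect $\Phi$; if one insists on it literally, one replaces $\delta(g^{\prime})$ by its (normalised) $H^{\prime}$-average.

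The only genuine subtlety — the part that needs care rather than computation — is that $\hat\omega$ is not an honest element of $\operatorname{Map}(\mathcal{I}_\rho \times G^{\prime}, V_\sigma)$ but a $V_\sigma$-valued distribution in its $G^{\prime}$-argument, so the whole statement lives most naturally in the distributional version of the framework. I would therefore state it ``in a distributional sense'', as in the remark following \cref{thm:generality-of-integral-formulation}, and point to \cref{app:functional-appraoch-to-non-lin-maps} for the setting in which $\delta$ on $G^{\prime}$ and the sifting identity used above are made rigorous.
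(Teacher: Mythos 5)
Your proposal is correct and follows essentially the same route as the paper's proof: substitute the delta, apply the sifting property, and use equivariance to identify $\lambda[g^{-1}f](e)=[\lambda f](g)$. Your additional checks — that $\hat\omega$ satisfies the Mackey constraint (via $\lambda[hf](e)=\sigma(h)[\lambda f](e)$) and the remark that the construction lives in the distributional setting of \cref{app:functional-appraoch-to-non-lin-maps} — are details the paper leaves implicit but are consistent with its framework.
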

\begin{proof}
	Let $ \lambda $ be any equivariant map between the induced representations.
    Then the operator given by the $ \hat{\omega}$ above is
	\begin{equation}
        [\Phi f](g) = \int_G \delta(g') \lambda[g^{-1}f(e) \mathrm{d}g' =  \lambda[g^{-1}f](e) \int_G \delta(g') \mathrm{d}g' =  \lambda[g^{-1}]f(e) = \lambda[f](g)
	\end{equation}
	where the final equality follows from the fact that $\lambda$ is an equivariant operator.
\end{proof}

Inspired by the investigation of the domain of the integration kernels, visualised in \cref{fig:complete-gcnn-diagram}, we now turn to the domains of the $ \omega $ maps.
The Cartan mixing diagram with this addition can be seen in \cref{fig:mixing-diagram-omega}.

When comparing to the $ G $-CNN case, we could reduce the domain of the one-argument kernel $ \hat\kappa $ by taking the quotient of $ H^{\prime} $ from the right and $ H $ from the left.
This raises the question if it is possible to take the left quotient of $ H $ in this more general setting.
Unfortunately this is not possible since $ H $ acts on the right in the $ G $-slot of $ \mathcal{I}_\rho \times G \times G^{\prime} $, and this action is turned, by the projection $ \pi_G $, into an action on the function space $ \mathcal{I}_\rho $.

Hence taking the quotient with respect to $ H $ would yield a base space $ H \backslash \mathcal{I}_\rho \times G^{\prime} $ and no behaviour of the $ \hat\omega $-map allows us to connect $ \hat{\omega}(f,g^{\prime}) $ to $ \hat{\omega}(Hf,g^{\prime}) $.
This prevents further reduction of the domain of $ \hat{\omega} $.
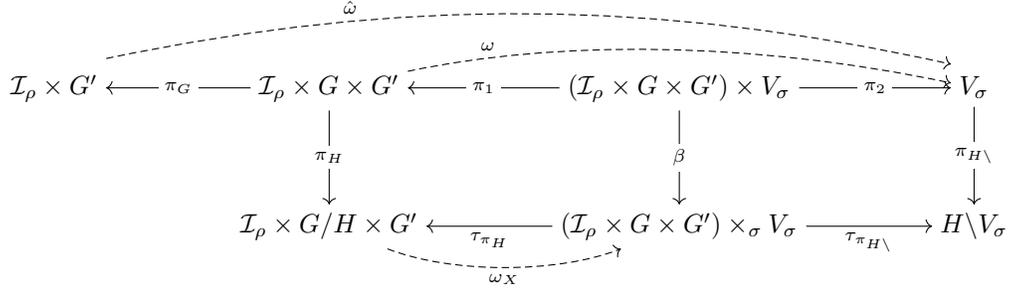
\begin{figure}[h]
    \[ 
\begin{tikzcd}[cramped]
	{\mathcal{I}_\rho \times G'} && {\mathcal{I}_\rho \times G\times G'} && {(\mathcal{I}_\rho \times G\times G')\times V_\sigma} && {V_\sigma} \\
	\\
	&& {\mathcal{I}_\rho \times G/H\times G'} && {(\mathcal{I}_\rho \times G\times G')\times_\sigma V_\sigma} && {H\backslash V_\sigma}
	\arrow["{\hat\omega}"{pos=0.3}, shift left=3, curve={height=-24pt}, dashed, from=1-1, to=1-7]
	\arrow["{\pi_G}"{description}, from=1-3, to=1-1]
	\arrow["\omega"{pos=0.2}, curve={height=-18pt}, dashed, from=1-3, to=1-7]
	\arrow["{\pi_H}"{description}, from=1-3, to=3-3]
	\arrow["{\pi_1}"{description}, from=1-5, to=1-3]
	\arrow["{\pi_2}"{description}, from=1-5, to=1-7]
	\arrow["\beta"{description}, from=1-5, to=3-5]
	\arrow["{\pi_{H\backslash}}"{description}, from=1-7, to=3-7]
	\arrow["{\omega_X}"', curve={height=18pt}, dashed, from=3-3, to=3-5]
	\arrow["{\tau_{\pi_H}}", from=3-5, to=3-3]
	\arrow["{\tau_{\pi_{H\backslash}}}"', from=3-5, to=3-7]
\end{tikzcd}
    \]
    \caption{The Cartan Mixing diagram for the non-linear case.
    Before equivariance we can view the ``kernel'' as either an element of the induced representation $ \omega \in \Omega $ or as a section of the corresponding associated vector bundle $ \omega _X \in \Gamma((\mathcal{I}_\rho \times G\times G)\times_\sigma V_\sigma) $.
When imposing equivariance it is viewed as a map $ \hat{\omega}\in \hat{\Omega} $.}\label{fig:mixing-diagram-omega}
\end{figure}

Another contrast to the linear $ G $-GCNN case is in the transformations of the integration kernel $ \hat\kappa $, the feature map $ f $, and the more general $ \hat{\omega} $ with respect to the subgroup $ H^{\prime} $.
Specifically, in the linear $ G $-CNN case the integrand is $ \hat\kappa(g^{-1}g^{\prime})f(g^{\prime}) $ where each transform under $ H^{\prime} $ as
\begin{equation}
    \hat{\kappa}(g^{-1}g^{\prime}h^{\prime})=\hat{\kappa}(g^{-1}g^{\prime})\rho(h^{\prime}) \qquad \text{and} \qquad f(g^{\prime}h^{\prime})=\rho(h^{\prime-1})f(g^{\prime}). 
\end{equation}
However, as these are contracted in the integrand they form an explicitly $ h^{\prime} $ invariant object:
\begin{equation}
    \hat{\kappa}(g^{-1}g^{\prime}h^{\prime})f(g^{\prime}h^{\prime})=\hat{\kappa}(g^{-1}g^{\prime})\rho(h^{\prime})\rho(h^{\prime-1})f(g^{\prime})=\hat{\kappa}(g^{-1}g^{\prime})f(g^{\prime}).
\end{equation}
This raises questions on how the $ \omega $ and $ \hat{\omega} $ transforms under $ h^{\prime} $.

\begin{corollary}
   Let $ \omega $ or $ \hat\omega $ factorise as
   \begin{equation}
       \omega(f,g,g^{\prime})=\alpha(f,g,g^{\prime})f(g^{\prime}) \qquad \text{or} \qquad  \hat{\omega}(g^{-1}f,g^{\prime})=\hat{\alpha}(g^{-1}f,g^{\prime})f(g^{\prime}),
   \end{equation}
   where $ \alpha:\, \mathcal{I}_\rho \times G \times G^{\prime} \,\to\, \hom(V_\rho,V_\sigma) $ and $ \hat{\alpha}:\, \mathcal{I}_\rho \times G^{\prime} \,\to\, \hom(V_\rho,V_\sigma) $. 
   Then they must satisfy
\begin{equation}
    \alpha(f,g,g^{\prime}h^{\prime})=\alpha(f,g,g^{\prime})\rho(h^{\prime}) \qquad \text{and} \qquad \hat{\alpha}(g^{-1}f,g^{\prime}h^{\prime})=\hat{\alpha}(g^{-1}f,g^{\prime})\rho(h^{\prime}), \quad \forall h^{\prime} \in H^{\prime}.
\end{equation}
\end{corollary}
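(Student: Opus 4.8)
The plan is to obtain both identities by one and the same substitution computation, which I would run first for $\hat\omega$ (where the crucial $H'$-invariance is immediate) and then transfer to $\omega$. For $\hat\omega$, the starting point is that $\hat\omega\in\hat\Omega$ satisfies $\hat\omega(hf,g'h')=\sigma(h)\hat\omega(f,g')$; setting $h=e$ gives in particular the right $H'$-invariance in the last argument, $\hat\omega(f,g'h')=\hat\omega(f,g')$. I would then plug the assumed factorisation $\hat\omega(g^{-1}f,g')=\hat\alpha(g^{-1}f,g')f(g')$ into both sides of this invariance, evaluated at $g'h'$ versus $g'$, to get
\[
\hat\alpha(g^{-1}f,g'h')\,f(g'h') \;=\; \hat\omega(g^{-1}f,g'h') \;=\; \hat\omega(g^{-1}f,g') \;=\; \hat\alpha(g^{-1}f,g')\,f(g').
\]
Using the Mackey condition $f(g'h')=\rho(h'^{-1})f(g')$ on the left, this reads $\hat\alpha(g^{-1}f,g'h')\rho(h'^{-1})f(g')=\hat\alpha(g^{-1}f,g')f(g')$, and since $f$ is arbitrary and the value $f(g')$ may be any vector of $V_\rho$ for fixed $g'$, I would strip it off (as in the proof of \cref{lemma:induced-image-constraint}) to conclude $\hat\alpha(g^{-1}f,g'h')\rho(h'^{-1})=\hat\alpha(g^{-1}f,g')$, i.e.\ $\hat\alpha(g^{-1}f,g'h')=\hat\alpha(g^{-1}f,g')\rho(h')$.

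For $\omega$ I would not redo the work: by \cref{cor:steerable_functional_argument_reduction} the (equivariant) integrand is $\omega(f,g,g')=\hat\omega(g^{-1}f,g')$ with $\hat\omega\in\hat\Omega$, so the right $H'$-invariance passes to $\omega$, namely $\omega(f,g,g'h')=\hat\omega(g^{-1}f,g'h')=\hat\omega(g^{-1}f,g')=\omega(f,g,g')$. Inserting the factorisation $\omega(f,g,g')=\alpha(f,g,g')f(g')$ and repeating the three equalities above verbatim yields $\alpha(f,g,g'h')\rho(h'^{-1})f(g')=\alpha(f,g,g')f(g')$, whence $\alpha(f,g,g'h')=\alpha(f,g,g')\rho(h')$ after cancelling $f(g')$. (If one prefers to state the corollary for a general, not-necessarily-equivariant $\omega\in\Omega$, the same argument works provided one additionally posits $\omega(f,g,g'h')=\omega(f,g,g')$, which is the $H'$-analogue of the representative choice singled out in \cref{lemma:kernel-redundancy}.)

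The only load-bearing ingredient is the right $H'$-invariance of $\hat\omega$ (resp.\ $\omega$) in its last slot; everything else is bookkeeping with the Mackey identity, so I do not expect a real obstacle. The one point needing a word of care is the final cancellation of $f(g')$: the factorisation determines $\hat\alpha(g^{-1}f,g')$ (resp.\ $\alpha(f,g,g')$) only up to operators annihilating $f(g')$, so the derived transformation law is to be read modulo this immaterial ambiguity — and indeed both sides, $\hat\alpha(g^{-1}f,g'h')$ and $\hat\alpha(g^{-1}f,g')\rho(h')$, carry exactly the same ambiguity, which is what makes the identity well-posed. With that understanding the cancellation is justified exactly as in \cref{lemma:induced-image-constraint}.
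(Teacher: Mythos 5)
Your route is genuinely different from the paper's, and it has a gap at its crucial step. The paper disposes of this corollary in one line by appealing to the redundancy argument of \cref{lemma:kernel-redundancy}: under the integral (or under contraction with $f(g')$) the kernel $\alpha(f,g,\cdot)$ is only determined up to the equivalence $\alpha\sim\alpha'$ iff $\int_{H'}\alpha(f,g,g'h')\rho(h'^{-1})\,\mathrm{d}h'=\int_{H'}\alpha'(f,g,g'h')\rho(h'^{-1})\,\mathrm{d}h'$, and the displayed transformation law is precisely the condition that singles out the unique representative of each class, namely $\alpha_0(f,g,g')=\int_{H'}\alpha(f,g,g'h')\rho(h'^{-1})\,\mathrm{d}h'$. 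Your argument instead tries to \emph{force} the operator identity by cancellation, and this is where it fails: from $\hat\alpha(g^{-1}f,g'h')\rho(h'^{-1})f(g')=\hat\alpha(g^{-1}f,g')f(g')$ you may only conclude that the two operators agree on the single vector $f(g')$. The stripping-off argument of \cref{lemma:induced-image-constraint} is not available here, because there the kernel is independent of $f$, so $f(g')$ can range over all of $V_\rho$ while the kernel stays fixed; in the present non-linear setting $\hat\alpha(g^{-1}f,\cdot)$ changes whenever $f$ does, so the test vector cannot be decoupled from the operator. Your closing claim that the cancellation is ``justified exactly as in \cref{lemma:induced-image-constraint}'' is therefore incorrect.

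That said, your parenthetical about the identity holding only ``modulo operators annihilating $f(g')$'' is the honest content of the situation, and it is exactly what the paper's cited lemma formalises: the factorisation determines $\alpha(f,g,\cdot)$ only up to this redundancy, and the stated transformation law is a normalisation rather than a consequence --- the averaged representative $\alpha_0$ satisfies it exactly and still reproduces the same $\omega$ (using the right $H'$-invariance of $\omega$ in its last slot and a normalised Haar measure on $H'$). So the repair is structural rather than computational: either weaken your conclusion to ``the two sides agree on $f(g')$, hence define the same equivalence class,'' or replace the cancellation by the explicit $H'$-average, which is what ``an identical argument to \cref{lemma:kernel-redundancy}'' amounts to.
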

\begin{proof}
    This follows from an identical argument to \cref{lemma:kernel-redundancy}.
\end{proof}

This indicates a way to view the map $ \hat{\omega} $ as an implicit contraction of induced representations such that the full object is $ H^{\prime} $ invariant. 
Furthermore, this suggests that specialisations of $ \hat{\omega} $ must be constructed from several induced representations in a very specific way such that the output is $ h^{\prime} $ invariant.

The following section will show how we can derive well established architectures through specific forms of the map $ \hat{\omega} $.

\section{Deriving known equivariant architectures}
\label{sec:instances}
We now demonstrate how our general framework can be used to derive various existing equivariant machine learning architectures.
This includes convolutional architectures (\cref{sec:gcnn-from-general-map,sec:implicit-steerable-from-general-map}) and attention-based architectures (\cref{sec:standard-attention-from-general-map,sec:relative-self-attention,sec:lietransformer-from-general-map}).
In what follows we assume that $ G=G^{\prime}$.

\subsection[Equivariant CNNs: {$G$-CNNs}]
           {Equivariant CNNs: \texorpdfstring{$\boldsymbol{G}$-CNNs}{G-CNNs}}
           \label{sec:gcnn-from-general-map}
First off, we want to detail how steerable $G$-CNNs \citep{cohen-theory-equivariant-hom}, surveyed in \cref{sec:linear-maps}, fit into the proposed framework of \cref{sec:nonlinear-maps}.
This is achieved by showing that a particular form of $ \hat{\omega}$ retrieves $G$-CNNs.
\begin{theorem}\label{prop:gcnn-from-general-equiv-map}
The $G$-CNN layer, as defined in \cref{def:gcnn-layer}, is obtained by the choice
\begin{equation}\label{eq:gcnn-from-general-map}
\hat\omega(g^{-1}f,g')\ =\ \hat\kappa(g')[g^{-1}f](g') \,,
\end{equation}
where
\begin{equation}
    \hat{\kappa}: G \to \hom(V_\rho, V_\sigma)
    \qquad\textup{such that}\qquad
    \hat{\kappa}(hgh') = \sigma(h) \hat{\kappa}(g) \rho(h')
\end{equation}
is a steerable convolution kernel as defined in \cref{thm:equivariant-kernels}.
\end{theorem}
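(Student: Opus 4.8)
The plan is to prove the identity by a direct substitution into the integral formula \cref{eq:nonlinear-operator}, followed by a change of variables, and then to record why the proposed $\hat\omega$ is an admissible choice inside the framework; the latter is exactly where the steerability constraint on $\hat\kappa$ is used.

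First I would substitute $\hat\omega(g^{-1}f,g') = \hat\kappa(g')[g^{-1}f](g')$ into $[\Phi f](g) = \int_{G}\hat\omega(g^{-1}f,g')\,\mathrm{d}g'$ and unfold the $G$-action on $\mcI_\rho$ from \cref{def:induced-representation}. Since $[gf](k) = f(g^{-1}k)$, we have $[g^{-1}f](g') = f(gg')$, so $[\Phi f](g) = \int_G \hat\kappa(g')\,f(gg')\,\mathrm{d}g'$. Substituting $g' \mapsto g^{-1}g'$ and using invariance of the Haar measure on $G$ turns this into $[\Phi f](g) = \int_G \hat\kappa(g^{-1}g')\,f(g')\,\mathrm{d}g'$, which is precisely the $G$-CNN layer \cref{eq:gcnn-layer} of \cref{def:gcnn-layer} with convolution kernel $\hat\kappa$. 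That establishes the equality of operators.

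Second I would check admissibility. The chosen $\hat\omega$ factorises as $\hat\omega(g^{-1}f,g') = \hat\alpha(g^{-1}f,g')\,[g^{-1}f](g')$ with the factor $\hat\alpha(g^{-1}f,g') = \hat\kappa(g')$ independent of $f$; by the corollary following \cref{prop:arbitrary-map-from-general-equiv-map}, such a factor must satisfy $\hat\kappa(g'h') = \hat\kappa(g')\rho(h')$ for $h' \in H'$, which is exactly the $h = e$ case of the steerability constraint $\hat\kappa(hgh') = \sigma(h)\hat\kappa(g)\rho(h')$. The complementary $H$-covariance $\hat\omega(hf,g') = \sigma(h)\hat\omega(f,g')$ is not pointwise true but holds under the integral: unfolding $[\Phi f](gh)$ and substituting $g' \mapsto hg'$ converts the stray left translation of $f$ into $\hat\kappa(h^{-1}g') = \sigma(h^{-1})\hat\kappa(g')$, giving $[\Phi f](gh) = \sigma(h^{-1})[\Phi f](g)$ so that $\Phi f \in \mcI_\sigma$; alternatively one may simply invoke \cref{thm:equivariant-kernels}, which already guarantees that a bi-equivariant $\hat\kappa$ defines an intertwiner $\mcI_\rho \to \mcI_\sigma$.

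The computation itself is routine, so there is no genuine obstacle; the one point to get right is the bookkeeping of left versus right actions and of the two subgroups $H, H'$ — in particular that the steerability of $\hat\kappa$ plays a double role, its right-$H'$ leg matching the Mackey redundancy of $f$ and its left-$H$ leg, after the change of variables, producing the Mackey property of $\Phi f$, and that it is invariance of the Haar measure which converts the pointwise failure of the $H$-covariance of $\hat\omega$ into honest $G$-equivariance of $\Phi$.
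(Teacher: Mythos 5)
Your proof is correct and follows essentially the same route as the paper's: substitute the ansatz, use $[g^{-1}f](g')=f(gg')$ and left-invariance of the Haar measure to recover \cref{eq:gcnn-layer}, and then verify that the right-$H'$ leg of the steerability constraint cancels against the Mackey redundancy of $f$ while the left-$H$ leg produces the Mackey property of $\Phi f$. Your explicit observation that the pointwise condition $\hat\omega(hf,g')=\sigma(h)\hat\omega(f,g')$ fails for this choice and holds only after a change of variables under the integral is a sharper statement of what the paper's proof does implicitly through the relabeling $g''=ghg'$, and is a welcome clarification rather than a deviation.
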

\begin{proof}
Substituting the expression for $\hat\omega$ in the integral of \cref{eq:general-map-between-induced-reps} yields
\begin{equation}
[\Phi f](g)=\int_G \hat\omega(g^{-1}f,g')\mathrm{d}g'=\int_G\hat\kappa(g')[g^{-1}f](g')\mathrm{d}g'=\int_G\hat\kappa(g^{-1}g')f(g')\mathrm{d}g',
\end{equation}
where in the last step we used the $G$-action on $f$ which is $[kf](g)=f(k^{-1}g)$ and the change of variables $gg'\mapsto g'$.

Note that the transformation criteria of $\hat\omega$ is $\hat\omega(hf,g'h')=\sigma(h)\hat\omega(f,g')$.
Using this with \cref{eq:gcnn-from-general-map} we get that we should have
\begin{equation}
    \hat\kappa(g'h')[h^{-1}g^{-1}f](g'h')= \sigma(h^{-1})\hat\kappa(g')[g^{-1}f](g').
\end{equation}
We have to show consistency with \cref{eq:one-arg-kernel-constraint}.
From \cref{eq:one-arg-kernel-constraint} we get the $h'$ invariance through
\begin{equation}
    \hat\kappa(g'h')[g^{-1}f](g'h')=\hat\kappa(g')\rho(h')\rho(h^{\prime-1})[g^{-1}f](g')=\hat\kappa(g')[g^{-1}f](g').
\end{equation}
For the $h$ transformation, note that 
\begin{equation}
    \hat\kappa(g')[h^{-1}g^{-1}f](g')=\hat\kappa(g')f(ghg').
\end{equation}
Then, letting $g''=ghg'$ we get 
\begin{equation}
    \hat\kappa(h^{-1}g^{-1}g'')f(g'')=\sigma(h^{-1})\hat\kappa(g^{-1}g'')f(g''),
\end{equation}
where we again used the known transformation of $\hat\kappa$ stated in \cref{eq:one-arg-kernel-constraint}. 
The result follows from that $g,g''\in G$ and $f\in\mcI_\rho$ are arbitrary.
This shows that the choice of $\hat\omega$ is consistent with the transformation conditions on $\hat\omega$ and $\hat\kappa$, which completes the proof.
\end{proof}

The first factor in \cref{eq:gcnn-from-general-map} is the steerable convolutional kernel $\kappa$ as introduced in \cref{sec:linear-maps}.
In our framework, the defining characteristic of the family of convolution-based equivariant models is the independence of feature vectors $f$ in this factor.
One can contrast this with the subsequent results in \cref{sec:lietransformer-from-general-map}, where we show that $f $-dependence in this first factor is a central part attention-based equivariant models.
The second factor in \cref{eq:gcnn-from-general-map} is a single $f(g')$, which characterises the message-passing nature of convolutional networks.

\subsection{Implicit steerable convolutional kernels with node feature dependence}\label{sec:implicit-steerable-from-general-map}
The method of implicit steerable convolutional kernels \citet{implicit-kernels,Zhdanov2024CliffordSteerable} alleviates the problem of analytically solving the symmetry constraints put on the kernel of steerable equivariant CNNs \cite{weiler20183dSteerableCNNs,weiler2023EquivariantAndCoordinateIndependentCNNs}.
Instead, steerable kernels are implicitly learned using a surrogate neural network $k:\mathbb{R}^d \to \hom( \mathbb{R}^c, \mathbb{R}^{c'})$ designed to be equivariant in a way such that the kernel constraints are satisfied.
The use of such an auxiliary model allows for generalized kernels which depend not only on coordinates in $\mathbb{R}^d$, but additionally on feature vectors, such as node and edge features in point clouds.
This feature dependence turns implicit steerable convolutions into non-linear functions of feature maps, which fit naturally into our framework of non-linear equivariant maps.
The way these kernels depend on features also bear similarities to how attention-based architectures depend on the same objects, which we will derive in the following section.

Objects in this setting are defined on a homogeneous space $X$, which can be identified in the context of our work with the quotient space $G/H$.
Further, the space $X$ is assumed to be the Euclidean space $ \mathbb{R}^d$ and $G = \mathbb{R}^d \rtimes H$ is the affine group where the translations in  $ \mathbb{R}^d$ are equipped with addition as operation and $H$ is a subgroup of the orthogonal group $O(d)$.
These results can be extended to the setting where $H$ is a subgroup of $\operatorname{GL}(d)$ by introducing the Jacobian determinant $|\det h|$ in various expressions.
For convenience and to adopt the same assumptions as in \citet{implicit-kernels} we will assume $H \leq O(d)$ where the factor $| \det h | = 1$, and refer to \citet{weiler2023EquivariantAndCoordinateIndependentCNNs} for details on the more general case.

We also adopt the assumption that the subgroups associated with the input and output features are equal, $H = H'$.
We will however depart from the formulation of the original article slightly by considering continuous features throughout, where the original authors specialise to features only defined on a finite number of points in a point cloud.
This is to ease notation, and we remark that the point cloud formulation can be recovered by introducing delta functions at a finite number of locations in $X$.
The original implicit steerable convolutional kernel also included a dependence on edge features $z_{ij}$ which we choose to leave out of this formulation.
However, it is worth noting that the extension of our framework to continuous edge feature maps $f(x,x')$ is an interesting future direction of work.

The implicit steerable kernel is given by a neural network parametrized function
\begin{equation}
    k: \mathbb{R}^d \times \mathbb{R}^{d_z} \times \mathbb{R}^{d_z} \to \hom( \mathbb{R}^c, \mathbb{R}^{c'})
\end{equation}
of coordinates $\mathbb{R}^{d}$ and feature vectors $\mathbb{R}^{d_z}$ which satisfies the generalized steerability constraint
\begin{equation}
	\label{eq:steerability-constraint}
	k\big(h \vartriangleright (x - x'),\, \rho_z(h)f_z(x),\, \rho_z(h)f_z(x')\big)\ =\ \sigma(h) k\big(x - x',\, f_z(x),\, f_z(x')\big) \rho(h)^{-1}.
\end{equation}
Here $f_z(x), f_z(x')$ are continuous analogues of the node features $z_i, z_j$ in the original article.
In fact, since the features $f_z$ satisfy the same properties as the feature map $f$, we can absorb all features into one map which we will from now on denote $f$.
This condition generalises the corresponding condition for standard steerable kernels, with the standard expression recovered if the dependence on $f_z$ is dropped.
The convolution operation $\Psi_k: \Gamma(G \times_\rho \mathbb{R}^c) \to \Gamma(G \times_\sigma \mathbb{R}^{c'})$ given by the implicit kernel $k$ is
\begin{equation}
	\label{eq:implicit-conv}
	[\Psi_k f](x) = \int_X k\big(x' - x, f(x), f(x')\big)f(x') dx',
\end{equation}
As the implicit kernel is defined on $X = \mathbb{R}^d$ while the framework of \cref{sec:nonlinear-maps} is formalised with objects on the group $G$, we need to establish the equivalence between steerable kernels defined on $X$ and those defined on $G$.
This result, which can be seen as a kernel level equivalent to the isomorphisms established in \cref{prop:function-isomorphism}, is stated here as a lemma.
The result is stated here for a more general kernel $\hat{k}$ with domain $X \times \Gamma(G \times_\rho \mathbb{R}^{c})$.
The kernel $k$ is recovered by letting $\hat{k}(x'-x, T_{(-x)}f) = k\big(x'- x, [T_{(-x)}f](0), [T_{(-x)}f](x'-x)\big) = k\big(x'- x, f(x), f(x')\big)$.

\begin{lemma}
	\label{lemma:lifted-kernel}
    Assume $G = \mathbb{R}^d \rtimes H$ is an affine group with $H$ a subgroup of $O(d)$, and consider the space $\Gamma(G \times_\rho \mathbb{R}^c)$ of sections over the homogeneous space $X = \mathbb{R}^d$.
    Denote by $T_x$ the regular representation of $ \mathbb{R}^d$ on $\Gamma(G \times_\rho \mathbb{R}^c)$ given by $[T_xf](x') = f(x' - x)$.
    Given a map $\hat{k}: X \times \Gamma(G \times_\rho \mathbb{R}^{c}) \to \hom( \mathbb{R}^c, \mathbb{R}^{c'})$ and the resulting operator $\Psi_{\hat{k}}: \Gamma(G \times_\rho \mathbb{R}^{c}) \to \Gamma(G \times_\sigma \mathbb{R}^{c'})$ given by
\begin{equation}
	[\Psi_{\hat{k}} f](x) = \int_X \hat{k}(x' - x, T_{(-x)}f) f(x') \mathrm{d}x'
\end{equation}
then there exists a kernel $\kappa: G \times \mathcal{I}_\rho \to \hom(\mathbb{R}^{c}, \mathbb{R}^{c'})$ such that the resulting operator $\Phi_\kappa: \mathcal{I}_\rho \to \mathcal{I}_\sigma$ is the lifted version of $\Psi_k$ in the sense that
	\begin{equation}
		\label{eq:lifted-operator}
		\Phi_\kappa = \Lambda^{\uparrow} \circ \Psi_k \circ \Lambda^{\downarrow}
	\end{equation}
	where $\Lambda^{\uparrow}$ and $\Lambda^{\downarrow}$ are defined in \cref{eq:function_lift,eq:function_sink}.
	Further, $k$ satisfies the steerability constraint
	\begin{equation}
		\label{eq:steerability-x}
		\hat{k}(h \vartriangleright x, hf) = \sigma(h) \hat{k}(x, f) \rho(h)^{-1}
	\end{equation}
	if and only if $\kappa$ satisfies
	\begin{equation}
		\label{eq:steerability-g}
		\kappa(hgh', hf) = \sigma(h) \kappa(g, f) \rho(h')
	\end{equation}
	for all $h,h' \in H$.
\end{lemma}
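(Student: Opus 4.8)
The plan is to run the whole argument in explicit coordinates for the affine group, exploiting that $G=\mathbb{R}^d\rtimes H$ carries the \emph{global} section $s:\mathbb{R}^d\to G$, $s(x)=(x,e)$, of the bundle $G\to G/H\isom\mathbb{R}^d$. For this section the twist map of \cref{eq:function_lift} collapses to $\mathrm{h}((z,\eta))=\eta$ (and more generally $\mathrm{h}(x,(z_0,\eta_0))=\eta_0$), so the lift and sink maps of \cref{eq:function_lift,eq:function_sink} become $[\Lambda^{\uparrow}f_X]((z,\eta))=\sigma(\eta^{-1})f_X(z)$ and $[\Lambda^{\downarrow}f_G](x)=f_G((x,e))$, with $\rho$ replacing $\sigma$ on the source side. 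By \cref{prop:function-isomorphism} these are mutually inverse $G$-equivariant bijections, which I would unpack as $\Lambda^{\downarrow}(g^{-1}f)=g^{-1}\vartriangleright\Lambda^{\downarrow}f$, where $g^{-1}\vartriangleright$ is the section action of \cref{eq:rep-on-sections-of-vector-bundle}; for $g=(z_0,\eta_0)$ this reads $[g^{-1}\vartriangleright f_X](x)=\rho(\eta_0^{-1})f_X(z_0+\eta_0\vartriangleright x)$, hence $[g^{-1}\vartriangleright h_X](0)=\sigma(\eta_0^{-1})h_X(z_0)=[\Lambda^{\uparrow}h_X](g)$ for $\sigma$-sections $h_X$, and restricted to $h\in H$ it is the ordinary linear action $[h\vartriangleright f_X](x)=\rho(h)f_X(h^{-1}\vartriangleright x)$ with $\Lambda^{\downarrow}(hF)=h\vartriangleright\Lambda^{\downarrow}F$. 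This last identity is the bridge between the $H$-covariance built into $\mcI_\rho$ and the steerability constraint \cref{eq:steerability-x}.

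Next I would record the fact (this is precisely the design principle behind \cref{eq:steerability-constraint}; see \citet{implicit-kernels,weiler2023EquivariantAndCoordinateIndependentCNNs}) that if $\hat k$ satisfies \cref{eq:steerability-x} then the operator $\Psi_{\hat k}$ of \cref{eq:implicit-conv} is $G$-equivariant, $\Psi_{\hat k}(g\vartriangleright f_X)=g\vartriangleright(\Psi_{\hat k}f_X)$; the proof is a change of variables on $X=\mathbb{R}^d$ using that $h\in H\le O(d)$ is measure preserving together with equivariance under the translation part. Combined with the previous paragraph this already shows that $\Lambda^{\uparrow}\circ\Psi_{\hat k}\circ\Lambda^{\downarrow}$ is a well-defined map $\mcI_\rho\to\mcI_\sigma$ (it lands in $\mcI_\sigma$ because $\Lambda^{\uparrow}$ does) which is $G$-equivariant in the steerable case.

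For the kernel, I would set $\kappa:G\times\mcI_\rho\to\hom(\mathbb{R}^c,\mathbb{R}^{c'})$, $\kappa((z,\eta),F):=\hat k\bigl(z,\Lambda^{\downarrow}F\bigr)\rho(\eta)$ — the unique right-$\rho$-equivariant extension of the section-restricted kernel $z\mapsto\hat k(z,\Lambda^{\downarrow}F)$, exactly in the spirit of the representative choice in \cref{lemma:kernel-redundancy} — and let $\Phi_\kappa$ be the associated operator $[\Phi_\kappa f](g):=\int_G\kappa(g',g^{-1}f)\,[g^{-1}f](g')\,\mathrm{d}g'$, of the form \cref{eq:nonlinear-operator} with $\hat\omega(F,g')=\kappa(g',F)F(g')$. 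I would then evaluate $[\Phi_\kappa f](g)$ via the Quotient Integral Formula \cref{thm:quotient-integral-formula} along $s$, splitting $\int_G=\int_{\mathbb{R}^d}\int_H$: writing $\tilde f:=\Lambda^{\downarrow}(g^{-1}f)$, the $\rho(\eta)$ produced by $\kappa$ cancels the $\rho(\eta^{-1})$ coming from the Mackey property $[g^{-1}f]((z,\eta))=\rho(\eta^{-1})\tilde f(z)$, nothing left in the integrand depends on $\eta$, so the inner $H$-integral contributes only the measure normalisation of \cref{thm:quotient-integral-formula} (which I would fix to $1$), leaving $\int_{\mathbb{R}^d}\hat k(z,\tilde f)\tilde f(z)\,\mathrm{d}z=[\Psi_{\hat k}\tilde f](0)$. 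Using $\tilde f=g^{-1}\vartriangleright\Lambda^{\downarrow}f$, the $G$-equivariance of $\Psi_{\hat k}$, and $[g^{-1}\vartriangleright h_X](0)=[\Lambda^{\uparrow}h_X](g)$ from the first paragraph, this equals $[g^{-1}\vartriangleright(\Psi_{\hat k}\Lambda^{\downarrow}f)](0)=[\Lambda^{\uparrow}\Psi_{\hat k}\Lambda^{\downarrow}f](g)$, which is \cref{eq:lifted-operator}; in particular $\Phi_\kappa f\in\mcI_\sigma$.

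Finally, the constraint equivalence is a direct substitution into the definition of $\kappa$. Since $h(z,\eta)h'=(h\vartriangleright z,\,h\eta h')$ and $\Lambda^{\downarrow}(hF)=h\vartriangleright\Lambda^{\downarrow}F$,
\[
\kappa\bigl(h(z,\eta)h',\,hF\bigr)=\hat k\bigl(h\vartriangleright z,\,h\vartriangleright\Lambda^{\downarrow}F\bigr)\rho(h)\rho(\eta)\rho(h'),
\]
so \cref{eq:steerability-x} turns the right-hand side into $\sigma(h)\hat k(z,\Lambda^{\downarrow}F)\rho(h)^{-1}\rho(h)\rho(\eta)\rho(h')=\sigma(h)\kappa((z,\eta),F)\rho(h')$, which is \cref{eq:steerability-g}; conversely, specialising \cref{eq:steerability-g} to $\eta=e,\ h'=e$ gives $\hat k(h\vartriangleright z,\,h\vartriangleright\Lambda^{\downarrow}F)\rho(h)=\sigma(h)\hat k(z,\Lambda^{\downarrow}F)$, and since $\Lambda^{\downarrow}$ is onto $\Gamma(G\times_\rho\mathbb{R}^c)$ every section appears as some $\Lambda^{\downarrow}F$, so \cref{eq:steerability-x} holds. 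I expect the main obstacle to be the third paragraph: keeping the Haar-measure normalisations of \cref{thm:quotient-integral-formula} consistent, handling that the feature argument $\tilde f$ is a full translated section rather than a tuple of point values, and — a point best settled at the outset — pinning down precisely which operator a feature-dependent kernel on $G$ is meant to define, since \cref{sec:nonlinear-maps} is phrased through $\hat\omega$ rather than through a kernel, and noting that the lift identity \cref{eq:lifted-operator} for this explicit $\kappa$ genuinely relies on the steerability of $\hat k$ (equivalently, on $\Psi_{\hat k}$ being equivariant).
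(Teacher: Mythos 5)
Your proposal is correct and follows essentially the same route as the paper: the same explicit lifted kernel $\kappa((z,\eta),F)=\hat k\bigl(z,\Lambda^{\downarrow}F\bigr)\rho(\eta)$ (i.e.\ $\hat k(\pi(g),\Lambda^{\downarrow}F)\rho(\mathrm{h}(g))$), the same use of the global section of $G=\mathbb{R}^d\rtimes H$, the quotient integral formula, and the same direct substitution for the steerability equivalence. The only difference is directional — you verify \cref{eq:lifted-operator} by collapsing the $G$-integral defining $\Phi_\kappa$ to $[\Psi_{\hat k}(\Lambda^{\downarrow}(g^{-1}f))](0)$ and invoking the $G$-equivariance of $\Psi_{\hat k}$ as a packaged fact, whereas the paper runs the same computation in reverse starting from $\Lambda^{\uparrow}\circ\Psi_{\hat k}\circ\Lambda^{\downarrow}$ and pushing $\sigma(\mathrm{h}(g)^{-1})$ through the integrand — and both arguments, as you correctly flag, invoke the steerability of $\hat k$ at exactly this step.
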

\begin{proof}
	Since the group $G = \mathbb{R}^d \rtimes H$ is a direct product when considered as a space, there always exists a global section $s: \mathbb{R}^d \to G$ given by  $x \mapsto (x, 0)$.
	We use this section throughout the proof, including for the isomorphism $\Lambda^{\uparrow}$.
	Given the kernel $\hat{k}: X \times \Gamma (G \times_\rho \mathbb{R}^c) \to \hom( \mathbb{R}^c, \mathbb{R}^{c'})$ the lifted kernel $\kappa$ is given by
	\begin{equation}
		\label{eq:isomorphism-up}
		\kappa(g, f) = \hat{k}( \pi(g), \Lambda^{\downarrow} f) \rho( \mathrm{h}(g)).
	\end{equation}
	The inverse map, from a kernel $\kappa: G \times \mathcal{I}_\rho \to \hom ( \mathbb{R}^c, \mathbb{R}^{c'})$ to $\hat{k}$ is given by
	\begin{equation}
		\label{eq:isomorphism-down}
	\hat{k}(x, f) = \kappa( s(x), \Lambda^{\uparrow} f).
	\end{equation}
	Given the identities $s(x) = (x,0)$, $\pi((x, h)) = x$ and  $ \mathrm{h}((x,h)) = h$ it is straightforward to confirm that \cref{eq:steerability-x} and  \cref{eq:isomorphism-up} implies \cref{eq:steerability-g}
	\begin{equation}
		\begin{aligned}
			\kappa(hgh', hf) &= \hat{k}( \pi(hgh'), \Lambda^{\downarrow} [hf]) \rho( \mathrm{h}(hgh')) \\
					 &= \hat{k}( h \vartriangleright \pi(g), h[\Lambda^{\downarrow} f]) \rho(h \mathrm{h}(g)h') \\
					 &= \sigma(h)\hat{k}( \pi(g), [\Lambda^{\downarrow} f])\rho(h)^{-1} \rho(h) \rho(\mathrm{h}(g))\rho(h') \\
					 &= \sigma(h)\kappa(g, f)\rho(h').
		\end{aligned}
	\end{equation}
	That \cref{eq:steerability-g} and  \cref{eq:isomorphism-down} implies \cref{eq:steerability-x} is shown in a similar way.
	We have left to show the relation \cref{eq:lifted-operator}.
	Fix a kernel $\hat{k}: X \times \Gamma(G \times_\rho \mathbb{R}^c) \to \hom ( \mathbb{R}^c,\mathbb{R}^{c'})$.
	Then the action of the operator $\Lambda^{\uparrow} \circ \Psi_{\hat{k}} \circ \Lambda^{\downarrow}$ on $f \in \mathcal{I}_\rho$ is given by
	\begin{equation}
		\begin{aligned}
			\big[ [\Lambda^{\uparrow} \circ \Psi_k \circ \Lambda^{\downarrow}] f\big](g) &= \sigma( \mathrm{h}(g)^{-1})\big[ \Psi_k [\Lambda^{\downarrow} f]\big](\pi(g)) \\
												     &= \sigma( \mathrm{h}(g)^{-1})\int_X \hat{k}(x' - \pi(g), T_{(-\pi(g))}\Lambda^{\downarrow}f) [\Lambda^{\downarrow}f](x') \mathrm{d}x'.
		\end{aligned}
	\end{equation}
	Using the steerability constraint in \cref{eq:steerability-x} yields 
	\begin{equation}
	\sigma( \mathrm{h}(g)^{-1})\hat{k}(x' - \pi(g), T_{(-\pi(g))}\Lambda^{\downarrow}f) = \hat{k}(\mathrm{h}(g)^{-1} \vartriangleright( x' - \pi(g)), \mathrm{h}(g)^{-1}T_{(-\pi(g))}\Lambda^{\downarrow}f) \rho(\mathrm{h}(g)^{-1})
	\end{equation}
    where $g^{-1}$ denotes the action of $g^{-1}$ on $\mathcal{I}_\rho$ and $\mathrm{h}(g)^{-1}$ denotes the action of $\mathrm{h}(g)^{-1}$ on sections as defined in \cref{eq:rep-on-sections-of-vector-bundle}.
	It is straightforward to show the identities $\mathrm{h}(g)^{-1} \vartriangleright( x' - \pi(g)) = \pi( g^{-1} s(x'))$ and $\mathrm{h}(g)^{-1} \circ T_{(-\pi(g))} \circ \Lambda^{\downarrow} = \Lambda^{\downarrow} \circ g^{-1}$. 
    This yields
	\begin{equation}
		\begin{aligned}
			\big[ [\Lambda^{\uparrow} \circ \Psi_{\hat{k}} \circ \Lambda^{\downarrow}] f\big](g) &= \int_X \hat{k}(\pi( g^{-1} s(x')), [\Lambda^{\downarrow} \circ g^{-1}]f)\rho( \mathrm{h}(g)^{-1}) f(s(x')) \mathrm{d}x'.
		\end{aligned}
	\end{equation}
	At this stage we can identify the lifted kernel $\kappa$ as given in \cref{eq:isomorphism-up}
	\begin{equation}
		\begin{aligned}
			\big[ [\Lambda^{\uparrow} \circ \Psi_k \circ \Lambda^{\downarrow}] f\big](g) &= \int_X \kappa(g^{-1} s(x'), g^{-1}f) f(s(x')) \mathrm{d}x'
		\end{aligned}
	\end{equation}
	where we used the fact that $ \mathrm{h}(g^{-1}s(x')) = \mathrm{h}(g)^{-1}$.
	Finally we introduce the factor $\id = \int_H \mathrm{d}h' = \int_H \rho(h') \rho(h'^{-1}) \mathrm{d}h'$ and apply \cref{thm:quotient-integral-formula} yielding
	\begin{equation}
		\begin{aligned}
			\big[ [\Lambda^{\uparrow} \circ \Psi_k \circ \Lambda^{\downarrow}] f\big](g) &= \int_H \int_X \kappa(g^{-1} s(x'), g^{-1}f) \rho(h') \rho(h'^{-1}) f(s(x')) \mathrm{d}x' \mathrm{d}h' \\
			&= \int_H \int_X \kappa(g^{-1} s(x')h', g^{-1}f) f(s(x')h') \mathrm{d}x' \mathrm{d}h' \\
			&= \int_G \kappa(g^{-1} g', g^{-1}f) f(g') \mathrm{d}g'.
		\end{aligned}
	\end{equation}
	We identify this as $[\Phi_\kappa f](g)$ for the corresponding kernel $\kappa$, which concludes the proof.
\end{proof}

Using this dictionary between objects on $G$ and objects on $X$ we can now prove the result that the steerable kernels with node feature dependence, which are introduced as implicitly learned kernels in \citet{implicit-kernels}, fit into the framework we introduced in \cref{sec:nonlinear-maps}.

\begin{theorem}
\label{thm:instance-implicit-kernel}
	Assume $G = \mathbb{R}^d \rtimes H$ is an affine group with $H$ a subgroup of $O(d)$, and consider the space $\Gamma(G \times_\rho \mathbb{R}^c)$ of sections over the homogeneous space $X = \mathbb{R}^d$.
	Then the convolutional operator $\Psi_k: \Gamma(G \times_\rho \mathbb{R}^c) \to \Gamma(G \times_\sigma \mathbb{R}^{c'})$ given by a steerable kernel with node feature dependence $k: X \times \mathbb{R}^{d_z} \times \mathbb{R}^{d_z} \to \hom(\mathbb{R}^{c},\mathbb{R}^{c'})$ can be lifted to an operator as in \cref{eq:nonlinear-operator} where $\hat{\omega}$ factors as
	\begin{equation}
		\label{eq:instance-implicit-kernel}
		\hat{\omega}(g^{-1} f, g') = \kappa\big( g', [g^{-1}f](e), [g^{-1}f](g') \big) [g^{-1} f](g').
	\end{equation}
    Here the feature map $f$ includes both node and regular features.
\end{theorem}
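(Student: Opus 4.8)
The plan is to reduce the statement to \cref{lemma:lifted-kernel} by feeding that lemma the section-valued kernel naturally associated to $k$. Given the feature-dependent implicit kernel $k$, set
\begin{equation*}
\hat{k}\colon X\times\Gamma(G\times_\rho\mathbb{R}^c)\to\hom(\mathbb{R}^c,\mathbb{R}^{c'}),\qquad \hat{k}(y,f_X):=k\bigl(y,f_X(0),f_X(y)\bigr),
\end{equation*}
which is precisely the kernel obtained from $k$ by the identification recorded in the discussion preceding the theorem. Using $[T_{(-x)}f](y)=f(y+x)$ one checks at once that $\Psi_{\hat{k}}=\Psi_k$ in the sense of \cref{eq:implicit-conv}, so it suffices to bring $\hat{k}$ into the hypotheses of \cref{lemma:lifted-kernel} and then read off the factorisation of the resulting integrand.

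First I would verify the steerability constraint \cref{eq:steerability-x} for $\hat{k}$. The only input is the representation of $H$ on sections, \cref{eq:rep-on-sections-of-vector-bundle}: for the canonical global section $s(x)=(x,0)$ of $G=\mathbb{R}^d\rtimes H$ a direct computation gives $\mathrm{h}(y,h)=h$ for $h\in H$, hence $(hf_X)(y)=\rho(h)f_X(h^{-1}y)$; in particular $(hf_X)(0)=\rho(h)f_X(0)$ and $(hf_X)(h\vartriangleright y)=\rho(h)f_X(y)$. Therefore
\begin{align*}
\hat{k}(h\vartriangleright y,hf_X)&=k\bigl(h\vartriangleright y,\rho(h)f_X(0),\rho(h)f_X(y)\bigr)\\
&=\sigma(h)\,k\bigl(y,f_X(0),f_X(y)\bigr)\rho(h)^{-1}=\sigma(h)\,\hat{k}(y,f_X)\,\rho(h)^{-1},
\end{align*}
the middle equality being exactly the generalized steerability constraint \cref{eq:steerability-constraint} on $k$, with the node-feature representation absorbed into $\rho$ as explained before the theorem.

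Now \cref{lemma:lifted-kernel} applies and produces a kernel $\kappa\colon G\times\mathcal{I}_\rho\to\hom(\mathbb{R}^c,\mathbb{R}^{c'})$ satisfying $\Phi_\kappa=\Lambda^{\uparrow}\circ\Psi_k\circ\Lambda^{\downarrow}$ and $[\Phi_\kappa f](g)=\int_G\kappa(g^{-1}g',g^{-1}f)\,f(g')\,\mathrm{d}g'$. The substitution $g'\mapsto gg'$, invariance of the Haar measure, and the identity $f(gg')=[g^{-1}f](g')$ rewrite this as $[\Phi_\kappa f](g)=\int_G\kappa(g',g^{-1}f)\,[g^{-1}f](g')\,\mathrm{d}g'$, which is of the form \cref{eq:nonlinear-operator} with $\hat{\omega}(g^{-1}f,g')=\kappa(g',g^{-1}f)\,[g^{-1}f](g')$; that $\hat{\omega}\in\hat{\Omega}$ follows either from \cref{cor:steerable_functional_argument_reduction} or directly from \cref{eq:steerability-g} together with the Mackey property of $f$.

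It remains to bring $\kappa(g',g^{-1}f)$ into the displayed form. By \cref{eq:isomorphism-up}, $\kappa(g',g^{-1}f)=\hat{k}\bigl(\pi(g'),\Lambda^{\downarrow}[g^{-1}f]\bigr)\rho(\mathrm{h}(g'))$; since $s(0)=e$ one has $[\Lambda^{\downarrow}(g^{-1}f)](0)=[g^{-1}f](e)$, while $s(\pi(g'))=g'\,\mathrm{h}(g')^{-1}$ combined with the Mackey redundancy of $g^{-1}f$ gives $[\Lambda^{\downarrow}(g^{-1}f)](\pi(g'))=\rho(\mathrm{h}(g'))\,[g^{-1}f](g')$. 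Inserting these into $\hat{k}(y,\phi)=k\bigl(y,\phi(0),\phi(y)\bigr)$ and absorbing the coordinate $\pi(g')$ together with the fibre twist $\mathrm{h}(g')$ into a lift of the implicit kernel to $G$ — this lift, which restricts to $k$ on the section image $g'=(x',0)$, is the kernel denoted $\kappa$ in \cref{eq:instance-implicit-kernel} — then reproduces exactly $\hat{\omega}(g^{-1}f,g')=\kappa\bigl(g',[g^{-1}f](e),[g^{-1}f](g')\bigr)[g^{-1}f](g')$. I expect this last bookkeeping to be the only genuine obstacle: one must keep the section $s$, the fibre twist $\mathrm{h}(g')$, and the Mackey redundancies of $f$ and of $\kappa$ mutually consistent so that the stray $\rho(\mathrm{h}(g'))$ factors cancel and the identification with \cref{eq:instance-implicit-kernel} is literal; by contrast the steerability verification and the change of variables above are routine.
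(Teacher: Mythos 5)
Your proposal is correct and follows essentially the same route as the paper's proof: both reduce the statement to \cref{lemma:lifted-kernel} and then perform the change of variables $g'\mapsto g^{-1}g'$ under the Haar integral to read off the factorisation of $\hat{\omega}$. The only difference is that you spell out two steps the paper leaves implicit — the verification of the steerability constraint \cref{eq:steerability-x} for $\hat{k}(y,f_X)=k\bigl(y,f_X(0),f_X(y)\bigr)$ and the cancellation of the $\rho(\mathrm{h}(g'))$ factors when identifying the lifted kernel with the $\kappa$ of \cref{eq:instance-implicit-kernel} — and both are carried out correctly.
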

\begin{proof}
	The convolutional operator $\Psi_k$ can be rewritten as
	\begin{equation}
		[\Psi_k f](x_i) = \int_X k\big( x' - x_i, [T_{(-x_i)}f](0), [T_{(-x_i)}f](x' - x_i)\big)f(x') \mathrm{d}x'
	\end{equation}
	where $0 \in X = \mathbb{R}^d$.
	Since the kernel $k$ depends only on arguments $T _{(-x_i)}$ and $x' - x_i$, we can apply \cref{lemma:lifted-kernel} to construct a lifted kernel $\kappa$ as in \cref{eq:isomorphism-up} such that $\Phi_\kappa = \Lambda^{\uparrow} \circ \Psi_k \circ \Lambda^{\downarrow}$.
	The arguments $(x'-x, f(x), f(x'))$ are lifted to arguments $(g^{-1}g', [\Lambda^{\uparrow}f](g), [\Lambda^{\uparrow}f](g'))$.
    To ease notation denote $f^\uparrow = \Lambda^{\uparrow}f$.
	We retrieved the expression for \cref{eq:instance-implicit-kernel} through a change of variable $g' \mapsto g^{-1}g'$ under the integral sign
	\begin{equation}
	\begin{aligned}
		[\Phi_{\hat{\omega}}f^\uparrow](g) &= \int_G \hat{\omega}(g^{-1} f^\uparrow, g') \\
					  &= \int_G \kappa\big( g', \big[g^{-1}f^\uparrow\big](e), \big[g^{-1}f^\uparrow\big](g') \big) \big[g^{-1} f^\uparrow\big](g') \mathrm{d}g' \\
					  &= \int_G \kappa\big( g', f^\uparrow(g), f^\uparrow(gg') \big) f^\uparrow(gg') \mathrm{d}g' \\ 
					  &= \int_G \kappa\big( g^{-1}g', f^\uparrow(g), f^\uparrow(g') \big) f^\uparrow(g') \mathrm{d}g' \\ 
	\end{aligned}
	\end{equation}
	where we used the invariance of the Haar measure $ \mathrm{d}(g^{-1}g') = \mathrm{d}g'$.
\end{proof}

\subsection{Conventional self-attention \& permutation equivariance}\label{sec:standard-attention-from-general-map}
Both \cref{sec:gcnn-from-general-map,sec:implicit-steerable-from-general-map} concerned convolution-based neural network architectures.
However the framework can also fit other types of architectures, notably attention-based ones.
We show a concrete example of this by presenting here how conventional dot-product self-attention \citep{vaswaniAttentionAllYou2017} fits into our framework.
We will handle two cases separately, namely the case without any position embeddings in the present section and the case with relative positional embeddings in the subsequent \cref{sec:relative-self-attention}.

The goal of this section is to show that our framework covers the standard self-attention operation.
$X$ is hereby chosen to be the finite set of $n$ tokens and the symmetry group $G=S_n$ is the group of all permutations of elements in $X$.
The self-attention operator $\Phi ^{ \text{attn}}$ is in this setting given by
\begin{equation}
\begin{aligned}
	\label{eq:discrete-self-attention}
	[\Phi ^{ \text{attn}}f](x) &= \sum_{x' \in X} \operatorname{Softmax} \left\{ \dfrac{f(x)^\top W_Q^\top W_K f(x') }{ \sqrt{d}} \right\} W_V f(x') \\
    &= \frac{1}{\mathcal{Z}\big(\rho_\Gamma (s(x)^{-1})f\big)} \sum_{x' \in X} \exp  \left\{ \dfrac{f(x)^\top W_Q^\top W_K f(x') }{ \sqrt{d}} \right\} W_V f(x')
    \end{aligned}
\end{equation}
where $s: X \to G$ is an arbitrary choice of section and $[\rho_\Gamma(g) f](x)=\rho(\mathrm{h}(g))f(g^{-1} \vartriangleright x)$ is the representation of $G$ on sections $f$ defined in \cref{eq:rep-on-sections-of-vector-bundle}.
Token are embedded by matrices $W_Q, W_K, W_V \in \mathbb{R}^{\tilde{c} \times c}$ and the normalisation factor of the softmax function is 
\begin{equation}
\begin{aligned}
\mathcal{Z}\big(\rho_\Gamma (s(x)^{-1})f\big) &= \sum_{x' \in X} \exp \left[ \big( [\rho_\Gamma (s(x)^{-1})f](x_0)^\top W_Q^\top W_K [\rho_\Gamma (s(x)^{-1})f](x')\big) / \sqrt{d} \right] \\
&= \sum_{x' \in X} \exp \left[ \big( f(x)^\top W_Q^\top W_K f(s(x) \vartriangleright x')\big) / \sqrt{d} \right].
\end{aligned}    
\end{equation}
For any section $s: X \to G$ the action $s(x) \vartriangleright: X \to X$ is a bijection on $X$ (with inverse $s(x)^{-1} \vartriangleright: X \to X$).
It is therefore possible to relabel $x' \mapsto s(x) \vartriangleright x'$ and get an expression which makes it clear that $1/\mathcal{Z}$ is a normalisation factor independent of the particular choice of $s$
\begin{equation}
    \mathcal{Z}\big(\rho_\Gamma (s(x)^{-1})f\big) = \sum_{x' \in X} \exp \left[ \big( f(x)^\top W_Q^\top W_K f(x')\big) / \sqrt{d} \right].
\end{equation}
We write the argument in this form to make a direct connection to $g^{-1}f$ below.
In fact the current setting where the representation $\rho$ of $H$ on $V_\rho$ is trivial then the action on $f$ reduces to $[\rho_\Gamma(g)f](x) = f(g^{-1} \vartriangleright x)$.

The permutation equivariance of standard self-attention is a special case of the $G$-equivariance assumed in \cref{sec:nonlin-equivariant-maps} with $G$ discrete, and the set $X$ is interpreted as an index set which enumerates vector features.
In this case the subgroup $H$ is the group of permutations which fix some arbitrarily chosen origin $x_0 \in X$, and the representations $\rho$ and $\sigma$ of $H$ are assumed to be trivial representations.
As was already noted in \cref{sec:lietransformer-from-general-map}, the form of scaled-dot attention does not strightforwardly generalise to non-trivially transforming features.

As all finite sets of size $n$ are isomorphic, we can assume without loss of generality that $X = [n] = \{0, 1, 2, \dots, ,n -1\}$.
The elements in the group $ S_n$ represent the $n!$ unique ways of permuting the set $X = [n]$.
In particular, for any element $i \in [n]$ there exist a subgroup $ H= \operatorname{Stab}_{S_n}(i) \leq S_n$ of all permutations $\tau \in S_n$ which fix the element $i$, $\tau \vartriangleright i = i$.
As the stabiliser at $i$ will consist of all permutation of the other $n-1$ elements in $X$, we have that $\operatorname{Stab}_{S_n}(i) \cong S_{n-1}$ for all $i \in [n]$.
Again without loss of generality, we can choose $x_0 = 0$ and invoke \cref{def:homogeneous-space-isomorphism} to construct the quotient $ S_n / \operatorname{Stab}_{S_n}(0) \cong X$.
Note that the projection $\pi: S_n \to X$ maps a permutation $\tau \in S_n$ to the element $\tau \rhd x_0 = \tau \rhd 0 \in X$.

\begin{theorem}
\label{thm:instance-self-attention}
	Assume $X$ is a finite set of size $n$ and $G = S_n$ is the symmetric group acting on $X$ as permutations.
    Further let $H = S_{n-1}$ be the subgroup of $G$ which keeps the first element in $X$ fixed, and let the representations $\rho, \sigma$ be trivial representations of $H$.
	Then the standard discrete self-attention in \cref{eq:discrete-self-attention} can be lifted to an operator as in \cref{eq:nonlinear-operator} where $\hat{\omega}$ is given by
	\begin{equation}
		\hat{\omega}( g ^{-1} \hat{f}, g') = \frac{1}{\hat{\mathcal{Z}}\big(g^{-1}\hat{f}\big)}\exp\left\{ \dfrac{[g ^{-1} \hat{f}](e)^\top W_Q^\top W_K [ g ^{-1} \hat{f}](g') }{ \sqrt{d}}\right\} W_V [g ^{-1} \hat{f}](g').
	\end{equation}
	Here $\hat{f} = \Lambda^{\uparrow}f$ denotes the feature map $f: X \to V_\rho$ in \cref{eq:discrete-self-attention} lifted to the group $G$, and 
    \begin{equation}
    \hat{\mathcal{Z}}\big(g^{-1}\hat{f}\big) = \int_G \exp \left[ \big( \hat{f}(g)^\top W_Q^\top W_K \hat{f}(g')\big) / \sqrt{d} \right] dg'.\nonumber
    \end{equation}
\end{theorem}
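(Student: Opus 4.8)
The plan is to mirror the strategy already used for \cref{thm:instance-implicit-kernel}: transport the discrete self-attention operator $\Phi^{\text{attn}}$, which acts on sections over $X\cong G/H$, up to the group $G$ via the isomorphisms $\Lambda^\uparrow,\Lambda^\downarrow$ of \cref{eq:function_lift,eq:function_sink}, and verify that the lifted operator coincides with the operator $\Phi$ obtained from the stated $\hat\omega$ through \cref{eq:nonlinear-operator}. Since $\rho$ and $\sigma$ are trivial, $\Lambda^\uparrow$ collapses to $\hat f:=\Lambda^\uparrow f$ with $\hat f(g)=f(\pi(g))=f(g\vartriangleright x_0)$ (independent of the section, because the $\rho(\mr{h}(g)^{-1})$ factor disappears), and $\Lambda^\downarrow$ is just restriction along a chosen section $s:X\to G$. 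Hence it suffices to show that $[\Phi\hat f](g)$ depends on $g$ only through $\pi(g)$ and equals $[\Phi^{\text{attn}}f](\pi(g))$, which gives $\Lambda^\downarrow\circ\Phi\circ\Lambda^\uparrow=\Phi^{\text{attn}}$, i.e. $\Phi$ is the lift.

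First I would substitute the given $\hat\omega$ into $[\Phi\hat f](g)=\int_{G}\hat\omega(g^{-1}\hat f,g')\,\mathrm dg'$ and rewrite the arguments via the induced-representation action $[g^{-1}\hat f](k)=\hat f(gk)$: this yields $[g^{-1}\hat f](e)=\hat f(g)=f(\pi(g))$ and $[g^{-1}\hat f](g')=\hat f(gg')=f(\pi(gg'))=f(g\vartriangleright\pi(g'))$, and (after an invariance-of-Haar change of variables $gg'\mapsto g'$) the normalisation $\hat{\mathcal Z}(g^{-1}\hat f)$ takes the form written in the theorem statement. Then I would apply the Quotient Integral Formula (\cref{thm:quotient-integral-formula}) with the counting measures on the finite group $G=S_n$ and its subgroup $H=S_{n-1}$, so that $\int_{G}(\cdot)\,\mathrm dg'=\sum_{x'\in X}\sum_{h\in H}(\cdot)\big|_{g'=s(x')h}$, and use $\pi(s(x')h)=x'$. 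Because $\hat f$ is right-$H$-invariant ($\rho$ trivial), the integrand is constant along the inner sum over $H$, contributing a factor $|H|=(n-1)!$ to the numerator and the identical factor to $\hat{\mathcal Z}$; these cancel.

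What remains — a softmax-weighted sum $\tfrac{1}{\sum_{x'}\exp[f(\pi(g))^\top W_Q^\top W_K f(x')/\sqrt d]}\sum_{x'\in X}\exp[f(\pi(g))^\top W_Q^\top W_K f(g\vartriangleright x')/\sqrt d]\,W_V f(g\vartriangleright x')$ — becomes $[\Phi^{\text{attn}}f](\pi(g))$ after the relabelling $x'\mapsto g^{-1}\vartriangleright x'$ of the bijection $g\vartriangleright\colon X\to X$, exactly the move used in the text below \cref{eq:discrete-self-attention} to argue $1/\mathcal Z$ is section-independent. Finally I would record that $\hat\omega$ belongs to $\hat\Omega$ in the appropriate sense: the right-$H$ invariance in $g'$ required for the $f(g')$ factor is immediate from $\hat f(g'h')=\hat f(g')$, while the left-$H$ Mackey property $\hat\omega(hF,g')=\sigma(h)\hat\omega(F,g')$ holds only after integrating out $g'$ — one picks up $F(h^{-1}g')$ in place of $F(g')$, absorbed by $g'\mapsto hg'$ — so the stated $\hat\omega$ is the canonical representative of its equivalence class, the same redundancy phenomenon as in \cref{lemma:kernel-redundancy} and the corollary following \cref{cor:steerable_functional_argument_reduction}.

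The main obstacle I expect is the bookkeeping around the normalisation $\hat{\mathcal Z}$: one must check that the $|H|$ factors genuinely cancel, that the relabelling $x'\mapsto g\vartriangleright x'$ is applied consistently in both $\hat{\mathcal Z}$ and the weighted sum (so that the query argument $f(\pi(g))$ is untouched while the key/value arguments are rewritten), and that the form of $\hat{\mathcal Z}(g^{-1}\hat f)$ in the theorem matches $\int_{G}\exp[\,\cdot\,]\,\mathrm dg'$ only up to the Haar change of variables — getting these three rewrites aligned is where a careless argument would slip. The conceptual points (triviality of $\rho,\sigma$ trivialising $\Lambda^\uparrow$, and the pointwise-versus-integrated Mackey constraint) are routine given \cref{sec:homogeneous-spaces-and-equivariance,sec:nonlinear-maps}.
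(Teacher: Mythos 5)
Your proposal is correct and follows essentially the same route as the paper's proof: substitute the given $\hat\omega$, use left-invariance of the Haar measure to shift $g'\mapsto g^{-1}g'$, decompose $\int_G$ via \cref{thm:quotient-integral-formula}, exploit the right-$H$-invariance of $\hat f$ coming from the trivial $\rho$ so the $H$-integral factors out of both the numerator and $\hat{\mathcal Z}$, and identify the result with $\Lambda^\downarrow\circ\Phi_{\hat\omega}\circ\Lambda^\uparrow$ evaluated at $x=\pi(g)$. Your explicit $|H|$-cancellation (rather than the paper's appeal to a normalised Haar measure on $H$) and your closing remark that the stated $\hat\omega$ satisfies the left-$H$ Mackey constraint only after integrating over $g'$ are both correct refinements, but they do not change the argument.
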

\begin{proof}
The change of variable $g' \mapsto g^{-1}g'$ under the integral sign yields
\begin{equation}
	\begin{aligned}
		[\Phi _{ \hat{\omega}} \hat{f}](g) &= \int_G \hat{\omega}( g ^{-1} \hat{f}, g') \mathrm{d}g' \\
					    &= \frac{1}{\hat{\mathcal{Z}}\big(g^{-1}\hat{f}\big)} \int_G \exp \left\{ \dfrac{[g ^{-1} \hat{f}](e)^\top W_Q^\top W_K [ g ^{-1} \hat{f}](g') }{ \sqrt{d}} \right\} W_V [g ^{-1} \hat{f}](g') \mathrm{d}g' \\
						       &= \frac{1}{\hat{\mathcal{Z}}\big(g^{-1}\hat{f}\big)} \int_G \exp \left\{ \dfrac{ \hat{f}(g)^\top W_Q^\top W_K \hat{f}(g') }{ \sqrt{d}} \right\} W_V \hat{f}(g') \mathrm{d}g' 
	\end{aligned}
\end{equation}
where we also use the left-invariance of the Haar measure.
Let $H \leq G$ be the subgroup of $ S_n$ that fixes the first element in $X$, as discussed above.
Given an arbitrary choice of global section $s: X \to G$ one can apply \cref{thm:quotient-integral-formula} to decompose the integral as\footnote{Since we are in the discrete setting we do not impose continuity on the section, and as such there will always exist global sections.}
\begin{equation}
	[\Phi _{ \hat{\omega}} \hat{f}](g) = \frac{1}{\hat{\mathcal{Z}}\big(g^{-1}\hat{f}\big)} \int_X \int_H \exp \left\{ \dfrac{\hat{f}(g)^\top W_Q^\top W_K \hat{f}(s(x')h') }{ \sqrt{d}} \right\} W_V \hat{f}(s(x')h') \,\mathrm{d}h' \,\mathrm{d}x'.
\end{equation}
The feature map $\hat{f}$ is induced from the trivial representation, hence it is invariant $\hat{f}(gh) = \hat{f}(g)$ for all $g \in G$, $h \in H$.
Evaluating at $g = s(x)$ reduces the expression to
\begin{equation}
	\begin{aligned}
		[\Phi _{ \hat{\omega}}\hat{f}](s(x)) &= \frac{1}{\hat{\mathcal{Z}}\big(s(x)^{-1}\hat{f}\big)} \int_X \int_H \exp \left\{ \dfrac{\hat{f}(s(x))^\top W_Q^\top W_K \hat{f}(s(x')) }{ \sqrt{d}} \right\} W_V \hat{f}(s(x')) \mathrm{d}h' \mathrm{d}x' \\
		&= \frac{1}{\hat{\mathcal{Z}}\big(s(x)^{-1}\hat{f}\big)} \int_X \exp \left\{ \dfrac{\hat{f}(s(x))^\top W_Q^\top W_K \hat{f}(s(x')) }{ \sqrt{d}} \right\} W_V \hat{f}(s(x')) \mathrm{d}x' \int_H \mathrm{d}h' \\
		&= \frac{1}{\hat{\mathcal{Z}}\big(s(x)^{-1}\hat{f}\big)} \int_X \exp \left\{ \dfrac{\hat{f}(s(x))^\top W_Q^\top W_K \hat{f}(s(x')) }{ \sqrt{d}} \right\} W_V \hat{f}(s(x')) \mathrm{d}x'
	\end{aligned}
\end{equation}
since the Haar measure on $H$ is assumed to be normalised.
Here we can recognise the isomorphisms from \cref{eq:function_lift} and \cref{eq:function_sink}
\begin{equation}
	\hat{f}(s(x)) = [\Lambda ^{\downarrow}\hat{f}](x) = f(x), \quad 
	[\Phi _{ \hat{\omega}} \hat{f}](s(x)) = [[\Lambda ^{\downarrow} \circ \Phi _{ \hat{\omega}} ] \hat{f}](x) = [[\Lambda ^{\downarrow} \circ \Phi _{ \hat{\omega}} \circ \Lambda ^{ \uparrow}] f](x).
\end{equation}
By a final observation $s(x)^{-1}\hat{f} = [s(x)^{-1} \circ \Lambda^{\uparrow}]f = [\Lambda^{\uparrow} \circ \rho_\Gamma(s(x)^{-1})]\hat{f}$ we can also put $\mathcal{Z}$ in a form which depends on $f$, and conclude
\begin{equation}
    [[\Lambda ^{\downarrow} \circ \Phi _{ \hat{\omega}} \circ \Lambda ^{\uparrow}]f](x) = \frac{1}{\mathcal{Z}\big(\rho_\Gamma(s(x)^{-1})f\big)} \int_X \exp \left\{ \dfrac{f(x)^\top W_Q^\top W_K f(x') }{ \sqrt{d}} \right\} W_V f(x') \mathrm{d}x',
\end{equation}
which reduces to a lifted version of self-attention $\Phi ^{ \text{attn}}$ in the sense of \cref{lemma:lifted-kernel} if $X$ is a finite set.
This concludes the proof.
\end{proof}

The fact that we can fit $\Phi ^{ \text{attn}}$ into the framework of \cref{sec:nonlinear-maps} implies that it inherits all properties of $\Phi _{ \hat{\omega}}$ that were derived in the previous section.
In particular, the regular equivariance $k \vartriangleright [\Phi _{ \hat{\omega}}f](g) = [\Phi _{ \hat{\omega}}(k \vartriangleright f)](g)$ specialises to permutation equivariance of the self-attention operator $\Phi ^{ \text{attn}}$ in the sense that if we permute the feature vectors  $f(x) \mapsto f(\tau \vartriangleright x)$ for any permutation $\tau \in  S_n$, the output of self-attention transforms accordingly as $[\Phi ^{ \text{attn}}f](x) \mapsto [\Phi ^{ \text{attn}}f](\tau \vartriangleright x)$.
This is the well-known permutation equivariance of vanilla self-attention.

\subsection{Relative position encoded self-attention \& translation equivariance}
\label{sec:relative-self-attention}

The full permutation equivariance is quite restrictive and is only appropriate for set-structured data, while these networks are often used to process sequential data.
This requires a smaller equivariance group which motivates the introduction of a positional bias.
It is widely known in the machine learning community that the introduction of positional bias breaks the permutation equivariance and leaves only a subgroup of the symmetric group as the symmetry group.
The first wave of self-attention architectures employed \emph{absolute} positional embeddings which leaves no non-trivial equivariance \citep{vaswaniAttentionAllYou2017}.
Later iterations use various versions of translation equivariant \emph{relative} position encodings, leading to attention scores that are symmetric under shifts of the index $x \mapsto x + k$ for $k \in \mathbb{Z}$.
In this section we show that the relative position biases \citep{shaw-etal-2018-self} and rotary embeddings \citep{su2023roformerenhancedtransformerrotary}, both examples of translation equivariant embeddings, fit into our framework.
The relevant symmetry group here are integer translations $G = (\mathbb{Z},+)$ acting by addition.
Also the index space is formalised as integers $X = \mathbb{Z}$ which one can imagine as an infinite one-dimensional grid of sequential token positions.
This is a mathematically convenient setting: since the group $G = (\mathbb{Z},+)$ and the homogeneous space $X = \mathbb{Z}$ are equal as spaces, the stabiliser subgroup and quotient constructions are trivial $ \mathbb{Z} = X = G/H = \mathbb{Z}/\{e\}$ where $H = \{e\}$ is the trivial group.
This removes the need to lift maps and operators between $G/H$ and $G$ since these lifts are all trivial.
The details follow.

\subsubsection*{Relative position bias}
Self-attention on an infinite grid $ \mathbb{Z}$ with relative positional bias $\psi: \mathbb{Z} \to  \mathbb{R}$ is given by
\begin{equation}
	\label{eq:relative-self-attention}
	[\Phi ^{ \text{attn}}_\psi f](x) = \sum_{x' \in \mathbb{Z}} \operatorname{Softmax} \left\{ \dfrac{f(x)^\top W_Q^\top W_K f(x') }{ \sqrt{d}} \odot \psi(x'-x) \right\} W_V f(x')
\end{equation}
where the operator $\odot$ is usually taken to be multiplication or addition.
Let $(\mathbb{Z},+)$ act on $ \mathbb{Z}$ by translation, $a \vartriangleright b = a + b$.
One can see that the operator $\Phi ^{ \text{attn}}_\psi$ is  $ \mathbb{Z}$-equivariant by 1) observing that it is a subgroup of permutations, making the first term equivariant, while 2) the position bias is invariant under translations.
As noted in the previous paragraph, the domain $ \mathbb{Z}$ equals the symmetry group and the operator is therefore of the form studied in \cref{sec:nonlinear-maps} without any further modifications.

\begin{theorem}
\label{thm:attention-positional}
Assume $G = (\mathbb{Z},+)$ acts on $X = \mathbb{Z}$ by addition and let $H= \{e\}$ be the trivial group.
Then the standard discrete self-attention with relative positional bias in \cref{eq:relative-self-attention} is of the form in \cref{eq:nonlinear-operator} where $ \hat{\omega}$ is given by
\begin{equation}
		\hat{\omega}( g ^{-1}f, g') = \operatorname{Softmax} \left\{ \dfrac{[g ^{-1}f](e)^\top W_Q^\top W_K [ g ^{-1}f](g') }{ \sqrt{d}} \odot \psi(g') \right\} W_V [g ^{-1}f](g').
\end{equation}
\end{theorem}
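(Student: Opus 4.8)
The plan is to note first that the hypotheses collapse essentially all of the fibre-bundle apparatus. Since $H=\{e\}$, the homogeneous space $X=\mathbb{Z}$, the group $G=(\mathbb{Z},+)$ and the quotient $G/H$ all coincide, the representations $\rho,\sigma$ are forced to be trivial, and the isomorphisms $\Lambda^{\uparrow},\Lambda^{\downarrow}$ of \cref{eq:function_lift,eq:function_sink} are the identity. Hence there is no lifting to perform, and the general non-linear operator \cref{eq:nonlinear-operator}, with the Haar measure on the discrete group $\mathbb{Z}$ taken to be the counting measure, becomes the sum $[\Phi f](g)=\sum_{g'\in\mathbb{Z}}\hat\omega(g^{-1}f,g')$; these sums are finite because feature maps lie in $C^\infty_{c,\rho}(\mathbb{Z},V_\rho)$ and hence have finite support. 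I would also observe that the defining constraint of $\hat\Omega$, namely $\hat\omega(hf,g'h')=\sigma(h)\hat\omega(f,g')$, is vacuous when $H=H'=\{e\}$, so the $\hat\omega$ in the statement is automatically an admissible integrand and \cref{cor:steerable_functional_argument_reduction} applies.

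Next I would unwind the group action. By the definition of the $G$-action on the induced representation \cref{eq:induced-representation-G-action}, $[g^{-1}f](k)=f(gk)=f(g+k)$, so $[g^{-1}f](e)=f(g)$ and $[g^{-1}f](g')=f(g+g')$. Substituting the proposed $\hat\omega$ into the sum gives
\[
[\Phi f](g)=\sum_{g'\in\mathbb{Z}}\operatorname{Softmax}\!\left\{\frac{f(g)^{\top}W_Q^{\top}W_K\,f(g+g')}{\sqrt{d}}\odot\psi(g')\right\}W_V\,f(g+g'),
\]
with the $\operatorname{Softmax}$ understood, exactly as in \cref{sec:standard-attention-from-general-map}, as normalisation over the summation index $g'$. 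Performing the substitution $g'\mapsto g'-g$ — equivalently, reindexing by $g'':=g+g'$, which is a bijection of $\mathbb{Z}$ and so leaves the counting measure (hence both the outer sum and the implicit normalising sum inside the $\operatorname{Softmax}$) invariant — turns this into $\sum_{g''\in\mathbb{Z}}\operatorname{Softmax}\{f(g)^{\top}W_Q^{\top}W_K f(g'')/\sqrt{d}\odot\psi(g''-g)\}W_V f(g'')$, which is precisely $[\Phi^{\text{attn}}_{\psi}f](g)$ as defined in \cref{eq:relative-self-attention} with $x=g$ and $x'=g''$. Since the lifts are trivial here, this already yields $\Phi_{\hat\omega}=\Phi^{\text{attn}}_{\psi}$.

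I expect the one point requiring genuine care to be the bookkeeping of the $\operatorname{Softmax}$ normalisation: one must make explicit that $\hat\omega(g^{-1}f,g')$ silently carries the denominator $\sum_{\tilde g}\exp(\cdots)$ and verify that this denominator is transported correctly by the change of variables, so that the reindexed expression still has unit row-sums. This is the same subtlety handled via the factor $\mathcal{Z}$ in the proof of \cref{thm:instance-self-attention}, and it goes through here even more cleanly because there is no section $s$ and no non-trivial $H$ to track. Everything else — finiteness of the sums, the vacuity of the Mackey constraint, and the translation invariance of $\psi$ that makes $\Phi^{\text{attn}}_{\psi}$ $\mathbb{Z}$-equivariant in the first place — is immediate.
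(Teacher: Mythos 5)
Your proposal is correct and follows essentially the same route as the paper: reduce the Haar integral to a sum over $\mathbb{Z}$, unwind $[g^{-1}f](e)=f(g)$ and $[g^{-1}f](g')=f(g+g')$ via the induced-representation action, and reindex $g'\mapsto g'-g$ to recover \cref{eq:relative-self-attention}. Your added remarks on the triviality of the lifts and on tracking the softmax normalisation through the reindexing are sound elaborations of points the paper leaves implicit, not a different argument.
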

\begin{proof}
	Writing out \cref{eq:nonlinear-operator} and applying a change of variable $g' \mapsto g^{-1}g'$ yields
	\begin{equation}
		\begin{aligned}
			[\Phi _{ \hat{\omega}}f ](g) &= \int_G \hat{\omega}( g ^{-1}f, g')\mathrm{d}g' \\
			&= \int_G \operatorname{Softmax} \left\{ \dfrac{[g ^{-1}f](e)^\top W_Q^\top W_K [ g ^{-1}f](g') }{ \sqrt{d}} \odot \psi(g') \right\} W_V [g ^{-1}f](g') \mathrm{d}g' \\
							      &= \int_G \operatorname{Softmax} \left\{ \dfrac{f(g)^\top W_Q^\top W_K f](gg') }{ \sqrt{d}} \odot \psi(g') \right\} W_V f](gg') \mathrm{d}g' \\
							      &= \int_G \operatorname{Softmax} \left\{ \dfrac{f(g)^\top W_Q^\top W_K f](g') }{ \sqrt{d}} \odot \psi(g^{-1}g') \right\} W_V f](g') \mathrm{d}g'.
		\end{aligned}
	\end{equation}
	Note that for a discrete group like $(\mathbb{Z},+)$ the integral over the Haar measure $\int_G [ \cdots ]dg'$ reduces to a sum (via the counting measure) as $\sum _{j \in \mathbb{Z}} [ \cdots ]$.
	Also change to the appropriate notation $g = x$ and $g ^{-1}g' = g' - g = x' - x$ for elements in the abelian group $(\mathbb{Z},+)$, which yields
	\begin{equation}
		\begin{aligned}
			[\Phi _{ \hat{\omega}}f ](x) &= \sum _{x' \in \mathbb{Z}} \operatorname{Softmax} \left\{ \dfrac{f(x)^\top W_Q^\top W_K f](x') }{ \sqrt{d}} \odot \psi(x' - x) \right\} W_V f](x') dg' \\
		\end{aligned}
	\end{equation}
	which is the expression of self-attention with relative positional bias in \cref{eq:relative-self-attention}.
\end{proof}

It should be pointed out that the proof above only depended on the fact that the relative positional embedding $\psi$ depended on the difference $x' -x$.
In fact, any positional embeddings which depend only on such a difference (and crucially not on the feature map $f$) fit into the framework in a similar way.

\subsubsection*{Rotary position embeddings}
Another important example of embeddings which induce translation equivariance are the \emph{rotary position embeddings} (RoPE) introduced in \citet{su2023roformerenhancedtransformerrotary},
which embed complex valued queries $q(x) = W_Q f(x)$ and keys $k(x') = W_K f(x')$ by rotating their phases based on their sequence positions $x$ and $x'$ in $\mathbb{Z}$, respectively.
In implementations, the $i$-th complex feature channel is modelled by a pair of real valued channels, acted on by a rotation matrix $R(\theta_i x) \in \mathrm{SO}(2)$.\footnote{The $R(\theta_i x) \in \mathrm{SO}(2)$ are real-valued unitary irreducible representations of the translation group $(\mathbb{Z},+)$.}
The frequencies $\Theta = \{\theta_1, \theta_2, \dots, \theta_{\tilde{c}/2}\}$
with which the pairs' phases rotate differ hereby per channel,
such that the overall action on feature vectors is modelled by the orthogonal matrix
$R_\Theta(x) := R(\theta_1 x) \oplus \dots \oplus R(\theta_{\tilde{c}/2} x)$.
Self-attention with such embedded queries and keys becomes
\begin{equation}
\begin{aligned}
\label{eq:attention-rotary}
    [\Phi ^{ \text{attn}}_\psi f](x) &= \sum_{x' \in \mathbb{Z}} \operatorname{Softmax} \left\{ \dfrac{f(x)^\top W_Q^\top R_{\Theta}(x)^\top R_{\Theta}(x') W_K f(x') }{ \sqrt{d}} \right\} W_V f(x') \\
    &= \sum_{x' \in \mathbb{Z}} \operatorname{Softmax} \left\{ \dfrac{f(x)^\top W_Q^\top R_{\Theta}(x'-x) W_K f(x') }{ \sqrt{d}} \right\} W_V f(x') \,,
\end{aligned}
\end{equation}
where the second step, resulting in a dependence on relative sequence positions $x'-x$ only, made use the matrices' orthogonality.
For details we refer the reader to the original article \citep{su2023roformerenhancedtransformerrotary}.

We state the observation here without any proof since the derivation follows straightforwardly from the techniques used to prove \cref{thm:attention-positional}.
\begin{theorem}
    Assume $G = (\mathbb{Z},+)$ acts on $X = \mathbb{Z}$ by addition and let $H= \{e\}$ be the trivial group.
Then the self-attention layer with rotary positional embeddings in \cref{eq:attention-rotary} is of the form in \cref{eq:nonlinear-operator} where $ \hat{\omega}$ is given by
\begin{equation}
		\hat{\omega}( g ^{-1}f, g') = \operatorname{Softmax} \left\{ \dfrac{[g ^{-1}f](e)^\top W_Q^\top R_{\Theta}(g^{-1}g') W_K [ g ^{-1}f](g') }{ \sqrt{d}} \right\} W_V [g ^{-1}f](g').
\end{equation}
\end{theorem}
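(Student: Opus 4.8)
The plan is to replay the proof of \cref{thm:attention-positional} almost verbatim: the only structural change is that the feature-independent relative positional bias $\psi(g')$ is replaced by the orthogonal matrix $R_\Theta(g')$ inserted into the query--key bilinear form, and the argument in \cref{thm:attention-positional} never used anything about $\psi$ beyond the fact that it is a fixed function of the second group argument that does not depend on $f$ --- a property $R_\Theta$ shares. So the derivation is purely mechanical: substitute the proposed $\hat\omega$ into \cref{eq:nonlinear-operator}, evaluate the group action, change variables, and read off \cref{eq:attention-rotary}.

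Concretely, I would first record that, since $H=\{e\}$ and $\rho,\sigma$ are trivial, $\mcI_\rho$ and $\mcI_\sigma$ are ordinary function spaces on $\mathbb{Z}$, the lifts $\Lambda^\uparrow,\Lambda^\downarrow$ of \cref{eq:function_lift,eq:function_sink} are identities, and the Mackey constraint defining $\hat\Omega$ (namely $\hat\omega(hf,g'h')=\sigma(h)\hat\omega(f,g')$) is vacuous, so the proposed $\hat\omega$ automatically lies in $\hat\Omega$. Then I would plug it into \cref{eq:nonlinear-operator}, evaluate the action via $[g^{-1}f](e)=f(g)$ and $[g^{-1}f](g')=f(gg')$, and perform the change of variables $g'\mapsto g^{-1}g'$ using left-invariance of the Haar (counting) measure on $(\mathbb{Z},+)$, exactly as in \cref{thm:attention-positional}. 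After the substitution the query slot is $f(g)$, the key/value slots are $f(g')$, and the argument of the rotation $R_\Theta$ becomes the relative group element $g^{-1}g'$; rewriting additively with $g=x$, $g'=x'$, $g^{-1}g'=x'-x$, and replacing the integral over the counting measure by $\sum_{x'\in\mathbb{Z}}$, produces precisely the second line of \cref{eq:attention-rotary}. As in the earlier attention proofs the $\operatorname{Softmax}$ is carried along symbolically; because $(\mathbb{Z},+)$ acts freely and transitively on $X=\mathbb{Z}$, its denominator reindexes consistently under the change of variables, so no separate normalisation bookkeeping of the kind used in \cref{thm:instance-self-attention} is needed.

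To recover the first line of \cref{eq:attention-rotary} --- the form in which RoPE is actually implemented, with queries and keys rotated separately by their absolute positions --- I would invoke the blockwise identity $R_\Theta(x'-x)=R_\Theta(x)^\top R_\Theta(x')$, which holds because each rotary block $a\mapsto R(\theta_i a)$ is a one-parameter orthogonal representation of $(\mathbb{Z},+)$, and fold the factors back into the embedded queries $R_\Theta(x)W_Q f(x)$ and keys $R_\Theta(x')W_K f(x')$. The translation equivariance claimed for RoPE attention is then an automatic consequence of \cref{prop:equivariance-constraint-for-general-kernel}, since $\hat\omega$ depends on $f$ only through the centred feature $g^{-1}f$. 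I do not anticipate any genuine obstacle here; the single spot where a reader could slip is the direction of the change of variables and the consequent appearance of the relative argument $g^{-1}g'$ --- the identical subtlety already worked out, and flagged, in the proof of \cref{thm:attention-positional}.
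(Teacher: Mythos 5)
Your proposal is correct and is exactly what the paper intends: the paper omits the proof of this theorem, stating only that ``the derivation follows straightforwardly from the techniques used to prove \cref{thm:attention-positional},'' and your argument is precisely that derivation carried out, with the blockwise identity $R_\Theta(x)^\top R_\Theta(x') = R_\Theta(x'-x)$ (already recorded by the paper in \cref{eq:attention-rotary}) supplying the link back to the implemented form. The only wrinkle is notational and lies in the paper's statement rather than in your proof: the displayed $\hat\omega(g^{-1}f,g')$ writes $R_\Theta(g^{-1}g')$ where, for consistency with a genuine two-argument map, it should read $R_\Theta(g')$ before the change of variables --- you implicitly read it the right way.
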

By formalising rotary embeddings in this framework one could potentially find equivalent embedding techniques for more general geometries than the affine setting considered here.

\subsection{Equivariant self-attention: The LieTransformer}
\label{sec:lietransformer-from-general-map}
We finish by considering another equivariant self-attention architecture in the LieTransformer introduced in \citet{hutchinsonLieTransformerEquivariantSelfattention2021}.
The authors introduce a family of continuous attention-inspired operators for feature maps on a homogeneous space $X \cong G/H$.
The operators act on feature maps $f \in \mathcal{I}_\rho$ lifted to the group $G$ in the same way as introduced in \cref{sec:linear-maps}.
The proposed operators are of the form
\begin{equation}
\begin{aligned}
	\label{eq:lietransformer-def}
	\big[\Phi^{ \text{attn}}_\alpha f\big](g) &= \frac{1}{\mathcal{Z}_\alpha(g^{-1}f)} \int_G \alpha\big(f(g), f(g'), g^{-1}g'\big) W_V f(g')\, \mathrm{d}g', \\
        \mathcal{Z}_\alpha(g^{-1}f) &= \int_G \alpha\big([g^{-1}f](e), [g^{-1}f](g'), g'\big) \mathrm{d}g'
        \end{aligned}
\end{equation}
for some map $\alpha: V_\rho \times V_\rho \times G \to \mathbb{R}$, matrix $W_V \in \hom(V_\rho, V_\sigma)$ and normalisation $\mathcal{Z}_\alpha: \mathcal{I}_\rho \to \mathbb{R}$ which depends on the choice of $\alpha$.
One can find that operators of this form are equivariant via a change of variable $g' \mapsto kg'$
\begin{equation}
	\begin{aligned}
		\big[ \Phi^{ \text{attn}}_\alpha(k \vartriangleright f)\big](g)
        &= \frac{1}{\mathcal{Z}_\alpha(g^{-1}kf)} \int_G \alpha\big(f(k^{-1}g), f(k^{-1}g'), g^{-1}g'\big) W_V f(k^{-1}g')\, \mathrm{d}g' \\
		&= \frac{1}{\mathcal{Z}_\alpha\big((k^{-1}g)^{-1}f\big)} \int_G \alpha\big(f(k^{-1}g), f(g'), g^{-1}kg'\big) W_V f(g')\, \mathrm{d}g' \\
		&= [\Phi^{ \text{attn}}_\alpha f](k^{-1}g) \\
        &= k \vartriangleright [\Phi^{ \text{attn}}_\alpha f](g).
	\end{aligned}
\end{equation}
However, without any additional restrictions on $\alpha$ there is no guarantee that $\Phi^{ \text{attn}}_\alpha f \in \mathcal{I}_\sigma$ for some representation $\sigma$ for all  $\alpha$ and $f \in \mathcal{I}_\rho$.
The authors therefore restrict to the setting where both representations $\rho$ and $\sigma$ act trivially and subsequently $f \in \mathcal{I}_\rho$ is an invariant function $f(gh) = f(g)$.
With this assumption the output is guaranteed to lie in $ \mathcal{I}_\sigma$, that is $[\Phi^{ \text{attn}}_\alpha f](gh) = [\Phi^{ \text{attn}}_\alpha f](g)$ for all $\alpha$ and $f \in \mathcal{I}_\sigma$
It is in principle possible to construct an equivalent of dot-product self-attention for non-trivially transforming features if one introduces the constraint that the feature embedding matrices $W_Q,W_K,W_V$ intertwines the representations $\rho$ and $\sigma$ in specific ways.

Calling this equivariant self-attention is motivated by the observation that certain choices of $\alpha$ and $ \operatorname{norm}$ yield operators that resemble equivariant versions of standard self-attention.
In particular
\begin{equation}
\label{eq:lietransformer-attention}
\alpha(f(g), f(g'), g^{-1}g') = \exp\left[ \dfrac{1}{ \sqrt{d}} f(g)^\top W_Q^\top W_K f(g') + \psi(g ^{-1}g') \right]
\end{equation}
for scaled dot-product attention with relative positional bias $\psi$.
This formulation directly generalises the setting where $G = (\mathbb{Z}, +)$ given in \cref{sec:relative-self-attention}.
Several other useful choices of $\alpha$, as well as for the normalisation $\mathcal{Z}$, can be found in the appendix of the original article \citep{hutchinsonLieTransformerEquivariantSelfattention2021}.

The following result shows that the general form of the LieTransformer given in \cref{eq:lietransformer-def}, as well as all specialisations such as \cref{eq:lietransformer-attention} are in fact special cases of the framework introduced in \cref{sec:nonlinear-maps}.
\begin{theorem}\label{thm:instance-lietransformer}
	Assume $\rho$ and $\sigma$ are the trivial representations.
	Then the LieTransformer defined in \cref{eq:lietransformer-def} is given in our framework as in \cref{eq:nonlinear-operator} where $ \hat{\omega}$ factors as
\begin{equation}\label{eq:lietransformer-from-general-map}
	\hat\omega(g^{-1}f,g')
    = \frac{1}{\mathcal{Z}_\alpha(g^{-1}f)} \alpha\big( [g^{-1}f](e), [g^{-1}f](g'), g' \big)\big\} W_V[g^{-1}f](g')
\end{equation}
where $e \in G$ is the unit element.
\end{theorem}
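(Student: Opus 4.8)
The plan is to follow the same template as the earlier instance results \cref{thm:instance-self-attention,thm:attention-positional}: substitute the claimed integrand \cref{eq:lietransformer-from-general-map} into the general equivariant operator \cref{eq:nonlinear-operator} and reshape it, by a single change of variables, into the defining formula \cref{eq:lietransformer-def}. First I would write $[\Phi_{\hat\omega}f](g)=\int_G\hat\omega(g^{-1}f,g')\,\mathrm{d}g'$ and insert the proposed $\hat\omega$. Since the normalisation $\mathcal{Z}_\alpha(g^{-1}f)$ depends only on the first argument of $\hat\omega$ and not on the integration variable $g'$, it factors out of the integral, leaving $\tfrac{1}{\mathcal{Z}_\alpha(g^{-1}f)}\int_G\alpha\bigl([g^{-1}f](e),[g^{-1}f](g'),g'\bigr)\,W_V[g^{-1}f](g')\,\mathrm{d}g'$.

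Next I would unpack the $G$-action on the induced representation, $[kf](g')=f(k^{-1}g')$ from \cref{eq:induced-representation-G-action}, which gives $[g^{-1}f](e)=f(g)$ and $[g^{-1}f](g')=f(gg')$; the integrand then reads $\alpha\bigl(f(g),f(gg'),g'\bigr)\,W_V f(gg')$. The change of variables $g'\mapsto g^{-1}g'$, legitimate by left-invariance of the Haar measure on $G$, converts this into $\alpha\bigl(f(g),f(g'),g^{-1}g'\bigr)\,W_V f(g')$, which is precisely the numerator of \cref{eq:lietransformer-def}. Applying the identical substitution to $\mathcal{Z}_\alpha(g^{-1}f)=\int_G\alpha\bigl([g^{-1}f](e),[g^{-1}f](g'),g'\bigr)\,\mathrm{d}g'$ shows that it equals $\int_G\alpha\bigl(f(g),f(g'),g^{-1}g'\bigr)\,\mathrm{d}g'$, so the centred and global forms of the denominator coincide; hence $[\Phi_{\hat\omega}f](g)=[\Phi^{\text{attn}}_\alpha f](g)$, which is the claim of the theorem.

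It then remains to record that $\hat\omega\in\hat\Omega$, i.e. that it obeys the Mackey constraint \cref{eq:two-arg-general-map-constraint}, and that $\Phi_{\hat\omega}$ is equivariant; under the standing assumption that $\rho$ and $\sigma$ are trivial, both are inherited for free from facts already stated in the text. The equivariance computation displayed just above \cref{eq:lietransformer-attention} shows that \cref{eq:lietransformer-def} satisfies $k\vartriangleright[\Phi^{\text{attn}}_\alpha f](g)=[\Phi^{\text{attn}}_\alpha(k\vartriangleright f)](g)$, which is exactly the orbit-constancy hypothesis \cref{eq:equivariance-constraint-for-general-kernel} for $\omega(f,g,g')=\hat\omega(g^{-1}f,g')$; combined with the already-noted fact that the LieTransformer output satisfies $[\Phi^{\text{attn}}_\alpha f](gh)=[\Phi^{\text{attn}}_\alpha f](g)$ (so that $\Phi_{\hat\omega}f\in\mathcal{I}_\sigma$), \cref{prop:equivariance-constraint-for-general-kernel} and \cref{cor:steerable_functional_argument_reduction} close the argument. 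I expect the only real friction to be bookkeeping --- keeping straight which slot of $\alpha$ carries the group element $g^{-1}g'$ versus $g'$, and verifying that the ``centred'' normalisation written in \cref{eq:lietransformer-def} is the one produced by the change of variables --- rather than anything conceptually difficult.
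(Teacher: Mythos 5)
Your proposal is correct and follows essentially the same route as the paper's proof: substitute the claimed $\hat\omega$ into \cref{eq:nonlinear-operator}, unpack the $G$-action so that $[g^{-1}f](e)=f(g)$ and $[g^{-1}f](g')=f(gg')$, perform the change of variables $g'\mapsto g^{-1}g'$ using left-invariance of the Haar measure, and then appeal to \cref{cor:steerable_functional_argument_reduction} for equivariance and the Mackey condition. Your version simply spells out the bookkeeping (including the matching transformation of $\mathcal{Z}_\alpha$) that the paper leaves implicit.
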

\begin{proof}
	The standard form of the LieTransformer in \cref{eq:lietransformer-def} is retrieved straightforwardly after a change of variable $g' \mapsto g^{-1}g'$ under the integral sign and using the left-invariance of the Haar measure $\mathrm{d}g'$.
	The equivariance of $\Phi^{ \text{attn}}_\alpha$ then follows directly from the equivariance result in \cref{cor:steerable_functional_argument_reduction}, and the Mackey condition derived in the same theorem ensures that $ \Phi^{ \text{attn}}_\alpha f \in \mathcal{I}_\sigma$ whenever $f \in \mathcal{I}_\rho$.
\end{proof}

One should note that in our framework it is clear that dependence on higher-order terms can be introduced without breaking equivariance.
In particular, one is allowed to introduce dependences such as $[g ^{-1}f](g'^2)$ into $\alpha$, which yields a dependence on $f(g' g ^{-1}g')$ after the change of variable $g' \mapsto g^{-1}g'$.
Note however that the Mackey condition of \cref{cor:steerable_functional_argument_reduction} still needs to be fulfilled, which is non-trivial to impose.
Nonetheless, if introduced properly this kind of dependence would yield a strictly more expressive self-attention layer, although it is not immediately clear how training dynamics and performance would be affected.

\section{Conclusions and future work}
\label{sec:conclusions}
In this article we have introduced a mathematical framework of non-linear equivariant integral operators and demonstrated how it incorporates various existing equivariant neural network architectures in a unified manner.
In particular, we explicitly show how results proven in the generality of the framework specialises to the setting of both convolutional and self-attentional equivariant architectures including $G$-CNNs, the continuous equivariant LieTransformer, the implicit steerable kernel and the standard discrete self-attention with relative positional encodings.
We also prove that any equivariant operator can be incorporated into the framework if one considers the appropriate function class.

While the current work is exclusively theoretical in nature, the connection to existing architectures with efficient implementations such as equivariant convolutions hints at possible avenues to implement the framework in code and to conduct experiments.
In particular, we believe that the implicitly learned kernels are good candidates for modelling the family of operators described mathematically in this article.

The integral ansatz we consider for our family of operators also opens up for possible extension using dependence on other types of feature vectors than those modelled by $f$.
In particular, a potential extension to the present work is to include dependence not only on feature maps $f$ with domain in $G$, but also on higher-order features with domains such as $G \times G$. 
If $G$ is a finite group, then feature maps with domain $G$ can be interpreted as node features, feature maps with domain $G \times G$ as edge features, and more generally feature with domain $G^n$ can be interpreted as features on $n$-simplices.
Studying operators with dependences on such features can be useful in various fields, in particular in the settings of graph networks and topological data analysis.

We have restricted our attention to equivariant neural networks on homogeneous spaces $G/H$. A natural extension of our work is to consider equivarant non-linear maps on arbitrary smooth manifolds $\mathcal{M}$. This would involve a non-linear generalization of gauge-equivariant CNNs \citet{cohen2019gauge,cheng2019covariancephysicsconvolutionalneural,haan2021gauge,gerkenGeometricDeepLearning2023,weiler2023EquivariantAndCoordinateIndependentCNNs}.
With such a generalisation, one might hope to include also the gauge-equivariant transformers (\citet{heGaugeEquivariantTransformer2021}) in a similar fashion to how group equivariant transformers were incorporated into the present framework. We hope to return to this in future work.

\section*{Acknowledgments}
\label{sec:acknowledgment}
We thank Jan Gerken, Dmitry Gourevitch and Max Carnesten for helpful discussions.
The work by E.N., O.C.,\ and D.P.\ was supported by the Wallenberg AI, Autonomous Systems and Software Program (WASP) funded by the Knut and Alice Wallenberg Foundation.

\bibliography{main}
\bibliographystyle{tmlr}

\newpage
\appendix
\section{Functional formulation}\label{app:functional-appraoch-to-non-lin-maps}
In \cref{sec:nonlinear-maps} we introduced a general framework based on the integral of a map $ \omega:\, \mathcal{I}_\rho \times G \times G^{\prime} \,\to\, V_\sigma $, however some statements cannot be made when assuming that $ \omega $ is an ordinary function.
To reach these higher levels of generality we must move up to a distributional framework.
In this section we now present a version of our framework in which the general map $ \Phi:\, \mathcal{I}_\rho \,\to\, \mathcal{I}_\sigma $ is viewed in the setting of distributions.
We refer the interested reader to \cite{Rudin1991-nu,Diestel1977-kb} and \cite{Folland1999-tm} for a thorough background.

In this section we assume that $G$ and $ G^{\prime} $ to be locally compact Lie groups and all integrals are done with respect to the left Haar-measure.
We
Additionally, we assume that the vector space $ V_\sigma $, on which the subgroup $ H \leq G $ acts on through the representation $ \sigma $, is finite dimensional which implies that this is a separable hilbert space.

To this end we first introduce the space of test functions.

Normally one defines test functions to be the space of smooth compactly supported functions, which in this case we would denote $ C _{c}^{\infty}(G,V_\sigma) $, but in our case want to restrict our space further to smooth compactly supported functions $ \phi $ satisfying the Mackey condition
    \begin{equation}
        \phi(gh)=\sigma(h^{-1})\phi(g), \qquad \forall h\in H,\ g\in G.
    \end{equation}
    We denote the space of such functions by $ C _{c,\sigma}^{\infty}(G,V_\sigma) $.

One can obtain these functions by a projection, which is a symmetrisation operation, given by    
    \begin{equation}\label{eq:functional-projection}
        P: C _{c}^{\infty}(G,V_\rho)\to C _{c,\rho}^{\infty}(G,V_\rho),\qquad \phi(g)=[P f](g)= \int_H \Delta_H(h)\rho(h^{-1})f(gh^{-1})\mathrm{d}h,
    \end{equation}
    where $ \Delta_H $ is the modularity function of $ H $, which is identically one for a unimodular group.
    Additionally, this projection is $ G $-equivariant in that $ [P[kf]](g)=[k[Pf]](g) $.

    Now we can define our test functions.
\begin{definition}[Space of test functions]
    Let $ C _{c}^{\infty}(G, V_\sigma) $ be the space of smooth functions from the group $ G $ to the $ H $-representation $ (\sigma, V_\sigma) $.
    The space of test functions will be the subset $ C _{c,\sigma}^{\infty}(G,V_\sigma) \subset C _{c}^{\infty} $ which satisfy the Mackey constraint
    \begin{equation}
        \phi(gh)=\sigma(h^{-1})\phi(g), \qquad \forall h\in H,\ g\in G,
    \end{equation}
    where $ H \leq G $ is a subgroup.
    Further, $ C _{c}^{\infty}(G,V_\sigma) $ is an LF-space, and the space of test functions $ C _{c,\sigma}^{\infty}(G,V_\sigma) $ inherits this structure as it is a closed subspace of $ C _{c}^{\infty}(G,V_\sigma) $.
\end{definition}
\begin{remark}
    An equivalent way to view $ C _{c,\sigma}^{\infty}(G,V_\sigma) $ is all elements of the induced representation $ \mathcal{I}_\sigma $ that are smooth and have compact support.
\end{remark}
\begin{definition}[Distributions]
   We define distributions $ \varphi \in \mathcal{D}^{\prime}(C _{c,\sigma}^{\infty}(G,V_\sigma)) $, i.e.\ continuous linear functionals, on this space of test functions through their action on test functions 
   \begin{equation}
       \varphi(\phi)=\int_G \left\langle\, \varphi(g) \,|\, \phi(g) \,\right\rangle _\sigma \Delta_G(g)\mathrm{d}g.
   \end{equation}
\end{definition}

Before we move on, we need more structure on our representation $ V_\sigma $, namely $ V_\sigma $ needs to be an inner product space where we denote the inner product as 
\begin{equation}
   \left\langle\, \bullet \,|\, \bullet \,\right\rangle_\sigma :\, V_\sigma \times V_\sigma \,\to\, \mathbb{C}. 
\end{equation}
With this we can now define the distributions that correspond to the map $ \omega:\, \mathcal{I}_\rho \times G \times G \,\to\, V_\sigma $ that was discussed in \cref{sec:nonlinear-maps}.

\begin{definition}
    Let $ \Omega \subset \operatorname{Map}(\mathcal{I}_\rho \times G^{\prime}, \mathcal{D}^{\prime}(C _{c,\sigma}^{\infty})) $ be a set of maps $ \varphi $ which maps $ (f,g^{\prime}) \in \mathcal{I}_\rho \times G^{\prime} $ into the distributions $ \mathcal{D}^{\prime}(C _{c,\sigma}^{\infty}(G,V_\sigma)) $.
    Specifically, for each $ (f,g^{\prime}) \in \mathcal{I}_\rho \times G^{\prime} $ the distribution $ \varphi[f,g^{\prime}] $ acts as
    \begin{equation}
        \varphi[f,g^{\prime}](\phi)=\int_{G} \left\langle\, \varphi[f,g^{\prime}](g) \,|\, \phi(g) \,\right\rangle_\sigma \Delta_G(g)\mathrm{d}g.
    \end{equation}
\end{definition}
    The proper intuition here is that $ \varphi[f,g^{\prime}](g) $ is the analogue to $ \omega(f,g,g^{\prime}) $ where $  \omega:\, \mathcal{I}_\rho \times G \times G^{\prime} \,\to\, V_\sigma  $ was introduced in \cref{def:general-Omega-space}.
    Additionally, $ \omega $ is subject to the constraint
    \begin{equation}
        \omega(f,gh,g^{\prime})=\sigma(h^{-1})\omega(f,g,g^{\prime}), \qquad  \forall g\in G,\ g^{\prime}\in G^{\prime},\ h\in H.
    \end{equation}
    For the case that $ \sigma $ is unitary on $ V_\sigma $ and that $ G $ is a unimodular group then we can obtain a corresponding transformation of $ \varphi $.
    To get this, note that
\begin{equation}
    \begin{aligned}
        \varphi[f,g^{\prime}](\phi)&=\int_G \left\langle\, \varphi[f,g^{\prime}](g) \,|\, \phi(g) \,\right\rangle_\sigma \mathrm{d}g\\
                                   &=\int_G \left\langle\, \varphi[f,g^{\prime}](gh) \,|\, \phi(gh) \,\right\rangle_\sigma \mathrm{d}g\\
                                   &= \int_G \left\langle\, \sigma(h)\varphi[f,g^{\prime}](gh) \,|\, \phi(g) \,\right\rangle_\sigma \mathrm{d}g.
    \end{aligned}
\end{equation}
Where we in the final step used that $ \phi\in C _{c,\sigma}^{\infty} \subset \mathcal{I}_\sigma $, and that for all $ u,v \in V_\sigma $ we have 
\begin{equation}
    \left\langle\, \sigma(h)u \,|\, v \,\right\rangle_\sigma = \left\langle\, u \,|\, \sigma(h^{-1})v \,\right\rangle _\sigma
\end{equation}
as $ \sigma $ was assumed unitary.
Since this now has to hold for all test functions $ \phi $ we obtain that 
\begin{equation}
    \varphi[f,g^{\prime}](g)=\sigma(h)\varphi[f,g^{\prime}](gh) \quad \Leftrightarrow \quad \varphi[f,g^{\prime}](gh)=\sigma(h^{-1})\varphi[f,g^{\prime}](g).
\end{equation}
Hence we get the corresponding constraint in the distributional setting.
\begin{definition}[Regular distribution]
    In this setting a distribution $ \varphi $ is called regular if there exists an ordinary function $ \psi\in \mathcal{I}_\sigma $ such that
    \begin{equation}
        \varphi(\phi)=\int_G \left\langle\, \varphi(g) \,|\, \phi(g) \,\right\rangle _\sigma \Delta_G(g)\mathrm{d}g = \int_G \left\langle\, \psi(g) \,|\, \phi(g) \,\right\rangle _\sigma \Delta_G(g)\mathrm{d}g,\qquad \forall \phi \in C _{c,\sigma}^{\infty}.
    \end{equation}
    If $ \varphi $ is a regular distribution, we will denote its corresponding function $ \widetilde{\varphi} $.
\end{definition}
Now we can state the distributional correspondence to the non-linear map $ \Phi:\, \mathcal{I}_\rho \,\to\, \mathcal{I}_\sigma $ from \cref{sec:nonlinear-maps}.
This map is defined through its action of a corresponding distribution on test functions.
\begin{definition}
    If $ \phi $ is a test function as defined above and $ \varphi \in \Omega  $, then we define the operator $  \Phi_\varphi :\, \mathcal{I}_\rho \,\to\, \mathcal{D}^{\prime}(\mathcal{I}_\sigma) $ through its action on test functions:
    \begin{equation}
        [\Phi_\varphi f](\phi)=\int_{G^{\prime}}\varphi[f,g^{\prime}](\phi)\mathrm{d}g^{\prime}.
    \end{equation}
\end{definition}
\begin{remark}
    Note that for this operator to be well-defined we need that map taking $ g^{\prime}\in G^{\prime} $ to the distribution $ \varphi[f,g^{\prime}] $ is weakly measurable.
\end{remark}

Any function $ \hat{f}:G\to V_\sigma $ generates a regular distribution through 
\begin{equation}
    \hat{f}(\phi)=\int_{G} \left\langle\, \hat{f}(g) \,|\, \phi(g) \,\right\rangle _\sigma \Delta_G(g)\mathrm{d}g.
\end{equation}
If, then, $ \lambda:\, \mathcal{I}_\rho \,\to\, \mathcal{I}_\sigma $ is any map between induced representations, then $ \lambda[f] $ generates a regular distribution through its action defined above.
From this a natural question arises as to whether this $ \Phi f_\varphi $ is universal in the sense that any regular distribution $ \lambda [f] $ can be written as
\begin{equation}
    \lambda[f](\phi) =[\Phi_\varphi f](\phi).
\end{equation}
In an intuitive sense we could then say that $ \Phi_\varphi= \lambda$ as the equality above should hold for all test functions $ \phi $.

Before we get to the statement we need to define the common $ \delta $-distribution.
\begin{definition}
    The distribution $ v\delta $ for $ v \in V_\sigma $ acts on test functions $ \phi $ as
    \begin{equation}
        [v \delta](\phi)=\int_{G} \left\langle\, v \delta(g) \,|\, \phi(g) \,\right\rangle \Delta_G(g)\mathrm{d}g= \left\langle\, v \,|\, \phi(e) \,\right\rangle_\sigma.
    \end{equation}
\end{definition}

    We can also consider left translates of $ v \delta $.
    Specifically, the distribution $ v [L_k \delta]= v\delta_k $ for $ k \in G $ and $ v \in V_\sigma $ acts on test functions $ \phi $ as 
    \begin{equation}
        [v\delta_k](\phi)= \int_G\left\langle\, v \delta_k(g) \,|\, \phi(g) \,\right\rangle_\sigma \Delta_G(g)\mathrm{d}g= \int_G\left\langle\, v \delta(k^{-1}\hat{g}) \,|\, \phi(\hat{g}) \,\right\rangle_\sigma \mathrm{d}\hat{g}= \left\langle\, v \,|\, \phi(k) \,\right\rangle_\sigma.
    \end{equation}
\begin{remark}
    If $ \sigma $ is unitary we get an additional condition.
    As our test functions $ \phi \in C _{c,\sigma}^{\infty}(G,V_\sigma) $ and hence satisfy $ \phi(gh)=\sigma(h^{-1})\phi(g) $ we need that  
    \begin{equation}
        [v \delta_{kh}](\phi)= \left\langle\, v \,|\, \phi(kh) \,\right\rangle_\sigma = \left\langle\, v \,|\, \sigma(h^{-1})\phi(k) \,\right\rangle_\sigma = \left\langle\, \sigma(h)v \,|\, \phi(k) \,\right\rangle _\sigma =[(\sigma(h)v)\delta_k](\phi).
    \end{equation}
    That is, the $ v\delta_k $ must satisfy 
    \begin{equation}
        v \delta_{kh}=(\sigma(h)v) \delta_k.
    \end{equation}
\end{remark}
\begin{remark}
    With this action, and the projection defined in \cref{eq:functional-projection}, a short calculation yields 
    \begin{equation}
        [P (v\delta_k)](\phi)=\Delta_G(k) \left\langle v | \phi(k) \right\rangle_{\sigma}= [v \delta_k](\phi).
    \end{equation}
    That is, the projected $ P\delta $ acts identically to the unprojected $ \delta $.
\end{remark}

\begin{theorem}\label{thm:universality-on-regular-distributions}
    For any map $ \lambda:\, \mathcal{I}_\rho \,\to\, \mathcal{I}_\sigma $ there exists a $ \varphi\in \Omega $ such that $ \lambda $ can be modelled by $ \Phi_\varphi $ in the sense
    \begin{equation}
        \lambda[f](\phi) = [\Phi_\varphi f](\phi),
    \end{equation}
    for all test functions $ \phi $ and functions $ f \in \mathcal{I}_\rho $.
\end{theorem}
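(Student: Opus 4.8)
The plan is to reproduce, in the distributional language, the Dirac-delta construction already used to prove \cref{thm:generality-of-integral-formulation}. Given an arbitrary map $\lambda:\mathcal{I}_\rho\to\mathcal{I}_\sigma$, note first that each $\lambda[f]\in\mathcal{I}_\sigma$ generates a regular distribution, which I will also write $\lambda[f]$, acting on a test function $\phi\in C^\infty_{c,\sigma}(G,V_\sigma)$ by $\lambda[f](\phi)=\int_G\langle\,\lambda[f](g)\,|\,\phi(g)\,\rangle_\sigma\,\Delta_G(g)\,\mathrm{d}g$; this pairing is well defined because $\phi$ is smooth with compact support and $\lambda[f]$ is at worst locally integrable. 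I would then define the candidate $\varphi$ by $\varphi[f,g']:=\delta(g')\,\lambda[f]$, that is, the distribution-valued object on $G'$ given by the Dirac mass at the identity of $G'$ tensored with the regular distribution $\lambda[f]$.

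Next I would check that this $\varphi$ is a legitimate element of $\Omega$. The weak measurability required for $\Phi_\varphi$ to be well defined is immediate, since $g'\mapsto\varphi[f,g'](\phi)=\delta(g')\,\lambda[f](\phi)$ is a Dirac mass times a quantity constant in $g'$. The Mackey-type constraint $\varphi[f,g'](gh)=\sigma(h^{-1})\varphi[f,g'](g)$ follows directly from $\lambda[f]\in\mathcal{I}_\sigma$, because then $\lambda[f](gh)=\sigma(h^{-1})\lambda[f](g)$ while the factor $\delta(g')$ is untouched; one may also observe, as in the remarks surrounding \cref{eq:functional-projection}, that applying the symmetrisation projection $P$ to this $\varphi$ leaves it unchanged, so no inconsistency arises.

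Finally I would compute, for every test function $\phi$ and every $f\in\mathcal{I}_\rho$,
\begin{equation}
[\Phi_\varphi f](\phi)=\int_{G'}\varphi[f,g'](\phi)\,\mathrm{d}g'=\int_{G'}\delta(g')\,\lambda[f](\phi)\,\mathrm{d}g'=\lambda[f](\phi),
\end{equation}
using that $\lambda[f](\phi)$ does not depend on $g'$ and that $\delta$ integrates to one over $G'$. Since this identity holds for all $\phi$, the operator $\Phi_\varphi$ reproduces the regular distribution attached to $\lambda[f]$, which is exactly the assertion of the theorem.

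The hard part will not be the idea but the rigour: one must make precise in what sense $\varphi[f,g']=\delta(g')\lambda[f]$ is a distribution-valued measure on $G'$, so that the defining integral $\int_{G'}\varphi[f,g'](\phi)\,\mathrm{d}g'$ is meaningful and evaluates as claimed, and one must confirm that this $\varphi$ meets whatever measurability and integrability hypotheses are built into the definition of $\Omega$ and the well-definedness remark for $\Phi_\varphi$. If one prefers to avoid the Dirac formalism altogether, an alternative is to replace $\delta$ by an approximate identity $\{\delta_\epsilon\}$ concentrated near $e\in G'$ and pass to the weak-$*$ limit, at the cost of heavier bookkeeping; I would keep that route in reserve only if the direct argument is judged insufficiently rigorous.
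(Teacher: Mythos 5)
Your proposal is correct and follows essentially the same route as the paper: both choose $\varphi[f,g'](g)=\delta(g')\,\lambda[f](g)$ and evaluate $[\Phi_\varphi f](\phi)=\int_{G'}\delta(g')\,\lambda[f](\phi)\,\mathrm{d}g'=\lambda[f](\phi)$, differing only in that the paper swaps the $G$ and $G'$ integrals explicitly while you factor out the constant $\lambda[f](\phi)$ directly. Your additional checks of the Mackey constraint and weak measurability are sensible supplements that the paper leaves implicit.
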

\begin{proof}
    We start by expanding $ [\Phi_\varphi f](\phi) $ as
    \begin{equation}
        \begin{aligned}
            [\Phi_\varphi f](\phi)&=\int_{G^{\prime}}\varphi[f,g^{\prime}](\phi)\mathrm{d}g^{\prime}\\
                                  &=\int_{G^{\prime}}\int_G \left\langle\, \varphi[f,g^{\prime}](g) \,|\, \phi(g) \,\right\rangle _\sigma \Delta_G(g)\mathrm{d}g\, \mathrm{d}g^{\prime}.
        \end{aligned}
    \end{equation}
    Now, let $ \varphi[f,g^{\prime}](g)=\delta(g^{\prime})\lambda[f](g) $.
    This yields, with a swap of our integrals,
    \begin{equation}
        \begin{aligned}
            [\Phi_\varphi f](\phi)&=\int_{G^{\prime}}\int_G \left\langle\, \varphi[f,g^{\prime}](g) \,|\, \phi(g) \,\right\rangle _\sigma \Delta_G(g)\mathrm{d}g\, \mathrm{d}g^{\prime}\\
                                  &=\int_{G^{\prime}}\int_G \left\langle\, \delta(g^{\prime})\lambda[f](g) \,|\, \phi(g) \,\right\rangle _\sigma \Delta_G(g)\mathrm{d}g\, \mathrm{d}g^{\prime}\\
                                  &=\int_G \int_{G^{\prime}} \left\langle\, \delta(g^{\prime})\lambda[f](g) \,|\, \phi(g) \,\right\rangle _\sigma \mathrm{d}g^{\prime}\, \Delta_G(g)\mathrm{d}g \\
                                  &=\int_G \left\langle\, \lambda[f](g) \,|\, \phi(g) \,\right\rangle _\sigma \Delta_G(g)\mathrm{d}g\\
                                  &= \lambda[f](\phi).
        \end{aligned}
    \end{equation}
    Hence, through the choice $ \varphi[f,g^{\prime}](g)=\delta(g^{\prime})\lambda[f](g) $ we have 
    \begin{equation}
         \lambda[f](\phi) = [\Phi_\varphi f](\phi), \qquad  \forall \phi C _{c,\sigma}^{\infty}(G,V_\sigma).
    \end{equation}
    as we wanted.
\end{proof}

Now we turn to imposing equivariance on these maps.
\begin{theorem}
    If $ \varphi $ is equivariant, in the sense of $ \varphi[kf,g^{\prime}](\phi)=\varphi[f,g^{\prime}](k^{-1} \phi) $ for all $ k \in G $, then $ \Phi_\varphi $ is equivariant.
    That is
\begin{equation}
    [\Phi [kf]](\phi)=[\Phi f](k^{-1}\phi)=[k[\Phi f]](\phi), \quad \forall \phi\in C _{c,\sigma}^{\infty}.
\end{equation}
    In the case that $ \varphi[f,g^{\prime}] $ is a regular distribution then $  \varphi[kf,g^{\prime}](\phi)=\varphi[f,g^{\prime}](k^{-1} \phi)  $ implies that
    \begin{equation}
        \widetilde\varphi[kf,g^{\prime}](g)= \widetilde\varphi[f,g^{\prime}](k^{-1}g),
    \end{equation}
    which is the condition presented in \cref{sec:nonlin-equivariant-maps}.
\end{theorem}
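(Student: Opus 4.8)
The plan is to treat the two assertions in the theorem separately, the first being essentially formal and the second requiring slightly more care. For the first, I would start from the definition $[\Phi_\varphi f](\phi)=\int_{G'}\varphi[f,g'](\phi)\,\mathrm{d}g'$, evaluate $\Phi_\varphi$ at $kf$ against a fixed test function $\phi\in C^\infty_{c,\sigma}(G,V_\sigma)$, and apply the hypothesised equivariance $\varphi[kf,g'](\phi)=\varphi[f,g'](k^{-1}\phi)$ pointwise under the $g'$-integral — the step that uses the weak measurability of $g'\mapsto\varphi[f,g']$ already flagged in the remark after the definition of $\Phi_\varphi$, and the fact that $\phi$ does not depend on $g'$ so it passes through the integral. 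This lands on $\int_{G'}\varphi[f,g'](k^{-1}\phi)\,\mathrm{d}g'=[\Phi_\varphi f](k^{-1}\phi)$. Recognising $[\Phi_\varphi f](k^{-1}\phi)=[k[\Phi_\varphi f]](\phi)$ by the definition of the $G$-action on $\mathcal{D}'(\mathcal{I}_\sigma)$ as dual to the test-function action $[k\phi](g)=\phi(k^{-1}g)$ then closes the first claim.

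For the regular case, I would assume $\varphi[f,g']$ has density $\widetilde\varphi[f,g']\in\mathcal{I}_\sigma$ and expand both sides of $\varphi[kf,g'](\phi)=\varphi[f,g'](k^{-1}\phi)$ as integral pairings against $\phi$. The left-hand side is $\int_G\langle\,\widetilde\varphi[kf,g'](g)\mid\phi(g)\,\rangle_\sigma\,\Delta_G(g)\,\mathrm{d}g$; on the right-hand side I would use $[k^{-1}\phi](g)=\phi(kg)$ and substitute $g\mapsto k^{-1}g$, invoking left-invariance of the Haar measure (and, in the unimodular setting the paper works in, triviality of the $\Delta_G$-factor) to rewrite it as $\int_G\langle\,\widetilde\varphi[f,g'](k^{-1}g)\mid\phi(g)\,\rangle_\sigma\,\Delta_G(g)\,\mathrm{d}g$. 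Equating the two and noting that both $g\mapsto\widetilde\varphi[kf,g'](g)$ and $g\mapsto\widetilde\varphi[f,g'](k^{-1}g)$ satisfy the Mackey condition $\psi(gh)=\sigma(h^{-1})\psi(g)$ — which is preserved under left translation of the argument — I would conclude $\widetilde\varphi[kf,g'](g)=\widetilde\varphi[f,g'](k^{-1}g)$ from the vanishing of the pairing against all test functions, and observe that under the dictionary $\widetilde\varphi[f,g'](g)\leftrightarrow\omega(f,g,g')$ this is exactly $\omega(kf,g,g')=\omega(f,k^{-1}g,g')$, i.e.\ $\omega(f,g,g')=\omega(kf,kg,g')$, the condition of \cref{prop:equivariance-constraint-for-general-kernel}.

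The hard part will be the final step of the regular case: deducing pointwise equality of the two densities from equality of their pairings against every $\phi$. Because the test space $C^\infty_{c,\sigma}(G,V_\sigma)$ is cut out by the Mackey condition rather than being all of $C^\infty_c(G,V_\sigma)$, I cannot simply quote the standard fundamental lemma of the calculus of variations; I would instead exploit that the symmetrised point distributions $v\delta_k$ are $P$-invariant (the remark $P(v\delta_k)=v\delta_k$) and separate the points of $\mathcal{I}_\sigma$, which makes the pairing $\mathcal{I}_\sigma\times C^\infty_{c,\sigma}\to\mathbb{C}$ non-degenerate in its first argument, and I expect most of the writing effort to go into making this precise in the smooth category. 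A minor secondary obstacle is the bookkeeping of $\Delta_G$ under the change of variables if one drops unimodularity: then a compensating factor $\Delta_G(k)^{\pm1}$ appears and the stated identity would need to be corrected accordingly, so I would either keep the paper's standing unimodularity hypothesis or carry the modular function through explicitly throughout.
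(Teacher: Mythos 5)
Your proposal follows essentially the same route as the paper's proof: the first claim is the identical three-line formal computation (expand the definition, apply the hypothesis under the $g'$-integral, recognise the dual action), and the regular case is handled by the same pairing-plus-change-of-variables argument ending in ``equality against all $\phi$ implies pointwise equality of densities.'' The only difference is that you are more careful than the paper on two points it silently glosses over --- the non-degeneracy of the pairing over the Mackey-constrained test space $C^\infty_{c,\sigma}(G,V_\sigma)$ and the modular factor $\Delta_G(k^{-1})$ produced by the substitution $g\mapsto k^{-1}g$ --- both of which are real (if minor) gaps in the paper's write-up that your plan correctly flags and resolves under the standing unimodularity assumption.
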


\begin{proof}
The proof follows from a straight forward calculation:
\begin{equation}
    \begin{aligned}
        [\Phi_\varphi [kf]](\phi)&= \int_{G^{\prime}} \varphi[kf,g^{\prime}](\phi)\mathrm{d}g^{\prime}\\
                                 &= \int_{G^{\prime}} \varphi[f,g^{\prime}](k^{-1}\phi)\mathrm{d}g^{\prime}\\
                                 &= [\Phi_\varphi f](k^{-1}\phi)\\
                                 &= [k[\Phi_\varphi f]](\phi).
    \end{aligned}
\end{equation}
For the case when $ \varphi[f,g^{\prime}] $ is a regular distribution then we get
\begin{equation}
    \begin{aligned}
        \varphi[kf,g^{\prime}](\phi)&=\varphi[f,g^{\prime}](k^{-1}\phi)&& \Leftrightarrow \\
        \int_G \left\langle\, \widetilde\varphi[kf,g^{\prime}](g) \,|\, \phi(g) \,\right\rangle _\sigma \Delta_G(g)\mathrm{d}g &= \int_G \left\langle\, \widetilde{\varphi}[f,g^{\prime}](g)  \,|\, \phi(kg) \,\right\rangle _\sigma \Delta_G(g)\mathrm{d}g && \Leftrightarrow \\
        \int_G \left\langle\, \widetilde{\varphi} [kf,g^{\prime}](g) \,|\, \phi(g) \,\right\rangle_\sigma \Delta_G(g)\mathrm{d}g &= \int_G \left\langle\, \widetilde{\varphi} [f,g^{\prime}](k^{-1}g) \,|\, \phi(g) \,\right\rangle \Delta_G(g)\mathrm{d}g.
    \end{aligned}
\end{equation}
Since this needs to hold for all $ \phi $ we extract the pointwise equality that 
\begin{equation}
    \widetilde{\varphi}[kf,g^{\prime}](g)= \widetilde{\varphi}[f,g^{\prime}](k^{-1}g), 
\end{equation}
which is what we wanted and completes the proof.
\end{proof}

A natural question to ask is: if $ \varphi[f,g^{\prime}] $ is a regular distribution for each $ g^{\prime} $, i.e.\ can be viewed as a function, does the same hold for $ \Phi_\varphi[f] $?
As the next theorem shows, this is indeed the case.

\begin{theorem}
    If $ \varphi[f,g^{\prime}] $ is a regular distribution for each $ g^{\prime} $ such that the map $ g^{\prime} \mapsto \widetilde{\varphi}[f,g^{\prime}](g) $ is Bochner-integrable for each $ g $, then $ \Phi_\varphi $ is a regular distribution and its corresponding function is given by
    \begin{equation}
        [ \widetilde \Phi_{ \widetilde{\varphi} } f](g)=\int_{G^{\prime}} \widetilde\varphi[f,g^{\prime}](g)\mathrm{d}g^{\prime}.
    \end{equation}
\end{theorem}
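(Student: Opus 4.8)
The plan is to unwind the two nested integrals in the definition of $\Phi_\varphi$, interchange their order, and then read off the representing function. \textbf{First} I would start from the defining action on a test function $\phi\in C^\infty_{c,\sigma}(G,V_\sigma)$,
\begin{equation}
[\Phi_\varphi f](\phi)=\int_{G^{\prime}}\varphi[f,g^{\prime}](\phi)\,\mathrm{d}g^{\prime},
\end{equation}
and use the hypothesis that each $\varphi[f,g^{\prime}]$ is regular: by definition this gives a representing function $\widetilde\varphi[f,g^{\prime}]\in\mathcal{I}_\sigma$ with $\varphi[f,g^{\prime}](\phi)=\int_G\langle\,\widetilde\varphi[f,g^{\prime}](g)\,|\,\phi(g)\,\rangle_\sigma\,\Delta_G(g)\,\mathrm{d}g$. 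Substituting yields an iterated integral over $G^{\prime}\times G$ of the scalar function $(g,g^{\prime})\mapsto\langle\,\widetilde\varphi[f,g^{\prime}](g)\,|\,\phi(g)\,\rangle_\sigma\,\Delta_G(g)$.

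\textbf{Next} I would justify swapping the two integrals. Since $\phi$ has compact support $K=\operatorname{supp}(\phi)$, the $g$-integration effectively runs over the compact set $K$, on which $\|\phi(g)\|_\sigma\,\Delta_G(g)$ is bounded; together with the weak measurability of $g^{\prime}\mapsto\varphi[f,g^{\prime}]$ (assumed so that $\Phi_\varphi$ is well defined) and the Bochner-integrability of $g^{\prime}\mapsto\widetilde\varphi[f,g^{\prime}](g)$ hypothesised for each $g$, the scalar integrand is integrable on $K\times G^{\prime}$, so a Fubini theorem for Bochner integrals (cf.\ \citet{Diestel1977-kb}, applicable here since $V_\sigma$ is separable and $G,G^{\prime}$ are locally compact) applies. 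Pulling the $G^{\prime}$-integral inside the inner product — legitimate because $\langle\,\cdot\,|\,\phi(g)\,\rangle_\sigma$ is a bounded (conjugate-)linear functional and hence commutes with the Bochner integral — gives
\begin{equation}
[\Phi_\varphi f](\phi)=\int_G\Big\langle\,\int_{G^{\prime}}\widetilde\varphi[f,g^{\prime}](g)\,\mathrm{d}g^{\prime}\;\Big|\;\phi(g)\,\Big\rangle_\sigma\,\Delta_G(g)\,\mathrm{d}g.
\end{equation}

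\textbf{Then} I would set $[\widetilde\Phi_{\widetilde\varphi}f](g):=\int_{G^{\prime}}\widetilde\varphi[f,g^{\prime}](g)\,\mathrm{d}g^{\prime}$, which is a well-defined element of $V_\sigma$ for each $g$ by the Bochner-integrability assumption, and verify that it lies in $\mathcal{I}_\sigma$: each $\widetilde\varphi[f,g^{\prime}]\in\mathcal{I}_\sigma$ satisfies the Mackey condition $\widetilde\varphi[f,g^{\prime}](gh)=\sigma(h^{-1})\widetilde\varphi[f,g^{\prime}](g)$, and the bounded operator $\sigma(h^{-1})$ on the finite-dimensional $V_\sigma$ passes through the integral, so $[\widetilde\Phi_{\widetilde\varphi}f](gh)=\sigma(h^{-1})[\widetilde\Phi_{\widetilde\varphi}f](g)$. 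The displayed identity then says exactly that $\Phi_\varphi f$ is the regular distribution represented by $\widetilde\Phi_{\widetilde\varphi}f$, which is the claim.

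\textbf{The main obstacle} I expect is the rigorous justification of the Fubini interchange in the vector-valued setting: one has to promote the separate (weak) measurability statements to genuine joint measurability of the integrand on $K\times G^{\prime}$ and confirm that the stated per-$g$ Bochner-integrability, combined with the compactness of $\operatorname{supp}(\phi)$, really does yield integrability of the scalar pairing on the product space. If this turns out to be delicate, the cleanest fix is to invoke the standard Bochner–Fubini theorem under the $\sigma$-finiteness and separability assumptions already available, or to slightly strengthen the hypothesis to a dominated-integrability condition; everything else is bookkeeping with the definitions of regular distribution and of $\Omega$.
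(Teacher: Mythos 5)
Your proposal follows essentially the same route as the paper's proof: expand the nested integrals, interchange the order of integration, pull the $G^{\prime}$-integral inside the inner product, and read off the representing function $[\widetilde\Phi_{\widetilde\varphi}f](g)=\int_{G^{\prime}}\widetilde\varphi[f,g^{\prime}](g)\,\mathrm{d}g^{\prime}$. You are in fact slightly more careful than the paper on two points it glosses over — the joint-measurability/Fubini justification (the paper simply annotates the step ``change integration order'') and the verification that the resulting function satisfies the Mackey condition and hence lies in $\mathcal{I}_\sigma$ as the definition of a regular distribution requires — both of which are welcome additions rather than deviations.
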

\begin{remark}
    Assuming that the map $g^{\prime} \,\mapsto\, \varphi[f,g^{\prime}](g) $ is Bochner integrable for each $ g $ implies that we assume that the map is strongly measurable and that 
    \begin{equation}
        \int_{G^{\prime}} \left\| \varphi[f,g^{\prime}](g) \right\|_\sigma \mathrm{d}g^{\prime} < \infty. 
    \end{equation}
    This ensures that $ \int_{G^{\prime}} \varphi[f,g^{\prime}](g)\mathrm{d}g^{\prime} \in V_\sigma $ for each $ g \in G $.
\end{remark}
\begin{proof}
    Assuming that $ \varphi[f,g^{\prime}] $ is a regular distribution we know that
    \begin{equation}
        \varphi[f,g^{\prime}](\phi)=\int_G \left\langle\, \widetilde{\varphi}[f,g^{\prime}](g)  \,|\, \phi(g) \,\right\rangle_\sigma \Delta_G(g)\mathrm{d}g.
    \end{equation}
    This means that we get 
    \begin{equation}
        \begin{aligned}
            [\Phi _\varphi f](\phi)&=\int_{G^{\prime}} \varphi[f,g^{\prime}](\phi)\mathrm{d}g^{\prime}&&& (\text{use that }\varphi\text{ is regular})\\
                                   &=\int_{G^{\prime}}\int_G \left\langle\, \widetilde{\varphi} [f,g^{\prime}](g) \,|\, \phi(g) \,\right\rangle _\sigma \Delta_G(g)\mathrm{d}g \mathrm{d}g^{\prime} &&& (\text{change integration order})\\
                                   &=\int_G \int_{G^{\prime}} \left\langle\, \widetilde{\varphi}[f,g^{\prime}](g)  \,|\, \phi(g) \,\right\rangle _\sigma \mathrm{d}g^{\prime} \Delta_G(g)\mathrm{d}g &&& (\phi(g)\text{ is constant w.r.t. } g^{\prime})\\
                                   &=\int_G \left\langle\, \left.\int_{G^{\prime}} \widetilde{\varphi}[f,g^{\prime}](g) \mathrm{d}g^{\prime} \,\right|\, \phi(g) \,\right\rangle _\sigma \Delta_G(g)\mathrm{d}g.
        \end{aligned}
    \end{equation}
    Since $ g^{\prime} \mapsto \widetilde{\varphi} [f,g^{\prime}](g) $ is Bochner-integrable for each $ g $ we can define 
    \begin{equation}
        [\widetilde{\Phi}_{\widetilde{\varphi}}f](g):=\int_{G^{\prime}} \widetilde\varphi[f,g^{\prime}](g)\mathrm{d}g^{\prime},
    \end{equation}
    which yields
    \begin{equation}
        [\Phi_\varphi f](\phi)=\int_{G} \left\langle\, \left[ \widetilde{\Phi}_{\widetilde{\varphi}}f \right] (g)  \,|\, \phi(g) \,\right\rangle_\sigma \Delta_G(g)\mathrm{d}g.
    \end{equation}
    This shows that $ \Phi_\varphi f $ is a regular distribution with the corresponding function 
    \begin{equation}
        [\widetilde{\Phi}_{\widetilde{\varphi}}f](g):=\int_{G^{\prime}} \widetilde\varphi[f,g^{\prime}](g)\mathrm{d}g^{\prime},
    \end{equation}
    as wanted.
\end{proof}
This shows that if $ \varphi[f,g^{\prime}]:\, G \,\to\, V_\sigma $ is a function then so is $ \Phi_\varphi[f]:\, G \,\to\, V_\sigma $ and hence we can view $ \Phi_\phi $ as a map $ \Phi_\phi:\, \mathcal{I}_\rho \,\to\, \mathcal{I}_\sigma $.
In that case we can freely define $ \omega(f,g,g^{\prime}):=\varphi[f,g^{\prime}](g) $ which yields the setting in \cref{sec:nonlinear-maps}.
In that sense the distributional approach covers the theory presented in \cref{sec:nonlinear-maps}.

As mentioned in \cref{sec:conclusions} a natural extension to the framework is to consider feature maps which depend not just on $G$ but also on higher-order domains, such as $G \times G$.
A feature map $f(g_1,g_2)$ on $G\times G$ can be interpreted as encoding edge features on the edge connecting $g_1$ and $g_2$.
Additionally, this is a natural extension to the distributional formulation as, to fully utilise the kernel structure often present in the theory, one needs to have test functions defined on $G \times G$.
In this appendix, we can view our test functions $\phi$ as functions on $G\times G$, just that they are constant with respect to the second argument.
Hence allowing for non-trivial dependence is a strict generalisation also motivated from the machine learning side.

\end{document}